\def\eqref#1{equation~\ref{#1}}
\def\1{\bm{1}}
\DeclareMathAlphabet{\mathsfit}{\encodingdefault}{\sfdefault}{m}{sl}
\SetMathAlphabet{\mathsfit}{bold}{\encodingdefault}{\sfdefault}{bx}{n}
\newcommand{\Var}{\mathrm{Var}}
\DeclareMathOperator*{\argmax}{arg\,max}
\newtheorem{thm}{Theorem}[section]
\newtheorem{dfn}{Definition}
\newtheorem{lem}[thm]{Lemma}
\newtheorem{prop}[thm]{Proposition}
\newtheorem{remark}[thm]{Remark}
\DeclareMathOperator{\Capa}{\operatorname{Cap}}
\DeclareMathOperator{\Prob}{\mathbb{P}}
\DeclareMathOperator{\Exp}{\mathbb{E}}
\DeclareMathOperator{\dist}{\operatorname{dist}}
\DeclareMathOperator{\Vol}{\operatorname{Vol}}
\DeclareMathOperator{\Area}{\operatorname{Area}}
\DeclareMathOperator{\vari}{\operatorname{Var}}
\title{Heating up decision boundaries: \\ isocapacitory saturation, adversarial \\ scenarios and generalization bounds }
\author{Bogdan Georgiev \\
Fraunhofer IAIS, ML2R \\
\texttt{bogdan.m.georgiev@gmail.com} \\
\And
Lukas Franken \\
Fraunhofer IAIS, ML2R, University of Cologne \\

\texttt{lukas.b.franken@gmail.com} \\
\AND
Mayukh Mukherjee \\
IIT Bombay \\
\texttt{mathmukherjee@gmail.com}
}
\begin{document}

\maketitle

\begin{abstract}
    In the present work we study classifiers' decision boundaries via Brownian motion processes in ambient data space and associated probabilistic techniques. 
    Intuitively, our ideas correspond to placing a heat source at the decision boundary and observing how effectively the sample points warm up. We are largely motivated 
    by the search for a soft measure that sheds further light on the decision boundary's geometry. 
    En route, we  bridge aspects of potential theory and 
    geometric analysis (\cite{Mazya2011, GrigorYan2002}) with active fields of ML research such as adversarial examples and generalization bounds.
    First, we 
    focus on the geometric behavior of decision boundaries in the light of adversarial attack/defense mechanisms. Experimentally, we observe a certain capacitory 
    trend over different adversarial defense strategies: decision boundaries locally become flatter as measured by isoperimetric inequalities (\cite{Ford2019}); however, our more sensitive heat-diffusion metrics extend this analysis and further reveal that 
    some non-trivial geometry invisible to plain distance-based methods is still preserved. Intuitively, we provide evidence that the decision boundaries nevertheless retain many persistent "wiggly and fuzzy" regions on a finer scale.
    
    Second, we show how Brownian hitting probabilities translate to soft generalization bounds which 
    are in turn connected to compression and noise stability (\cite{Arora2018}), and these bounds are significantly 
    stronger if the decision boundary has controlled geometric features.
\end{abstract}

\section{Introduction and background}
\label{sec:intro}

The endeavor to understand certain geometric aspects of decision problems has lead to intense research in statistical learning. These range from the study of data manifolds, through landscapes of loss functions to the delicate analysis of a classifier's decision boundary. In the present work we focus on the latter. So far, a wealth of studies has analyzed the geometry of decision boundaries of deep neural networks (DNN), reaching profound implications in the fields of adversarial machine learning (adversarial examples), robustness, margin analysis and generalization. Inspired by recent isoperimetric results and curvature estimates (\cite{Ford2019, Moosavi-Dezfooli2019, Fawzi2016NIPS}), we attempt to provide some new aspects of decision boundary analysis by introducing and studying a corresponding diffusion-inspired approach.

In this note the guiding idea is to place a heat source at the classifier's decision boundary and estimate its size/shape in terms of the amount of heat the boundary is able to emit within a given time (Fig. \ref{fig:heat}). The goal is to extract geometric information from the behavior of heat transmission. This technique of \textbf{heat content} seems well-known within capacity/potential theory and has led to a variety of results in spectral analysis relating heat diffusion and geometry, \cite{Jorgenson2001, GrigorYan2002, Mazya2011}. However, working with such heat diffusion directly in terms of the corresponding differential equations is impractical. To this end, we note that, due to Feynman-Kac duality, the heat estimates are convertible to Brownian motion hitting probabilities. Thus we circumvent the need for solving intractable differential equations and instead are able to employ a straightforward Monte-Carlo sampling scheme in the ambient data space (Section \ref{sec:motivation}).
\paragraph{Background on defense training} We apply the above analysis in the context of adversarial machine learning (Section \ref{sec:adv-attacks}) where one studies the interaction between an adversary and a ML system. One of the goals of the subject is to design attack/defense training strategies improving the robustness of a given ML model - in the present work we are interested in how adversarial/noise defense training are reflected geometrically. Many different metrics to estimate robustness have been proposed: on one hand, there is \textit{adversarial robustness} (the probability that error samples lie very near a given data point $x$); on the other hand, there is \textit{corruption robustness} (the probability of getting an error sample after perturbing a given data point $x$ with some specified noise). In our context, heat diffusion naturally suggests a \textbf{capacitory robustness} metric: this metric is built upon the probability that Brownian motion started at a given data point $x$ will hit error samples within a given time window. One can perceive this metric as a \textit{combination of adversarial and noise robustness} (Brownian motion has continuous paths and specified stopping time determined by boundary impact). In this perspective, our work is aligned with studies of other robustness metrics and curvature results (cf. \cite{Fawzi2016NIPS} for a "semi-random" projection robustness and relations to curvature). We study the capacitory metric on the well-known CIFAR10 and MNIST datasets and observe that defense training techniques may either yield a certain (although not substantial) decrease (noise training) or fail to have a significant effect on continuous Brownian attacks overall. Surprisingly, in both cases the studied capacitory metric does not converge to the corresponding value as in the case of a flat decision boundary. Due to our comparison statements and curvature considerations, this means that locally around clean data points the geometry is in general flattened out but may still retain complexity and substantial areas of (small) non-vanishing curvature. In other words, from the point of view of our heat diffusion metrics, decision boundaries locally exhibit non-flat behaviour.

\begin{figure}
    \centering
    \includegraphics[width=1.0\linewidth]{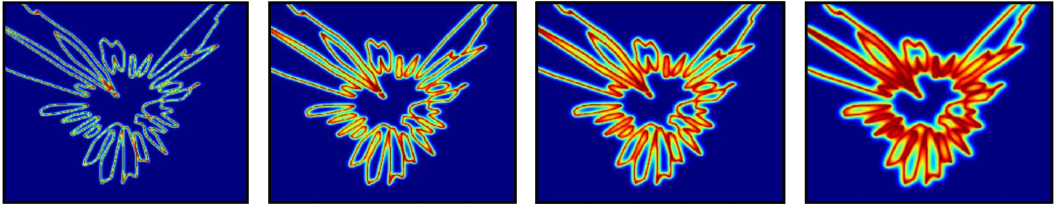}
    \caption{Heating up a planar decision boundary of a 5-layer MLP over time. The amounts of radiated heat reflect the geometry of the decision boundary: size, density, curvature. }
    \label{fig:heat}
\end{figure}

\paragraph{Background on generalization estimates} Finally, we observe that the collected heat/hitting-probability metrics can further be used to obtain generalization bounds where, in a nutshell, one evaluates the performance of a model on unseen data in terms of the performance over a given sampled data, the model's expressiveness, dimension, etc. In this regard, we view decision boundary heat diffusion traits as an indicator of how noise-stable a given model is - this relates Brownian hitting bounds with recent compression-based generalization techniques in the spirit of \cite{Arora2018, Suzuki2018SpectralPruningCD, Suzuki2020Compression}. More precisely, we proceed in two steps: first, we construct a "smaller" compressed model that is almost equivalent to the initial one in an appropriate heat-theoretic way; second, we obtain generalization estimates for the smaller model in terms of the decision boundary hitting probabilities (computed on the empirical dataset). Furthermore, the bounds are significantly improved under additional geometric assumptions on the decision boundary of the initial model. 


\paragraph{Additional related work} The interplay between \textit{heat diffusion and geometry} lies at the heart of many topics in geometric analysis and spectral theory (cf. \cite{Jorgenson2001, Grigoryan2001} for a far reaching overview). Some direct applications of heat diffusion techniques to zero sets of eigenfunctions are seen, for example, in \cite{Steinerberger2014, GM2018APDE, GM2018CRASS}. The literature on \textit{adversarial ML} is vast: to name a few central works in the field, we refer to \cite{Dalvi2004, Biggio2018, Szegedy2014}. Much effort has been invested in designing and understanding strategies that will render a model robust to various attacks (e.g. \cite{Madry2018, Carlini2017}). In particular, the geometry of decision boundaries has been the focus of many works in the subject leading to breakthroughs in curvature estimates, boundary flatness and robustness, schemes for detecting boundary complexity, proposing adversarial attacks/defenses and diffusion based techniques towards constructing decision boundary from partially pre-labelled data (e.g. \cite{Ford2019, Fawzi2016NIPS, Fawzi2017, Fawzi2018ML, Dezfooli2018, Moosavi-Dezfooli2019, Karimi2019CharacterizingTD, Karimi2020, He2018, Szlam2008}). 
\textit{The theory of generalization bounds} has formed a classical main line of ML and statistical inference research (\cite{Vapnik1999}). In this direction central questions address the generalization properties of heavily over-parametrized deep neural network models. According to some classical VC-dimension results such models should overfit the data and generalize poorly. Extensive research effort has been invested in developing appropriate sharper techniques to explain generalization of DNN models: on one hand there are the methods based on norm estimation whose bounds are not explicitly using the number of the network's parameters (see \cite{Golowich2019, Neyshabur2015, Neyshabur2018, WeiMa2019, Bartlett2017}, etc). On the other hand, recent results based on compression and VC-dimension can lead to sharper bounds (\cite{Arora2018, Suzuki2018SpectralPruningCD, Suzuki2020Compression}).

\section{Contributions, context and paper outline}

An outline of our essential contributions is given as follows:

\begin{enumerate}
    \item We analyze decision boundary geometries in terms of novel heat diffusion and Brownian motion techniques with thorough theoretical estimates on curvature and flattening.
    \item We show, both theoretically and empirically (in terms of adversarial scenarios on state-of-art DNN models), that the proposed heat diffusion metrics detect the curvature of the boundary; they complement, and in some respects are more sensitive in comparison to previous methods of boundary analysis - intuitively, our heat driven metrics are sharper on a finer scale and can detect small-scale "wiggles and pockets". As an application, we are thus able to provide evidence that adversarial defenses lead to overall flatter boundaries but, surprisingly, the heat traits do not converge to the corresponding flat-case, and hence, finer-scale non-linear characteristics (e.g. "wiggles and pockets") are persistent. 
    \item Moreover, the preservation of "wiggles and pockets" 
    means that susceptibility to naive Brownian 
    motion attacks 
    is not significantly decreased via adversarial defense mechanisms. 
    \item  Finally, we 
    introduce a novel notion of compression based on heat diffusion and prove that stability of heat signature translates to compression properties and generalization capabilities.
\end{enumerate}

In terms of context, the present note is well-aligned with works such as \cite{Ford2019, Dezfooli2018, Fawzi2016NIPS, Fawzi2018ML}. Among other aspects, these works provide substantial analysis of the interplay between geometry/curvature and adversarial robustness/defenses - in particular, we use some of the these tools (e.g. isoperimetric saturation) as benchmarks and sanity checks. However, in contrast, in our work we provide a \textit{non-equivalent technique} to address decision boundary geometry for which we provide an extensive theoretical and empirical evaluation with insights on the preservation of finer-scale traits. Intuitively, previous distance-based geometric methods could be considered as a "coarser lens", whereas the present heat-diffusion tools appear to be much more sensitive. As a large-scale example, 
    Brownian particles emanating from a point are able to distinguish between a decision boundary which is a hyperplane at distance $d$ and a decision boundary which is a cylinder of radius $d$ wrapping around the point. Our notion of compression is inspired by \cite{Arora2018}, and establishes a connection between the Johnson-Lindenstrauss dimension reduction algorithm with diffusion techniques. Furthermore, we bridge the proposed heat-theoretic techniques with generalization bounds in the spirit of \cite{Arora2018, Suzuki2020Compression}. In particular, this shows that overall lower heat quantities at sample points imply better generalization traits. A step-wise road map of the present work is given below:
\begin{itemize}
    \item (Subsection \ref{subsec:geom-bm-diff}) We start by discussing what heat diffusion is and how it is to be evaluated - here we discuss that, via Feynman-Kac duality, one can essentially work with Brownian motion hitting probabilities.
    \item (Subsections \ref{subsec:isoperimetric} and \ref{subsec:more-info}) We introduce \textbf{the isocapacitory saturation} $\tau$ - a heat-theoretic metric that will be used to estimate boundary flatness. Moreover, here we emphasize the properties of $\tau$ such as relations to curvature (Proposition \ref{prop:curvature}) and the \textit{novel information} obtained from heat theoretic methods in comparison to previous distance-based ones.
    \item (Subsection \ref{subsec:model-cases}) We compute $\tau$ for certain geometric model cases such as hyperplanes, cones, wedges and "spiky" sets (Lemmas \ref{lem:half-space-hit-prob} and \ref{lem:spiky-cylinder}). This allows us later to evaluate how much a given geometry resembles these model cases.
    \item (Section \ref{sec:adv-attacks}) Next, we are in a position to evaluate and compare $\tau$ for decision boundaries of DNNs. We 
    experimentally illustrate the effect of adversarial defense mechanisms and noise robustness on $\tau$ (PGD/FGSM on MNIST and CIFAR-10).
    \item (Section \ref{sec:gen-bounds}) 
    We prove that heat transmission relates to generalization bounds (Propositions \ref{prop:gen_bound_gen} and \ref{prop:gen-bound-flat-db}) - in particular, lower levels of heat at sample points yield sharper generalization bounds. Finally, we complete the discussion by informally stating our compression scheme.
    \item (Appendix) Our methods leverage several tool sets extensively. For this reason our goal in the main text is to only collect and showcase the techniques and results. However, the thorough in-depth analysis is provided in the Appendix where the reader can find all relevant proofs and further background and references. 
\end{itemize}


\section{Motivation and main ideas} \label{sec:motivation}

\subsection{Geometry seen through Brownian motion and Diffusion}
\label{subsec:geom-bm-diff}

\paragraph{Notation} Let us consider a dataset $\mathcal{X} := \{ (x_i, y_i) \}_{i=1}^m$ consisting of feature points $x_i \in \mathbb{R}^n $ and their corresponding labels $y \in \{ 1, \dots, k\}$. Let us suppose that a $k$-label classifier $f: \mathbb{R}^n \rightarrow \mathbb{R}^k$ labels a point $x \in \mathcal{X}$ as $\argmax_{i \in [1, k]} f(x)[i] $. The \textbf{decision boundary} of $f$ is given by $\mathcal{N} := \{ x \in \mathbb{R}^n | f(x) \text{ has two or more equal coordinates} \} $ (cf. Fig. \ref{fig:intro}). Assuming $f$ is sufficiently regular, one thinks of $\mathcal{N}$ as a collection of hypersurfaces in $\mathbb{R}^n$. Further, for a given target label $y$ we define the \textbf{target (error) set} $E(y)$ as the set of points on which the classifier's decision is different from $y$, i.e. $E(y) := \{ x \in \mathbb{R}^n | \argmax_{i \in [1, k]} f(x)[i] \neq y \}$ (here we remark that if $\argmax$ is set-valued at $x$ with several coordinates obtaining the maximum value, then by convention $x$ is contained in $E(y)$). Clearly, if a given data sample $(x_0, y_0) \in \mathcal{X}$ is correctly classified by $f$, then $x_0$ is outside of the error set $E(y_0)$. Finally, we note that the boundary of $E(y)$ coincides with 
$E(y) \cap \mathcal{N}$ 
and moreover, $\mathcal{N}$ is 
the union of the boundaries of $E(y)$ for all labels $y$.

\begin{figure}
    \centering
    \includegraphics[width=1.0\linewidth]{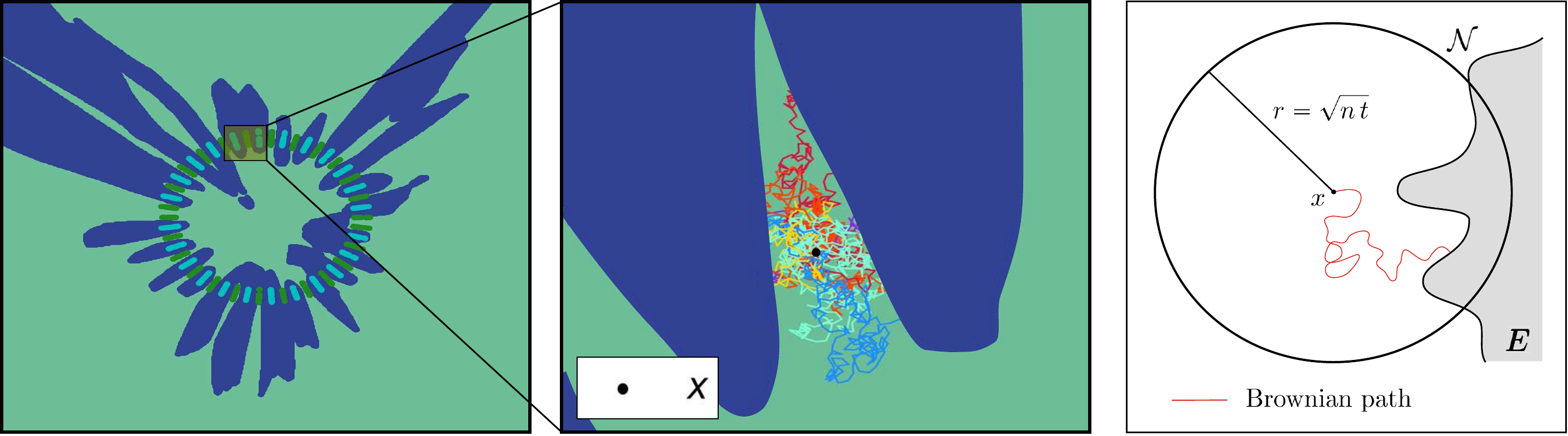}
    \caption{A planar 2-class dataset that alternates along a circle. \textit{(Left)} A depiction of the planar circle-like dataset and the corresponding decision boundary of a 5-layer MLP. \textit{(Center)} Brownian paths starting at a data point $x$ and killed upon impacting the decision boundary/opposite class. \textit{(Right)} Set-up of the local Brownian motion analysis with notation on radius $r$, dimension $n$ and Brownian runtime $t$.}
    \label{fig:intro} 
\end{figure}

\paragraph{Feynman-Kac duality and hitting probabilities} As mentioned in Section \ref{sec:intro} we wish to study a heat diffusion process where we place a heat source at the decision boundary $\mathcal{N}$: formally, this is given by a heat equation with appropriate initial and boundary conditions (Appendix, Subsection \ref{subsec:hd-bm-duality}). Avoiding the impracticality of working with the differential equations directly, we bring forward the theorem of Feynman-Kac that relates the solution of the diffusion process to hitting probabilities of Brownian motion (Appendix, Subsection \ref{subsec:fk-theorem-details}). By way of notation, for an open set $U \subseteq \mathbb{R}^n$, let $\psi_U(x, t)$ denote the probability that a Brownian particle starting at the point $x$ will enter $U$ within time $t$. In other words, 
\begin{equation} \label{eq:hit-prob}
    \psi_U(x, t) := \Prob_{\omega \sim \mathbb{W}} \left[ \, \exists \, t_0 \in [0, t] ~|~ \omega(t_0) \in U \right], \quad x \in 
    \mathcal{X},
\end{equation}
where $\omega$ denotes a Brownian motion defined over the interval $[0, t]$ that follows the standard Euclidean Wiener distribution. The amount of heat that a point $x$ receives from $\mathcal{N}$ within time $t$ is comparable to the \textbf{hitting probability} that a Brownian particle starting at $x$ will impact the boundary within time $t$ 
(cf. Fig. \ref{fig:intro}). 
Provided that $x$ is correctly classified this is equivalent to the probability 
of impacting the decision boundary. In general, we evaluate $\psi_{E(y)}(x, t)$ (which we often denote by $\psi(x, t)$ by minor abuse of notation) through direct sampling; however, in some model cases, e.g. $E(y)$ being a half-space, a spherical shell or a conical set, $\psi(x, t)$ has a concise closed form (Subsection \ref{subsec:model-cases} below) that can be evaluated analytically. This allows us to easily measure deviations and compare the heat imprint of $\mathcal{N}$ to particular model cases.

\paragraph{Local analysis and set-up}
As mentioned above our analysis is local. 
For each clean data point $x$ we consider a ball $B(x, r)$ centered at $x$ with radius $r$ and perform all our computations there. In particular, a free Brownian motion starting at $x$ and defined over a maximal time interval $[0, t]$ will on average travel a distance of $\sqrt{n t}$ (Appendix, Subsection \ref{subsec:bm-bessel}). This suggests to couple 
$r$ and the maximal Brownian running time $t$ via $r = \sqrt{nt}$ (cf. Fig. \ref{fig:intro}), so that, if not stopped by boundary impact, Brownian motion will, on average, reach the sphere $\partial B(x, r)$ by its maximal stopping time.

\subsection{An isoperimetric and isocapacitory perspective}
\label{subsec:isoperimetric}

\paragraph{Isoperimetric results} Isoperimetric estimates will be the starting baseline (\cite{Ford2019}) to detect low levels of curvature and boundary flatness. For some background in isoperimetric results we refer to (Appendix, Subsection \ref{subsec:isper-vs-isocap-app}). Let us start by defining the \textbf{relative error volume}
\begin{equation} \label{eq:relative-vol}
    \mu(x,  r) := \frac{\Vol(E(y) \cap B(x, r))}{\Vol(B(x, r))}.
\end{equation}
We recall the so-called \textit{Gaussian isoperimetric inequality} \cite{Borell1975, Ford2019}:
\begin{equation} \label{eq:gaussian-isoperim}
    \tilde{d} \leq -\frac{r \, \Phi^{-1}(\mu)}{\sqrt{n}}, \quad \mu \leq 1/2,
\end{equation}
where $\Phi^{-1}$ denotes the inverse standard normal c.d.f. and where $\tilde{d} = d(\tilde{x}, \mathcal{N}_f)$ denotes the median distance with $\tilde{x}$ varying normally and concentrated in the ball $B(x, r)$, and $\tilde{d} = 0$ if $\mu \geq 1/2$. Here the isoperimetric result is rigid in the sense that equality in (\ref{eq:gaussian-isoperim}) occurs only if $E(y)$ is a half-space. In \cite{Ford2019} the authors demonstrate that defense training mechanisms lead to decision boundaries that saturate this isoperimetric inequality, i.e. in this isoperimetric sense, the decision boundary $\mathcal{N}$ becomes locally closer to being a flat hyperplane. We define the ratio between the LHS and RHS in eq. (\ref{eq:gaussian-isoperim}) as the \textbf{isoperimetric saturation}.




\paragraph{Isocapacitory results}
In our context of hitting probabilities (eq. (\ref{eq:hit-prob})), results in potential theory allows us to prove \textbf{isocapacitory bounds} which are similar in spirit to isoperimetric bounds. More precisely one has:
\begin{equation} \label{eq:isocapa-ineq}
    \mu(x,  r) \leq c_n \, \psi(x, t)^{\frac{n}{n-2}},
\end{equation}
where $c_n$ is an appropriate constant depending on the dimension $n$, and $r = \sqrt{nt}$. The proof relies on potential theory tools (capacity) and can be found in Appendix, Proposition \ref{prop:isocap_ineq}. Motivated by the above isoperimetric saturation results, one of our main goals is to study how $\mu$ compares to $\psi(x, t)$. To this end we define the \textbf{isocapacitory saturation} $\tau$ as
\begin{equation} \label{eq:tau}
    \tau(x,  r) := \frac{\psi(x, t)^{\frac{n}{n-2}}}{\mu(x,  r)}.
\end{equation}
The basic guiding heuristic is that high values of $\tau$ indicate that $E(y)$ has a very low volume in comparison to its boundary size and respective heat emission. This is the case whenever $E(y)$ is a very thin region with a well-spread boundary of large surface area - e.g. a set that resembles thin spikes entering the ball $B(x, r)$. In contrast, lower values of $\tau$ should indicate a saturation of the isocapacitory inequality (\ref{eq:isocapa-ineq}) and imply that $E(y)$ has a volume that is more comparable to its heat emission - e.g. thicker sets with tamer boundary. To quantify this intuition, we explicitly evaluate $\tau$ for some model scenarios (Subsection \ref{subsec:model-cases}).

\subsection{The novel information given by heat diffusion}
\label{subsec:more-info}

\paragraph{Distances vs. hitting probabilities} As discussed above, several works investigate decision boundaries in terms of distance-based analysis (\cite{Ford2019, Fawzi2016NIPS, Karimi2020, Karimi2019CharacterizingTD}). We remark that our analysis based on hitting probabilities augments and extends the mentioned distance-based approaches. Although related, the two concepts are not equivalent. A guiding example is given by $E(y)$ being a dense collection of "thin needles" (Appendix, Subsections \ref{subsec:isper-vs-isocap-app}, \ref{subsec:mod_case}); in such a scenario the average distance to $\mathcal{N}$ is very small, as well as the chance a Brownian particle will hit $\mathcal{N}$. On the other hand, if $\mathcal{N}$
is a dense collection of hyperplanes, the average distance to $\mathcal{N}$ is again small, but Brownian motions almost surely will hit $\mathcal{N}$. In this sense, evaluating hitting probabilities yields a different perspective than is available from distance-based analysis and sheds further light on the size and shape of the decision boundary, particularly with regards to its capacity and curvature features.

\paragraph{Isoperimetric vs. isocapacitory saturation} Another demonstration of the additional information obtained through $\tau$ is given by almost flat shapes in higher dimensions that saturate isoperimetric bounds (Appendix, Subsection \ref{subsec:isper-vs-isocap-app}). In these scenarios small geometric deformations can have a significant impact on $\tau$, and at the same time almost preserve isoperimetric bounds. In other words $\tau$ provides an additional level of geometric sensitivity. We discuss this further in Section \ref{sec:adv-attacks}.

\paragraph{The effect of curvature}The interplay between curvature of the decision boundary and robustness has been well studied 
recently, e.g. \cite{Fawzi2016NIPS, Moosavi-Dezfooli2019} where various forms of robustness (adversarial, semi-random and their ratio) have been estimated in terms of the decision boundary's curvature. Intuitively, the differential geometric notion of curvature measures how a certain shape is bent. The precise definition of curvature involves taking second-order derivatives which is in most cases impractical. However, in our context we show that the isocapacitory saturation $\tau$ implies certain curvature bounds. 
These statements 
exploit relations between curvature and volume and lead to pointwise 
and integral curvature bounds. 
As an illustration, we have:
\begin{prop} [Informal] \label{prop:curvature} 
    Let $(x, y) \in \mathcal{X}$ be a data sample. Then, provided that the distance $d(x, \mathcal{N})$ is kept fixed, larger values of $\tau$ locally imply larger pointwise/integral curvature values.
\end{prop}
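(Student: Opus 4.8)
The plan is to reduce the statement to a purely geometric volume--curvature comparison by first observing that, once $d := d(x,\mathcal{N})$ is fixed, the hitting probability $\psi(x,t)$ (with the coupling $r=\sqrt{nt}$) is pinned between two explicit quantities depending essentially only on $d$, so that $\tau(x,r)$ behaves, up to controlled factors, like the reciprocal of the relative error volume $\mu(x,r)$. Concretely: since $E(y)$ contains a point at distance $d$ from $x$, a domain-monotonicity comparison of Brownian motion against the half-space at distance $d$ from $x$, together with Lemma~\ref{lem:half-space-hit-prob}, gives a lower bound $\psi(x,t) \ge \psi_{\mathrm{flat}}(d)$; an upper bound of the same order holds in the regime where $E(y)$ is locally one-sided near $B(x,r)$, and otherwise the excess is itself a curvature effect (see the last paragraph). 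Hence ``$\tau$ large'' amounts to ``$\mu(x,r)$ small'', and it remains to show that a decision boundary whose error set reaches distance $d$ from $x$ but fills only a small fraction of $B(x,r)$ must have large curvature.

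For the pointwise bound I would use a Heintze--Karcher / tube-formula comparison. Foliate $E(y)\cap B(x,r)$ by the inward normal exponential map of $\mathcal{N}\cap B(x,r)$; the Jacobian along a normal geodesic is $\prod_{i=1}^{n-1}(1-s\kappa_i)$ with $\kappa_i$ the principal curvatures of $\mathcal{N}$. If $\sup_{\mathcal{N}\cap B}|II| \le \kappa$, integrating this Jacobian outward from the point of $\mathcal{N}$ nearest $x$ yields a lower bound $\Vol(E(y)\cap B(x,r)) \gtrsim V_n(d,r,\kappa)$, where $V_n$ is the volume of the corresponding model region (a truncated-cone / spherical-cap configuration) and is decreasing in $\kappa$. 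Contrapositively, $\mu(x,r)$ small forces $\kappa=\sup|II|$ large, and combining with the first paragraph, $\tau$ large forces $\sup_{\mathcal{N}\cap B}|II|$ large at fixed $d$. The quantitative dependence is read off from the model cases of Subsection~\ref{subsec:model-cases}: the cone/wedge of half-angle $\theta$ and the spiky cylinder of radius $d$ (Lemma~\ref{lem:spiky-cylinder}) realize the extremal profiles, so the actual geometry's curvature is at least that of the tamest model sharing the same pair $(\tau,d)$, and the isocapacitory inequality of Proposition~\ref{prop:isocap_ineq} fixes the admissible range of $\tau$.

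For the integral version I would keep \emph{all} terms of the same tube identity: $\Vol(\{z\in E(y):\dist(z,\mathcal{N})<\rho\}) = \rho\,\Area(\mathcal{N}\cap B) - \tfrac{\rho^2}{2}\int_{\mathcal{N}\cap B} H\,dA + O\!\left(\rho^3 \int_{\mathcal{N}\cap B}|II|^2\,dA\right)$. Since $\mathcal{N}$ separates $x$ from $E(y)$ at distance $d$, the boundary area $\Area(\mathcal{N}\cap B)$ cannot be too small (a short co-area / isoperimetric argument), so for the total error volume to be as small as $\mu(x,r)\Vol(B(x,r))$ the curvature terms must compensate; bounding them below gives a lower bound on $\int_{\mathcal{N}\cap B}|II|\,dA$ (equivalently $\int_{\mathcal{N}\cap B}|H|\,dA$ after Cauchy--Schwarz). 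Threading this through the first-paragraph reduction yields ``$\tau$ large $\Rightarrow$ $\int_{\mathcal{N}\cap B}|II|\,dA$ large at fixed $d$'', which is the integral form.

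The main obstacle is the decoupling in the first paragraph: in full generality $\psi(x,t)$ depends on the entire configuration of $E(y)\cap B(x,r)$, not on $d$ alone, so fixing $d$ does not literally fix $\psi$. The honest route is to prove two-sided bounds $\psi_{\mathrm{flat}}(d) \le \psi(x,t) \le \psi_{\mathrm{flat}}(d)\,(1+\text{curvature correction})$ --- the lower bound being the easy half-space comparison, the upper bound requiring monotonicity of hitting probabilities plus a quantitative bound on how much curvature of $\mathcal{N}$ can inflate $\psi$ --- and then to observe that the ``curvature correction'' is precisely the quantity the conclusion controls, so the argument closes self-consistently: either $\psi$ is inflated only mildly, and then large $\tau$ still forces $\mu$ small hence large curvature, or $\psi$ is strongly inflated, which is itself a curvature statement (the wrap-around/cylinder scenario). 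Making this self-consistency quantitative, and disposing cleanly of the one-sided-versus-wrap-around dichotomy, is where the real work lies; this is also why the proposition is stated informally here, with the precise constants and regime hypotheses deferred to the Appendix.
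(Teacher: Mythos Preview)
Your overall reduction---fix $d$, control $\psi$ in terms of $d$ alone, then convert ``small $\mu$'' into ``large curvature''---is the right skeleton, but the first step is set up incorrectly and the remaining steps are heavier than what the paper actually does.

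On the $\psi$ bound: the inequality you need is the \emph{upper} bound $\psi_E(x,t)\le \psi_{\mathrm{flat}}(d)=\Phi(-d/\sqrt{t})$, not the lower bound you state. The paper obtains this cleanly by imposing the explicit one-sidedness Assumption (that $\mathcal{N}$ and $x$ lie on opposite sides of the tangent hyperplane $H^*$ at the nearest point), which forces $E\cap B(x,r)$ into the spherical cap $C$ and gives $\psi_E\le\psi_C$ by monotonicity. Your claimed lower bound $\psi\ge\psi_{\mathrm{flat}}(d)$ does not follow from ``$E(y)$ contains a point at distance $d$'': domain monotonicity would require $E\supseteq$ half-space, which is false in this regime. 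Fortunately the lower bound is not needed for the stated direction (large $\tau\Rightarrow$ large curvature), and once you adopt the one-sidedness hypothesis your entire last-paragraph ``self-consistency'' detour becomes unnecessary.

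On the volume--curvature step: the paper's route is considerably more elementary than Heintze--Karcher. For the pointwise bound it simply observes (from the osculating-sphere definition of $\kappa$) that $E$ contains a ball of radius $1/\kappa_{\max}$, so $\Vol(E)\ge \omega_n/\kappa_{\max}^n$; combining with $\psi\le\Phi(-d/\sqrt{t})$ gives $\tau\le r^n\kappa_{\max}^n\,\Phi(-d/\sqrt{t})^{n/(n-2)}$ directly. For the integral bound the paper writes $\mathcal{N}$ as a graph over $H^*$ and applies the fundamental theorem of calculus twice to bound $V_n(d,r)-\Vol(E)$ by an $L^1$ integral of second derivatives, hence of sectional curvature---no Weyl tube expansion or co-area argument is invoked. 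Your tube-formula approach would work in principle and is more general, but it buys nothing here and obscures the very short argument the paper gives. The paper also includes a separate rigidity statement (minimum $\tau$ at fixed $d$ forces $\kappa\equiv 0$) proved by a cut-and-paste variational argument, which your proposal does not address.
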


A deeper analysis with formal statements and additional details are provided in Appendix, Subsection \ref{subsec_curv_1}. The advantages that curvature yields for some types of compression schemes and generalization bounds is also intensely investigated in Appendix, Section \ref{sec:gen-bounds-app}.

\subsection{Model decision boundaries: hyperplanes, wedges, cones and ``spiky'' sets} \label{subsec:model-cases}

Given a certain geometric shape, one is often faced with questions as to how flat or spherical the given geometry is. To this end, a central technique in geometric analysis is comparing to certain model cases - e.g. a sphere, plane, saddle, etc. After having introduced $\tau$ and its basic traits we now evaluate it for several model cases (flat hyperplanes, wedges, cones, balls and "spiky" sets). Each of these model cases illustrates a distinguished $\tau$-behaviour: from "tame" behaviour (hyperplanes, balls) to explosion (thin cylinders, "needles and spiky" sets). Hence, having comparisons to these model cases and given an decision boundary, one can, quantify how far away is the given surface from being one of the models. We start by discussing the flat linear case:

\begin{lem} \label{lem:half-space-hit-prob}
    Let $(x, y)$ be a data sample and suppose that $E(y)$ forms a half-space at a distance $d$ from the given data point $x \in \mathbb{R}^n$. Then
    \begin{equation}
        \tau(x,  r) = 2 \, \Phi\left(- \frac{d}{\sqrt{t}}\right) \, \frac{ \Vol\left(B(x, r)\right)} {V_n(d, r)},
    \end{equation}
    where $\Phi(s)$ is the c.d.f. for the standard normal distribution, and $V_n(d, r)$ is the volume of the smaller $n$-dimensional solid spherical cap cut-off at distance $d$ from the center of a ball of radius $r$.
\end{lem}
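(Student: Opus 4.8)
The plan is to compute the two ingredients of $\tau(x,r) = \psi(x,t)^{n/(n-2)}/\mu(x,r)$ separately and then assemble them. Wait — let me re-read the claimed formula: it has $2\Phi(-d/\sqrt t)$ to the first power (not the $n/(n-2)$ power), and in the denominator it has $V_n(d,r)$, the volume of the spherical cap, rather than $\mu(x,r)\Vol(B(x,r))$. So actually I suspect the intended statement is for a slightly different (simpler) quantity, or the exponent is being absorbed; but I will follow the definitions as given and see where the discrepancy is. Hmm, actually if $E(y)$ is a half-space at distance $d$, then $E(y)\cap B(x,r)$ is exactly the spherical cap of the ball $B(x,r)$ cut off at distance $d$ from the center, so $\Vol(E(y)\cap B(x,r)) = V_n(d,r)$ and hence $\mu(x,r) = V_n(d,r)/\Vol(B(x,r))$. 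That explains the denominator. The only remaining puzzle is the exponent on $\psi$; I will assume (as seems to be the convention in this "model cases" subsection) that the relevant quantity being tabulated is $\psi(x,t)/\mu(x,r)$, i.e. the lemma is stated with the un-exponentiated hitting probability, or equivalently this is the $n\to\infty$ limit of the exponent. I will simply write the proof for $\psi(x,t)\cdot\Vol(B(x,r))/V_n(d,r)$ and note the cap identification.

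Concretely, the key steps are as follows. First, compute $\psi_{E(y)}(x,t)$: since $E(y)$ is a half-space whose bounding hyperplane is at distance $d$ from $x$, the event that a standard $n$-dimensional Brownian motion $\omega$ started at $x$ enters $E(y)$ by time $t$ depends only on the one-dimensional projection of $\omega$ onto the unit normal direction of the hyperplane. That projection is itself a standard one-dimensional Brownian motion started at $0$, and entering the half-space means its running maximum exceeds $d$ by time $t$. By the reflection principle, $\Prob[\max_{s\le t} W_s \ge d] = 2\Prob[W_t \ge d] = 2\Phi(-d/\sqrt t)$, since $W_t \sim \mathcal N(0,t)$. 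This gives $\psi(x,t) = 2\Phi(-d/\sqrt t)$. Second, identify the relative error volume: as noted, $E(y)\cap B(x,r)$ is precisely the solid spherical cap at distance $d$, so $\Vol(E(y)\cap B(x,r)) = V_n(d,r)$ and $\mu(x,r) = V_n(d,r)/\Vol(B(x,r))$. Third, substitute both into the definition of $\tau$ (resp. the tabulated ratio) to obtain the stated expression.

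The only real content here is the reflection-principle computation of the hitting probability and the observation that the $n$-dimensional hitting event reduces to a one-dimensional boundary-crossing event via orthogonal projection; both are standard but worth stating carefully (the reduction to one dimension uses that the coordinates of $n$-dimensional Brownian motion are independent, so conditioning on the normal coordinate suffices). I expect the main point requiring care is reconciling the exponent $n/(n-2)$ in the definition of $\tau$ with the first-power $\psi$ appearing in the lemma statement — this should be resolved by noting which precise ratio the "model cases" subsection tracks, and I would flag it explicitly rather than silently. No step is a genuine obstacle; the work is routine once the projection reduction is in place.
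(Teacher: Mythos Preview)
Your proposal is correct and follows essentially the same route as the paper: project the $n$-dimensional Brownian motion onto the normal direction of the hyperplane to reduce to a one-dimensional first-passage problem, apply the reflection principle to get $\psi(x,t)=2\Phi(-d/\sqrt t)$, and identify $\mu(x,r)=V_n(d,r)/\Vol(B(x,r))$ directly from the definition since $E(y)\cap B(x,r)$ is the spherical cap. The exponent discrepancy you flag is real and is not addressed in the paper's proof either; the lemma as stated records the un-exponentiated ratio $\psi/\mu$, so your decision to compute that quantity and note the mismatch explicitly is the right call.
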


The computation uses standard reflection principle techniques. Figure \ref{fig:toy-plots} depicts an experimental  
discussion on Lemma \ref{lem:half-space-hit-prob}. Another illuminating model is given by a "spiky" set - e.g. a thin cylinder, which is in some sense the other extreme. We have

\begin{lem}[Appendix, Subsection \ref{subsec:mod_case}]
\label{lem:spiky-cylinder}
    Suppose that $E(y)$ is a cylinder of height $h$ and radius $\rho$ that enters the ball $B(x, r)$. Then $\tau \nearrow \infty$ as $\rho \searrow 0$.
\end{lem}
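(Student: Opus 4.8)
The plan is to show that as $\rho \searrow 0$ the numerator $\psi(x,t)^{n/(n-2)}$ of $\tau$ stays bounded below by a strictly positive constant (independent of $\rho$, once $\rho$ is small), while the denominator $\mu(x,r) = \Vol(E(y)\cap B(x,r))/\Vol(B(x,r))$ tends to $0$ like $\rho^{n-1}$; the quotient therefore blows up. First I would control the denominator: a cylinder of height $h$ and radius $\rho$ has volume $\omega_{n-1}\rho^{n-1} h$ (with $\omega_{n-1}$ the volume of the unit $(n-1)$-ball), so $\Vol(E(y)\cap B(x,r)) \leq \omega_{n-1}\rho^{n-1} h$, whence $\mu(x,r) = O(\rho^{n-1})$ as $\rho \searrow 0$, and in particular $\mu(x,r) \to 0$.

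The heart of the argument is the lower bound on the hitting probability $\psi(x,t)$, which I claim does \emph{not} degenerate as $\rho \searrow 0$. The point is that the spike \emph{enters} the ball $B(x,r)$, so the axis of the cylinder comes within some fixed distance $d_0 < r$ of $x$ that does not depend on $\rho$. Since $n \geq 3$, a thin tube around a segment has positive (Newtonian) capacity — here I would invoke the capacity/hitting-probability dictionary already used for the isocapacitory inequality (\ref{eq:isocapa-ineq}) in Appendix Proposition~\ref{prop:isocap_ineq}: for a fixed segment $L$ at distance $d_0$ from $x$ and a fixed time $t$, the probability that Brownian motion from $x$ comes within distance $\rho$ of $L$ before time $t$ is bounded below by a quantity comparable to $\Capa(\text{$\rho$-tube around }L)$, which for a segment behaves like $c/\log(1/\rho)$ when $n=3$ and stays bounded below by a positive constant when $n \geq 4$. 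Either way $\psi(x,t) \gtrsim 1/\log(1/\rho)$ (or $\gtrsim 1$), so $\psi(x,t)^{n/(n-2)} \gtrsim (\log(1/\rho))^{-n/(n-2)}$.

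Combining the two estimates,
\begin{equation}
    \tau(x,r) = \frac{\psi(x,t)^{\frac{n}{n-2}}}{\mu(x,r)} \;\gtrsim\; \frac{(\log(1/\rho))^{-\frac{n}{n-2}}}{\rho^{n-1}} \;\xrightarrow[\rho \searrow 0]{}\; \infty,
\end{equation}
since any power of $\rho$ beats any power of $\log(1/\rho)$; when $n\geq 4$ the numerator is simply bounded below and the blow-up is even cleaner. I expect the main obstacle to be making the lower bound on $\psi(x,t)$ fully rigorous — specifically, passing from the \emph{elliptic} capacity of the thin tube to the \emph{parabolic} (time-$t$) hitting probability, and checking that the implied constants are genuinely independent of $\rho$. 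The honest way to do this is to localize: intersect the tube with a fixed ball $B(z,r')$ centered at a point $z$ on the axis inside $B(x,r)$, use that Brownian motion from $x$ reaches $B(z, r'/2)$ before time $t$ with fixed positive probability, and then apply the strong Markov property together with the standard estimate that Brownian motion, once inside $B(z,r'/2)$, hits the $\rho$-tube before exiting $B(z,r')$ with probability comparable to its Newtonian capacity relative to that ball. The case $n=2$ is genuinely different (Brownian motion is recurrent, capacities vanish), which is why the statement implicitly assumes $n\geq 3$, consistent with the standing hypothesis in (\ref{eq:isocapa-ineq}).
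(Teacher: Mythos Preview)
Your strategy --- bound $\mu$ above by $O(\rho^{n-1})$ and bound $\psi$ below via the capacity of the tube, then take the ratio --- is exactly the paper's, and the volume estimate together with the passage from capacity to hitting probability (the Grigor'yan lower bound the paper quotes) are both fine.

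The gap is in your capacity claim for $n\geq 4$. You assert that $\Capa(T_\rho)$ ``stays bounded below by a positive constant'' there; it does not. A segment in $\mathbb{R}^n$ with $n\geq 4$ has Hausdorff dimension $1<n-2$ and is polar for the Newtonian kernel, so $\Capa(T_\rho)\to 0$ as $\rho\searrow 0$. Concretely, covering $T_\rho$ by $\sim h/\rho$ balls of radius $\rho$ and using subadditivity of capacity gives $\Capa(T_\rho)\lesssim h\,\rho^{n-3}\to 0$. (Your $n=3$ claim $\Capa(T_\rho)\sim c/\log(1/\rho)$ is actually correct --- a segment is the borderline polar case in $\mathbb{R}^3$ --- and is in fact sharper than the paper's heuristic scaling $\Capa(T_{\lambda\rho})\sim\lambda^{n-3}\Capa(T_\rho)$, which suppresses this logarithm.) So as written, your lower bound on $\psi$ for $n\geq 4$ rests on a false premise and that step does not go through.

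The repair is immediate and is precisely the paper's computation: feed in the correct scaling $\psi\gtrsim\Capa(T_\rho)\sim\rho^{n-3}$. Then $\psi^{n/(n-2)}\gtrsim\rho^{n(n-3)/(n-2)}$, and since
\[
\frac{n(n-3)}{n-2}-(n-1)=\frac{n^2-3n-(n^2-3n+2)}{n-2}=-\frac{2}{n-2},
\]
one obtains $\tau\gtrsim\rho^{-2/(n-2)}\nearrow\infty$. Your plan is right; only the dimensional dependence of the tube capacity for $n\geq 4$ needs to be corrected.
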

Further comparison results for additional model cases are given in Appendix, Subsection \ref{subsec:mod_case}.

\begin{figure}
    \centering
    \includegraphics[width=0.8\linewidth]{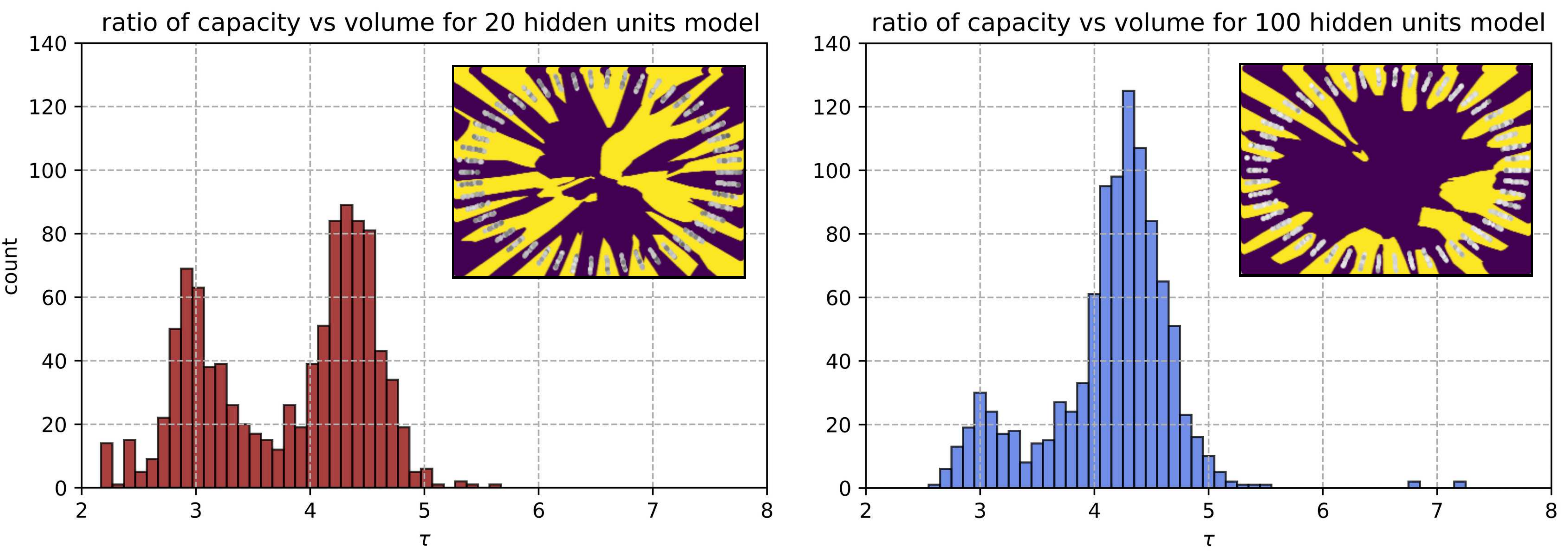}
    \caption{A visual depiction of decision boundaries and saturation $\tau$ for 5-layer MLP models with 20 and 100 hidden units trained over a planar "circular" dataset (depicted in grey). For each data sample $x$ the ball $B(x, r)$ is selected so that the relative volume $\mu(x,  r)$ is $0.1$. According to Lemma \ref{lem:half-space-hit-prob} a flat decision boundary would correspond to $\tau \approx 3.32$. \textit{(Left)} The saturation $\tau$ exhibits a \textit{bi-modal} behaviour with peaks around the values $3$ and $4.3$. These correspond to data points squeezed between thin elongated regions that locally closely resemble the flat case, or tinier "pockets" with higher curvature, respectively. \textit{(Right)} The saturation $\tau$ is more closely concentrated around $4.3$ and, accordingly, the decision boundary mainly consists of smaller "pockets" of higher curvature.}
    \label{fig:toy-plots}
\end{figure}

\section{Adversarial Attacks and Defenses}
\label{sec:adv-attacks}

\paragraph{Background and set-up} 
We now analyze how strategies for improving adversarial and noise shift robustness affect the decision boundary's heat diffusion properties. In particular, we keep track of Brownian hitting probabilities $\psi$ and the isocapacitory saturation $\tau$. On one hand, we can view $\psi$ as a \textbf{capacitory robustness} metric against continuous interpolation attacks given by Brownian noise (see also Section \ref{sec:intro}). On the other hand, Subsection \ref{subsec:model-cases} indicates how the behaviour of $\tau$ reveals deviation from the case of a flat or 
"spiky" and curvy decision boundary. 
Our \textit{empirical analysis} uses the well-known CIFAR10 and MNIST datasets (details, preprocessing and enhancements are given in Appendix, Subsection \ref{subsec:train_details}). For CIFAR10, we used the 
Wide-ResNet-28-10 (\cite{Zagoruyko2016, Ford2019}) and ResNets with 32, 44 and 56 layers (\cite{7780459}). For MNIST, we selected a LeNet-5 and additional CNN architectures. Motivated by previous work (e.g. \cite{Ford2019}), we perform 3 types of training: ordinary stochastic gradient descent (ADAM optimization), training with Gaussian noise data augmentation and training with adversarial defense strategies (FGSM and PGD methods, 
see also Appendix, Section \ref{subsec:fgsm-vs-pgd} for details and remarks on robustness). Detailed outline of the numerics behind Brownian motion sampling, isoperimetric/isocapacitory saturation and relative volume sampling are given in Appendix, Subsection \ref{subsec:sampling-details}. 

\begin{figure}
    \centering
    \includegraphics[width=0.81\textwidth]{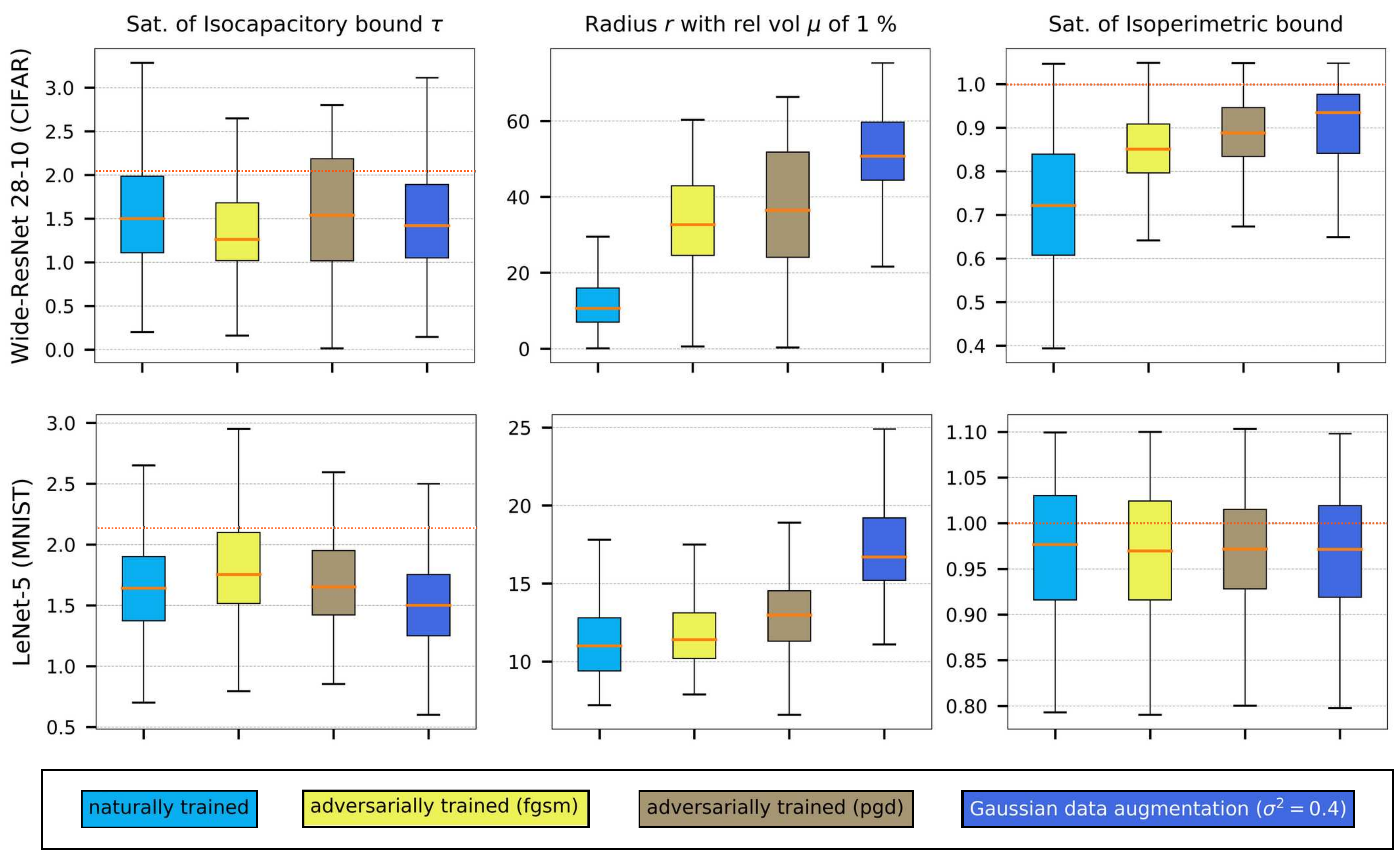}
    \caption{Results for a Wide-ResNet 28-10 and a LeNet-5 trained on CIFAR10 and MNIST, respectively. Different boxplots correspond to different training strategies: ordinary, adversarial, with noise or with a Brownian augmentation. Data is collected over 1000 test data points, where each radius $r$ is selected so that the relative error volume $\mu$ equals $1\%$. Left-to-right the columns correspond to the isocapacitory saturation $\tau$, the radius $r$ realizing $\mu = 1\%$ and the isoperimetric saturation. Finally, red punctured horizontal lines indicate the corresponding values for flat decision boundaries.}
    \label{fig:wrn-lenet-stats}
\end{figure}

\paragraph{Analysis of results} Recent results (\cite{Ford2019, schmidt2017towards}) have shown qualitative differences between the adversarially robust boundaries of MNIST and CIFAR-10, which also impact the experimental findings in this work. In short, a robust decision boundary is in the MNIST case less spiky in comparison to CIFAR. For more details we refer to Appendix, Subsection \ref{subsec:datasets}. In Fig. \ref{fig:wrn-lenet-stats} we collect the statistics of the WRN and LeNet models on CIFAR10 and MNIST, respectively. On one hand, we confirm previous results (\cite{Ford2019, Fawzi2016NIPS}) implying the "flattening-of-boundary" phenomenon: noisy and adversarial training appear to improve and saturate isoperimetric bounds. Furthermore, the ball $B(x, r)$ realizing relative error volume $\mu$ of $1\%$ is on average scaled up for adversarial and, especially, noisy training. On the other hand, an intriguing behaviour is observed for the decision boundary's heat diffusion traits. The isocapacitory saturation $\tau$ does not appear to concentrate around the value corresponding to a flat hyperplane: defense training strategies, both FGSM and PGD-based, may not have a significant impact on the behaviour of $\tau$ by forcing it to converge to the case of a flat decision boundary (shown as horizontal red punctured line). Put differently, the chance that a continuous Brownian perturbation will find an adversarial example (scaled to the appropriate ball $B(x, r)$) will not be significantly altered on average (see Appendix, Subsection \ref{subsec:iso_results} for a visual reference). However, it appears that noisy training consistently delivers lower values of $\tau$ - intuitively, this is expected as the decision boundary is adjusted in terms of adding Gaussian "blobs", thus naturally being rounder. Geometrically, the sensitivity of $\tau$ to small perturbations in almost flat surfaces (Subsection \ref{subsec:isoperimetric}) indicates that locally around clean (unperturbed) data points an amount of curvature and more complex geometry are still retained. Of course, this amount is not as large  as to violate saturation of isoperimetric bounds and robustness comparability results in the sense of \cite{Fawzi2016NIPS}. For example, in the case of CIFAR10 a simple geometric model surface that has a similar $\tau$-behaviour (as for the adversarial and noisy training) is given in (Appendix, Subsections \ref{subsec:isper-vs-isocap-app}, \ref{subsec:mod_case}): considering a data point $x$, an almost flat decision boundary that is concavely bent w.r.t. $x$ with approximate curvature of $\approx 1 / (12.3 r)$. These observations reveal finer properties concerning 
decision boundary flattening due to defense training: in particular, noisy training appears to flatten decision boundaries and slightly bend them concavely w.r.t. to the clean data  points. Further results for ResNet models and CNN are provided in (Appendix, Subsection \ref{subsec:iso_results}).

\paragraph{Spiky sets and control on $\tau$} In Fig. \ref{fig:wrn-lenet-stats} large outlying values of $\tau$ are filtered out. However, values of $\tau$ larger than $10$ can occupy up to $1.3\%$ for ordinary training and $2.1\%, 2.6\%$ for adversarial, noisy training, respectively. It follows, that the geometry of high-dimensional decision boundaries does not admit too many high-curvature (see also Proposition \ref{prop:curvature}) spiky regions of low volume and high heat emission (high surface area) in the sense of Subsections \ref{subsec:isoperimetric}, \ref{subsec:model-cases}. However, it appears that defense training can increase the number of such spiky regions: one might explain such behaviour by seeing defense training as a bundle of additional geometric conditions that sometimes are not able to agree and thus lead to a more degenerate (singular) geometry. Further, with respect to the initial analysis of Fig. \ref{fig:wrn-lenet-stats}, a natural question is whether one can control $\tau$ along with the isoperimetric saturation - ultimately, one hopes to design better decision boundaries (flatter, or appropriately curved \cite{Moosavi-Dezfooli2019}) eventually leading to more robustness. However, getting a tight control on $\tau$ could be a difficult task. It is, indeed, possible to obtain some basic grip on $\tau$: we trained a LeNet-5 architecture on MNIST that exhibited significantly increased $\tau$ values and preserved isoperimetric saturation (statistics are shown as the rightmost boxplot in Fig. \ref{fig:wrn-lenet-stats}). Similar to many adversarial defenses, the training consisted in augmenting the dataset with attacks given in this case by Brownian paths. However, it seems difficult to force $\tau$ to concentrate around the flat-case value, as well as to obtain competitive robustness of the model. On one hand, this is explained via the need to control heat diffusion through Brownian motion - the mentioned naive method is not able to capture the hitting properties sufficiently well; on the other hand, as discussed above heat diffusion properties can be far more sensitive than isoperimetric saturation w.r.t. minor geometric perturbations.

\section{Generalization bounds in terms of hitting probabilities} \label{sec:gen-bounds}

\paragraph{Compression, noise stability and generalization} Recent advances (\cite{Arora2018,Suzuki2018SpectralPruningCD,Suzuki2020Compression}) indicate that generalization can be related to compression and noise stability. The guiding strategy is: (1) a large DNN 
$f$ that is stable against (layer-wise) noise injections admits an effective compression to a simpler model $\tilde{f}$ which is almost equivalent to 
$f$. Intuitively, the noise stability absorbs the defects introduced by compression; (2) concentration results imply generalization bounds for $\tilde{f}$. 
Admittedly, the generalization estimate is obtained initially for the smaller model; 
however, it is also possible to "transfer" the bound to $f$ (
see the discussion at the end of this Section). 

In this context a simple observation is that Brownian motion and its hitting probabilities can be related, respectively, to noise injection and margins of classification: small hitting probability of the decision boundary should indicate "margin-safety" and allow to compress parameters of the model more aggressively. However, in contrast to injecting normal noise, Brownian motion, with stopping time given by boundary impacts, is more delicate and requires further analysis of the decision boundary. In the following we propose a theoretical framework that, we hope, will augment and produce further insights into the interplay between noise stability and generalization bounds. The statements are inspired by the results in \cite{Arora2018,Suzuki2020Compression} and we follow the 
notation therein. First, we propose several options for 
goodness of approximation (compression) in the sense of heat diffusion 
(Appendix, Subsection \ref{subsec:gen-bounds-proof-app}). We give the following definition:

\begin{dfn} \label{def:compression}
   Given a positive real number $\eta$, a classifier $g$ is said to be an $\eta-$compression of $f$ if
\begin{equation}\label{eq:good_comp_1}
    \left|\psi_{E_{g}(y)}(x, \gamma^2) - \psi_{E_{f}(y)}(x, \gamma^2)\right| < \eta 
\end{equation}
for all points $x$ in the training sample, labels $y$ and real numbers $\gamma$.
\end{dfn}

Now, as mentioned above we have the following generalization bounds for the compressed model:

\begin{prop}\label{prop:gen_bound_gen}
    Let us suppose that $f$ is approximable by $g$ in the sense of Definition \ref{def:compression}. Here $g \in A$, where $A$ is a family of classifiers $\mathbb{R}^n \rightarrow \mathbb{R}$ parametrized by $q$ parameters assuming $r$ discrete values. For a classifier $h$, let $C_{h}(x, y, t)$ be the event that a Brownian path starting at $x$ hits $E_{h}(y)$ within time $t$. Then for $t_1 \leq t_2 \leq T$ we have
    \begin{equation}\label{ineq:Loss_estimate}
        L_0(g) \leq \Prob_{(x, y) \sim D}\left(C_{g_\alpha}(x, y, t_1)\right) \leq \Prob_{(x, y) \sim \mathcal{X}} \left(C_f(x, y, t_2)\right) + \eta +  O\left(\sqrt{\frac{q \log r}{m}}\right)
    \end{equation} 
    with probability at least $1 - e^{-q\log r}$ and $L_0$ denoting the expected loss over the true data distribution. 
\end{prop}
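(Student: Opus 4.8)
The plan is to split the two-sided estimate in \eqref{ineq:Loss_estimate} into three elementary links, each contributing one of the three terms on the right-hand side, and to prove them independently before concatenating. Writing $g=g_\alpha$ for the realized compressed classifier, the links I would establish are: (i) $L_0(g)\le\Prob_{(x,y)\sim D}\!\big(C_g(x,y,t_1)\big)$, a pointwise domination of the $0$--$1$ loss by the Brownian hitting probability; (ii) a uniform concentration bound over the finite class $A$ transferring the true hitting probability to its empirical counterpart at the cost $O\!\big(\sqrt{q\log r/m}\big)$; and (iii) the compression inequality of Definition~\ref{def:compression} together with monotonicity of $t\mapsto\psi_U(x,t)$ to move from $g$ back to $f$ and from $t_1$ up to $t_2$, at the cost $\eta$.

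For (i) I would argue that, by the $\argmax$ tie-breaking convention fixed in \Secref{sec:motivation}, $g$ misclassifies $(x,y)$ exactly when $x\in E_g(y)$; in that case the Brownian path $\omega$ with $\omega(0)=x$ already satisfies $\omega(0)\in E_g(y)$, so the event $C_g(x,y,t_1)$ holds surely. Hence $\mathbf{1}[\,x\in E_g(y)\,]\le\Prob_\omega\!\big(C_g(x,y,t_1)\big)=\psi_{E_g(y)}(x,t_1)$ as a pointwise inequality in $(x,y)$, and integrating against $D$ gives link (i).

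For (ii), fix $t_1$ and for each $h\in A$ set $\ell_h(x,y):=\psi_{E_h(y)}(x,t_1)\in[0,1]$, a deterministic $[0,1]$-valued function of $(x,y)$; then $\Prob_{(x,y)\sim D}(C_h(x,y,t_1))=\E_{(x,y)\sim D}[\ell_h]$ and $\Prob_{(x,y)\sim\mathcal{X}}(C_h(x,y,t_1))=\frac1m\sum_{i=1}^m\ell_h(x_i,y_i)$. I would apply Hoeffding's inequality, $\Prob(\E_D[\ell_h]-\E_{\mathcal{X}}[\ell_h]>\epsilon)\le e^{-2m\epsilon^2}$ for each fixed $h$, then a union bound over $|A|\le r^q$ classifiers to get $\Prob(\exists\,h\in A:\ \E_D[\ell_h]-\E_{\mathcal{X}}[\ell_h]>\epsilon)\le e^{q\log r-2m\epsilon^2}$, and finally take $\epsilon=\sqrt{q\log r/m}$ so the right-hand side is at most $e^{-q\log r}$. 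On that event, simultaneously for all $h\in A$ and in particular for $h=g_\alpha$, link (ii) holds.

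For (iii) I would invoke Definition~\ref{def:compression} at each training point $(x_i,y_i)$ with $\gamma=\sqrt{t_1}$ to get $\psi_{E_{g}(y_i)}(x_i,t_1)<\psi_{E_f(y_i)}(x_i,t_1)+\eta$, average over $i$ to obtain $\Prob_{(x,y)\sim\mathcal{X}}(C_{g}(x,y,t_1))\le\Prob_{(x,y)\sim\mathcal{X}}(C_f(x,y,t_1))+\eta$, and then use that $t\mapsto\psi_{E_f(y)}(x,t)$ is non-decreasing (hitting within $t_1$ entails hitting within $t_2\ge t_1$) to replace $t_1$ by $t_2$; chaining (i)--(iii) then yields \eqref{ineq:Loss_estimate}. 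The principal obstacle I anticipate is in making (ii) fully rigorous: one has to check that the conditional hitting probabilities $\ell_h$ are jointly measurable and genuinely $[0,1]$-valued — this is exactly where the regularity hypotheses on $f$ and on the members of $A$ (sufficiently regular decision boundaries, as in \Secref{sec:motivation}) are needed so that $\psi$ is well defined — and, if $g_\alpha$ is itself the output of a randomized compression procedure, one must first condition on the scheme's internal randomness (on the event, guaranteed by the hypothesis, that its output lands in $A$ and is an $\eta$-compression) before invoking the data-dependent union bound, so that the two sources of randomness stay separate.
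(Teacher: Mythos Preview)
Your proposal is correct and follows essentially the same three-step approach as the paper: pointwise domination of the $0$--$1$ loss by the hitting probability, Hoeffding plus a union bound over the $r^q$ members of $A$ with the choice $\epsilon=\sqrt{q\log r/m}$, and the compression inequality from Definition~\ref{def:compression} to pass from $g$ back to $f$. Your version is in fact slightly tidier in two places: you apply Hoeffding directly to the $[0,1]$-valued functions $\ell_h(x,y)=\psi_{E_h(y)}(x,t_1)$ rather than introducing auxiliary Bernoulli variables, and you make the monotonicity step $t_1\mapsto t_2$ explicit, which the paper leaves implicit.
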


Taking $t_2 \to 0$ in (\ref{ineq:Loss_estimate}), one recovers the empirical loss $\hat{L}_0(f)$ on the RHS. In other words, the generalization of the smaller model $g$ is controlled by hitting probabilities of the initial model $f$ and corrections related to family capacity. The next natural question is the construction of $g$. Inspired by Johnson-Lindenstrauss techniques (cf. also \cite{Arora2018}) we are able to recover the following statement (thorough details are given in Appendix, Subsections \ref{subsec:comp_par_gen}, \ref{subsec:comp_par_tame}):


\begin{prop}[Informal]
\label{prop:gen-bound-flat-db} Considering a fully connected feed-forward neural network $f$ where some flatness conditions on the layer decision boundaries are fulfilled, there exists an $\eta$-compression $g$ in the sense of Def. \ref{def:compression} whose number of parameters is logarithmically smaller than $f$.
\end{prop}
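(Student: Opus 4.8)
The plan is to reduce Definition~\ref{def:compression} to a purely geometric closeness condition between the error sets of $f$ and $g$, and then to realize $g$ by a Johnson--Lindenstrauss style random projection of the weight matrices, in the spirit of \cite{Arora2018}. The starting observation is elementary: if a Brownian path from $x$ enters $E_g(y)$ within time $\gamma^2$ but not $E_f(y)$ (or conversely), then at some intermediate time it lies in the symmetric difference $E_f(y)\triangle E_g(y)$, whence
\begin{equation}\label{eq:sym-diff-bound}
    \left|\psi_{E_g(y)}(x,\gamma^2)-\psi_{E_f(y)}(x,\gamma^2)\right|\;\le\;\psi_{E_f(y)\triangle E_g(y)}(x,\gamma^2).
\end{equation}
If $\|f-g\|$ is small on the region swept by Brownian paths issuing from the training points, then $g$ reproduces the label that $f$ assigns to every training point, and $E_f(y)\triangle E_g(y)$ is contained in a tubular collar $\mathcal{N}^{\delta}$ of the decision boundary $\mathcal{N}$, of width $\delta$ comparable to $\|f-g\|$ divided by the local gradient norm of $f$. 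So it suffices to: (i) bound the hitting probability of such a collar, in terms of $\delta$ and the margin and uniformly over all $\gamma$, when $\mathcal{N}$ is flat; and (ii) construct $g$ with $\|f-g\|$ — hence $\delta$ — as small as needed while using only logarithmically many parameters.

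For step~(i) I would reuse the reflection-principle computation underlying Lemma~\ref{lem:half-space-hit-prob}. If the relevant piece of $\mathcal{N}$ is a hyperplane at distance $d=d(x,\mathcal{N})$ from $x$, then $\mathcal{N}^{\delta}$ is a slab, and projecting the Brownian motion onto its normal direction reduces the collar-hitting probability within time $\gamma^2$ to the one-dimensional quantity $\Phi(-(d-\delta)/\gamma)-\Phi(-d/\gamma)\le(\delta/\gamma)\,\phi((d-\delta)/\gamma)$, which is bounded — uniformly over all $\gamma>0$ — by an absolute constant times $\delta/d$ (the extremal regime is $\gamma\sim d$, while for $\gamma\to0$ and $\gamma\to\infty$ it tends to $0$). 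Hence $\delta\le c\,\eta\,d$ already forces the right-hand side of \eqref{eq:sym-diff-bound} below $\eta$. The flatness conditions on the layer decision boundaries are precisely what let this survive when $\mathcal{N}$ is only approximately flat: bounded curvature perturbs the slab volume and the normal-direction exit time only to lower order, so $\psi_{\mathcal{N}^{\delta}}(x,\gamma^2)=O(\delta/d)$ persists with a worse constant. Proposition~\ref{prop:curvature} is the qualitative counterpart — a boundary wrapping concavely around $x$ would instead inflate hitting probabilities — so some flatness control is genuinely needed here.

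For step~(ii) I would follow the compression scheme of \cite{Arora2018}: replace each weight matrix $W_i\in\mathbb{R}^{d_i\times d_{i-1}}$ by a matrix $\widehat{W}_i$ obtained from a random sign/Gaussian projection of inner dimension $k_i=O(\epsilon^{-2}\log(\cdot))$, storing only the projection seed together with $O(k_i(d_i+d_{i-1}))$ coefficients discretized to $r$ levels. The Johnson--Lindenstrauss lemma gives $\|\widehat{W}_i h-W_i h\|\le\epsilon\|W_i\|\,\|h\|$ for every activation $h$ occurring along Brownian paths from training points, with high probability over the projection (which has the form required by Proposition~\ref{prop:gen_bound_gen}, namely $q$ parameters assuming $r$ discrete values); propagating this through the network via its layerwise Lipschitz constants — equivalently the layer and interlayer cushions of \cite{Arora2018}, whose role is that noise stability absorbs the projection defect — yields $\|g(z)-f(z)\|\le\epsilon'$ on the relevant region. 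Choosing $\epsilon'$ so that the induced collar width obeys $\delta\le c\,\eta\,d$ uniformly over the training points (using the minimum margin) makes $g$ an $\eta$-compression by step~(i) and \eqref{eq:sym-diff-bound}. The role of the flatness hypotheses is to force the effective ranks $k_i$, and hence the stored parameter count $q=\sum_i k_i(d_i+d_{i-1})$, to be logarithmic in the ambient parameter count of $f$ — the asserted saving.

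I expect the main obstacle to be step~(i) under genuine approximate flatness: pinning down which quantitative flatness hypotheses on the (possibly assembled-from-many-pieces) decision boundary make the collar estimate simultaneously uniform over all $\gamma\in\mathbb{R}$ and over all training points, and then checking that these hypotheses are not destroyed by the random projection, so the compressed boundary remains flat enough for the same estimate. A secondary difficulty is the depth bookkeeping in step~(ii): controlling how the per-layer Johnson--Lindenstrauss errors accumulate without pushing $\delta$ past the margin, which requires re-deriving the noise-stability machinery of \cite{Arora2018} for the Brownian, boundary-stopping-time setting rather than the additive-Gaussian one.
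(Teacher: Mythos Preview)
Your overall architecture is right --- JL-style layer compression plus a slab/reflection-principle estimate to control hitting-probability differences --- and this is exactly what the paper does. But the specific reduction via the symmetric-difference bound has a gap. The quantity $\Phi(-(d-\delta)/\gamma)-\Phi(-d/\gamma)$ you write down is \emph{not} the collar hitting probability $\psi_{\mathcal{N}^\delta}(x,\gamma^2)$: for a flat $\mathcal{N}$ at distance $d$, a Brownian path from $x$ hits the collar $\{d-\delta\le x_1\le d+\delta\}$ as soon as the projected one-dimensional motion reaches $d-\delta$, so $\psi_{\mathcal{N}^\delta}(x,\gamma^2)=2\Phi(-(d-\delta)/\gamma)$, which tends to $1$ as $\gamma\to\infty$ and cannot be bounded by $C\delta/d$ uniformly in $\gamma$. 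In other words, your inequality $|\psi_{E_g}-\psi_{E_f}|\le\psi_{E_f\triangle E_g}$ is correct but far too lossy for the uniform-in-$\gamma$ control Definition~\ref{def:compression} demands: once a path reaches the collar it almost always continues on into $E_f$, so ``hits the collar'' wildly overcounts ``hits $E_g$ but not $E_f$''. What you actually computed --- and correctly bounded by $C\delta/d$ --- is (half of) the \emph{direct} difference $\psi_{E_g}-\psi_{E_f}$ for two nearby half-spaces, which is the right object, but you cannot get to it through the collar bound as stated.

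The paper's route sidesteps this by taking the dual viewpoint: rather than comparing $E_f$ versus $E_g$ from the same start, it compares hitting probabilities of the \emph{same} target set from two nearby starting points $A_ix$ and $\hat{A}_ix$ (the outputs of the original and compressed $i$-th layer). For a half-space this difference is exactly $2[\Phi(-(d-\delta)/\sqrt{t})-\Phi(-d/\sqrt{t})]$ --- your final estimate, obtained directly without any symmetric-difference step. The paper then extends this to boundaries sandwiched between two hyperplanes and to boundaries of bounded curvature (via a cuboid covering and the coordinate-wise reflection principle), and iterates over the layers with a telescoping sum. A structural consequence is that the flatness assumptions are imposed on the pull-back decision boundaries seen at each intermediate layer, and the compression guarantee (Bernoulli entry-sampling rather than low-rank projection, incidentally) is needed only per training point per layer --- sidestepping your secondary concern about JL holding uniformly over all activations along Brownian paths.
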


Finally, having the generalization estimates on the smaller model $g$ it is natural to attempt transferring those to the initial model $f$ - in \cite{Suzuki2020Compression} this is achieved via certain local Rademacher complexity and "peeling" techniques. However, we choose not to pursue these bounds in the present work and assume the perspective in \cite{Arora2018} that $g$, being almost equivalent to $f$, provides a reasonable indicator of generalization capabilities.

\section*{Acknowledgments} 

We would like to thank our anonymous reviewers whose advice helped 
improve the quality of the presentation. 
We are indebted to Prof. Christian Bauckhage for his constant encouragement, support and fruitful discussions. We also sincerely thank Benjamin Wulff for maintaining the outstanding computation environment at Fraunhofer IAIS - his support and coffee conversations played an essential role for our empirical analysis. In part, this work was supported by the Competence Center for Machine Learning Rhine-Ruhr (ML2R) which is funded by the Federal Ministry of Education and Research of Germany (grant no. 01IS18038B). We gratefully acknowledge this support.

\bibliography{iclr2021_conference}
\bibliographystyle{iclr2021_conference}

\appendix

\section{Appendix A: Hitting estimates, saturation and curvature}
\label{sec:hitprob-sat-curv}

\subsection{Brownian motion and Bessel processes}
In this Subsection we introduce some basic background on Brownian motion.
\label{subsec:bm-bessel}

\begin{dfn}[Brownian motion]
\label{def:bm}
A real-valued stochastic process $\{ \omega(t) : t \geq 0\}$ is called a one-dimensional Brownian motion started at $x \in \mathbb{R}$ if the following hold:
\begin{itemize}
    \item $\omega(0) = x$,
    \item the process has independent increments, that is, for $0 \leq t_1 \leq \cdots t_m$ the increments $\omega(t_j) - \omega(t_{j - 1})$ for $j = 2, \cdots, m$ are independent random variables,
    \item for $t \geq 0, h > 0$, the increments $\omega(t + h) - \omega(t)$ are normally distributed with expectation zero and variance $h$,
    \item almost surely, the function $t \mapsto \omega(t)$ is continuous. 
\end{itemize}
    The process $\{ \omega(t) : t\geq 0\}$ is called a standard Brownian motion if $x = 0$. 
    
    Finally, if $\omega_1, \cdots, \omega_n$ are independent one-dimensional Brownian motions started at $x_1, \cdots, x_n$ then the stochastic process $\omega(t) = (\omega_1(t), \cdots, \omega_n(t)) $ is called an $n$-dimensional Brownian motion started at $x = (x_1, \cdots, x_n)$.
\end{dfn}
\begin{remark}\label{lem:bm-distrib}
    The distribution of the standard $1$-dimensional Brownian motion $\omega(t)$ is normal with mean ${\bf 0}$ and variance $t$. It follows that the RMSD (root mean squared displacement) of the standard $n$-dimensional Brownian motion is $\sqrt{nt}$.
\end{remark}

\paragraph{Sampling} Brownian motion simulation is prescribed directly by Definition \ref{def:bm}. Given a step size $s$, number of steps $k$ we sample a Brownian path as

\begin{equation}
    \hat{\omega}(k) := \sum_{i=0}^k s X_i, \quad X_i \sim N(0, 1).    
\end{equation}

By Definition \ref{def:bm}, $\Var[\omega(t)] = t$, hence the sampling $\hat{\omega}$ corresponds to running a Brownian motion for time
\begin{equation}
    t = k s^2.
\end{equation}
In particular, the mean displacement of $\hat{\omega}$ is $s \sqrt{n k}$. In accordance with the main text, Subsection \ref{subsec:geom-bm-diff} and Fig. \ref{fig:intro}, whenever we need to sample Brownian motion contained within the ball $B(x, r)$ for its lifespan $[0, t]$, we will fix the number of steps $k$ (usually, we set $k = 400$) and adjust the step size $s$ accordingly, so that $r = s \sqrt{n k}$.

\paragraph{Estimating hitting probabilities} A straightforward empirical way to estimate Brownian hitting probability $\Prob_\omega \left[\exists t_0 \in [0, t] | \omega(t_0) \in S \right]$ of a target set $S$ is to evaluate the steps $\hat{\omega}(i), i = 0, \dots, k$ and check whether $\hat{\omega}(i_0) \in S$ for some $S$. Of course, the precision of this computation depends on the number of sampled Brownian paths $\hat{\omega}$, as well as the step size $s$ and number of steps $k$. Formal statements on convergence and numerical stability could be obtained, e.g. by means of concentration/Monte-Carlo results (e.g. Proposition \ref{prop:bm-sampling-conc} below); however, in practice, in our experiments we mostly worked with the regime $k \approx 10^4$ which seemed an acceptable choice in terms of numeric stability and performance.

Explicit closed-form computation of hitting probabilities is a non-trivial task, though it is possible for some model cases (main text, Lemma \ref{lem:half-space-hit-prob}). Dimension 1 is special, where we have the so-called "reflection principle", which says that
\begin{equation}\label{eq:reflection_principle}
    \Prob\left(\sup_{0 \leq s \leq t} \omega(s) \geq d\right) = 2 \Prob\left( \omega(t) \geq d\right).
\end{equation}
For a proof of this basic statement we refer to \cite{Morters2010}. 

However, in higher dimensions, there is no straightforward analog of the reflection principle, and calculating hitting probabilities of spheres leads one to the deep theory of Bessel processes. Let us consider a Brownian particle $\omega(t)$ starting at the origin in $\mathbb{R}^n$ and look at the real-valued random variable $\| \omega(t)\|$ (in the literature, these are known as Bessel processes). We are interested in the probability of the particle hitting a sphere $\{x \in \mathbb{R}^n : \| x\| = r\}$ of radius $r$ within time $t$. Curiously, it seems that there is no known closed formula for such a hitting probability. The only formula we know of is in the form of a convergent series involving zeros of the Bessel function of the first kind, and appears in \cite{Kent1980}. For the reader interested in Kent's formula, we also refer to associated asymptotics of zeros of the Bessel function in \cite{watson1944treatise}. 

The following heuristic is implicit in many of our calculations and motivates several of our definitions: the probability 
\begin{equation}\label{eq:hit_prob_scal}
    \Prob\left(\sup_{0 \leq s \leq t}\| \omega(s)\| \geq r\right)
\end{equation}
of a Brownian particle hitting a sphere of radius $r$ within time $t$ is dependent only the ratio $r^2/t$. As a consequence, given a small $\eta > 0$ and a constant $c$, one can choose the constant $c_n$ in $t = c_nr^2$ small enough (depending on $\eta$) such that 
\begin{equation}\label{fund}
    \Prob\left(\sup_{0 \leq s \leq c_nr^2}\| \omega(s)\| \geq cr\right) < \eta.
\end{equation}
Roughly what this means is the following: for a Brownian particle, the probability of hitting even a large and nearby object may be made arbitrarily small if the motion is not allowed to run sufficiently long.

\subsection{Heat Diffusion and Brownian motion duality}
\label{subsec:hd-bm-duality}

\paragraph{Macroscopic vs microscopic} There are roughly two broad viewpoints towards the understanding of diffusion: the ``macroscopic'' and the ``microscopic''. Macroscopically, the mechanism of diffusion can be thought of as creating a flux in the direction from greater to lesser concentration. If $u(x, t)$ measures the intensity of the quantity undergoing diffusion, and $J$ the flux across the boundary of a region $\Omega$, then in the simplest model one assumes that (up to a constant) $J = -\nabla u$. Further, we have the identity 
\begin{equation}\label{eq:heat_flux}
 \partial_t\int_{\Omega}  u(x, t) \; dx = - \int_{\partial \Omega} \nu.-\nabla u \; dS, 
\end{equation}
where $\nu$ is the outward pointing unit normal vector to $\partial \Omega$. By applying the divergence theorem to (\ref{eq:heat_flux}), one immediately gets the heat equation $\partial_t u  = \Delta u$. Here $\Delta$ denotes the Laplace operator given by the sum of second derivatives: $\Delta = \sum_{i=1}^n \partial^2_{ii}$. 

Now, many real-life diffusion processes are the result of microscopic particles jittering around seemingly in a random manner. This motivates the microscopic viewpoint, i.e., the modelling of heat diffusion via Brownian motion of particles. We posit that a particle located at $x \in \mathbb{R}^n$ at time $t_0$ will have the probability $\psi_U(x, t)$ of being in an open set $U \subset \mathbb{R}^n$ at time $t_0 + t$, where 
\begin{equation}
    \psi_U(x, t) = \int_U p(t, x, y) \; dy,
\end{equation}
and $p(t, x, y)$ is the fundamental solution of the heat equation, or more famously, the ``heat kernel''. In other words, $p(t, x, y)$ solves the heat equation 
\begin{equation}\label{fund_soln}
  \begin{cases}
        \left( \partial_t - \Delta \right) u(x, t) = 0, \\
        u(x, 0) = \delta(x - y), 
    \end{cases}  
\end{equation}
with the Dirac delta distribution as the initial condition. Via Fourier transform, it is easy to establish that $p(t, x, y)$ is given by 
\begin{equation}
    p(t, x, y) = \frac{1}{(4\pi t)^{n/2}}e^{-\frac{|x - y|^2}{4t}}.
\end{equation}
This builds the bridge to pass between analytic statements on the side of the heat equation and probabilistic statements on the side of Brownian motion (see \cite{Grigoryan2001}, \cite{TaylorPDEII}). The precise formulation of this duality is given by the celebrated Feynman-Kac theorem discussed in Subsection \ref{subsec:fk-theorem-details} below.

\paragraph{Heating up the decision boundary} In our context we introduce the following heat diffusion process along the classifier's decision boundary $\mathcal{N}$:

\begin{equation}
    \begin{cases}
        \left( \partial_t - \Delta \right) \psi(x, t) = 0, \\
        \psi(x, 0) = 0, \quad \forall x \in \mathbb{R}^n, \\
        \psi(x, t)|_{x \in \mathcal{N}} = 1, \quad \forall t > 0.
    \end{cases}
\end{equation}

In other words $\psi(x, t)$ gives the heat quantity at the point $x$ at time $t$ given that at the initial moment $t=0$ all points have a heat quantity $0$ and afterwards a constant heat source of intensity $1$ is applied only at the decision boundary $\mathcal{N}$. As remarked above this is the macroscopic picture: the mentioned Feynman-Kac duality implies that $\psi(x, t)$ is also the hitting probability  $\Prob_\omega \left[\exists t_0 \in [0, t] | \omega(t_0) \in \mathcal{N} \right]$.

\subsection{The Feynman-Kac theorem}
\label{subsec:fk-theorem-details}
It is well-known that given a reasonable initial condition $u(x, 0) = f(x)$, one can find an analytic solution to the heat equation via convolution with heat kernel,
\begin{equation*}
    e^{t\Delta}f(x) := p(t, x, .)\ast f(.).
\end{equation*}
This just follows from (\ref{fund_soln}) by convolving directly. Now, via the duality of diffusion explained above, one expects a parallel statement on the Brownian motion side, one which computes the contribution of all the heat transferred over all Brownian paths reaching a point at time $t$. It stands to reason that to accomplish this, one needs an integration theory defined over path spaces, which leads us to the theory of Wiener measures. We describe the main idea behind Wiener measure briefly: consider a particle undergoing a random motion in $\mathbb{R}^n$ (given by a continuous path $\omega : [0, \infty) \to \mathbb{R}^n$) in the following manner: given $t_2 > t_1$ and $\omega(t_1) = x_1$, the probability density for the location of $\omega(t_2)$ is 
\begin{equation*}
p(t, x, x_1) = \frac{1}{\left(4\pi(t_2 - t_1)\right)^{n/2}}e^{-\frac{|x - x_1|^2}{4(t_2 - t_1)}}.
\end{equation*}
We posit that the motion of a random path for $t_1 \leq t \leq t_2$ is supposed to be independent of its past history. Thus, given $0 < t_1 < \cdots < t_k$, and Borel sets $E_j \subseteq \mathbb{R}^n$, the probability that a path starting at $x = 0$ at $t = 0$, lies in $E_j$ at time $t_j$ is 
\begin{equation*}
\int_{E_1}\cdots\int_{E_k} p(t_k - t_{k - 1}, x_k, x_{k - 1})\cdots p(t_1, x_1, 0) \; dx_k\; \cdots dx_1.
\end{equation*}
The aim is to construct a countably-additive measure on the space of continuous paths that will capture the above property. The above heuristic was first put on a rigorous footing by Norbert Wiener. 

Using the concept of Wiener measure, one gets the probabilistic (microscopic) description of heat diffusion, which is the content of the  
celebrated Feynman-Kac theorem:
\begin{prop}
Let $\Omega \subseteq \mathbb{R}^n$ be a domain, with or without boundary (it can be the full space $\mathbb{R}^n$). In case of a boundary, we will work with the Laplacian with Dirichlet boundary conditions. Now, let $f \in L^2(\Omega)$. Then for all $x \in \Omega$, $t > 0$,  we have that
\begin{equation}
    e^{t\Delta}f(x) = \mathbb{E}_x\left(f\left(\omega(t)\right)\phi_\Omega(\omega, t)\right),
\end{equation}
where $\omega (t)$ denotes an element of the probability space of Brownian paths starting at $x$, $\mathbb{E}_x$ is the expectation with regards to the Wiener measure on that probability space, and 
\begin{equation*}
	\phi_\Omega(\omega, t) = 
	\begin{cases}
	1, & \text{if } \omega ([0, t]) \subset \Omega\\
	0, & \text{otherwise. }
	\end{cases}
\end{equation*}
\end{prop}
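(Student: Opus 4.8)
The plan is to establish the identity first in the boundaryless case $\Omega = \mathbb{R}^n$, where it is essentially a computation, and then bootstrap to the Dirichlet case by realizing the right-hand side as the semigroup of \emph{killed} Brownian motion and matching it with $e^{t\Delta_\Omega}$ via a PDE uniqueness argument. Throughout, existence of the Wiener measure and of continuous versions of the paths $\omega$ is taken as given (Kolmogorov extension plus the Kolmogorov continuity criterion, as sketched above). Since both sides of the asserted identity are contractions on $L^2(\Omega)$ in $f$ — the left by the spectral theorem, the right by Jensen/Cauchy--Schwarz applied to the expectation together with $\phi_\Omega \le 1$ and stochastic completeness — it suffices to prove the identity for $f \in C_c^\infty(\Omega)$ (or $C_c(\mathbb{R}^n)$) and then pass to the $L^2$ limit.

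For $\Omega = \mathbb{R}^n$ we have $\phi_\Omega \equiv 1$, and by the definition of the Wiener measure the random variable $\omega(t)$ started at $x$ has density $p(t,x,y)$. Hence $\mathbb{E}_x\big(f(\omega(t))\big) = \int_{\mathbb{R}^n} f(y)\,p(t,x,y)\,dy = (p(t,\cdot)\ast f)(x)$. Differentiating under the integral sign (justified by the Gaussian decay of $p$ and its derivatives) one checks that $u(x,t) := (p(t,\cdot)\ast f)(x)$ solves $\partial_t u = \Delta u$ with $u(\cdot,0)=f$; equivalently, a one-line Fourier computation gives $\widehat{u}(\xi,t) = e^{-t|\xi|^2}\widehat{f}(\xi)$. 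By the definition (and uniqueness) of the heat semigroup this means $u = e^{t\Delta}f$, which is the claim.

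For a domain $\Omega$ with boundary, introduce the first exit time $\tau := \inf\{s > 0 : \omega(s) \notin \Omega\}$ and note that, for $x \in \Omega$, one has $\phi_\Omega(\omega,t) = \mathbf{1}_{\{\tau > t\}}$ up to a $\mathbb{P}_x$-null set. Set $v(x,t) := \mathbb{E}_x\big(f(\omega(t))\,\mathbf{1}_{\{\tau > t\}}\big)$, the value at $x$ of the killed Brownian semigroup $Q_t f$. The strategy is to show that $v$ solves
\begin{equation*}
\partial_t v = \Delta v \ \text{in}\ \Omega\times(0,\infty), \qquad v(\cdot,0) = f, \qquad v|_{\partial\Omega} = 0,
\end{equation*}
and then invoke uniqueness for the Dirichlet heat equation to conclude $v = e^{t\Delta_\Omega}f$. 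The flow property $v(x,t+s) = \mathbb{E}_x\big(\mathbf{1}_{\{\tau>s\}}\,v(\omega(s),t)\big)$ follows from the Markov property of Brownian motion applied at time $s$, exhibiting $v(\cdot,t)$ as a fixed point of the family $Q_s$. To extract the differential equation, apply Dynkin's formula (Itô's formula stopped at $\tau$) to $g \in C_c^\infty(\Omega)$: since a path started at an interior point does not exit $\Omega$ in infinitesimal time, $\tfrac{d}{ds}\big|_{s=0^+}\mathbb{E}_x\big(g(\omega(s))\mathbf{1}_{\{\tau>s\}}\big) = \Delta g(x)$, the same value as in the free case. Combined with parabolic smoothing — $v(\cdot,t)$ is smooth in $\Omega$ for $t>0$, being locally an average of the free heat flow — this upgrades the semigroup identity to $\partial_t v = \Delta v$ in the interior, and the initial condition $v(\cdot,t)\to f$ as $t\to0^+$ follows from $\mathbf{1}_{\{\tau>t\}}\to 1$ $\mathbb{P}_x$-a.s. for $x\in\Omega$, dominated convergence, and the full-space case.

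The delicate point — which I expect to be the main obstacle — is the boundary condition together with the regularity of $v(\cdot,t)$ up to $\partial\Omega$. For a \emph{regular} boundary point $x_0$ one has $\tau \to 0$ in $\mathbb{P}_x$-probability as $x \to x_0$, whence $v(x,t) \to 0$; but for general domains irregular boundary points exist and the pointwise statement can fail there. The clean resolution is to phrase everything at the level of the Dirichlet form: show that $v(\cdot,t) \in H_0^1(\Omega)$ for each $t>0$ and that $t\mapsto v(\cdot,t)$ is the unique weak solution of the heat equation with form domain $H_0^1(\Omega)$ and initial datum $f$, so that $v = e^{t\Delta_\Omega}f$ with $\Delta_\Omega$ the Dirichlet Laplacian; establishing membership in $H_0^1(\Omega)$ of the probabilistically defined object is exactly where potential-theoretic capacity estimates (the same circle of ideas used elsewhere in the paper) enter. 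A lighter alternative when $\Omega$ is bounded bypasses boundary regularity entirely: expand in Dirichlet eigenfunctions $-\Delta\varphi_k = \lambda_k\varphi_k$, use the strong Markov property at $\tau$ (a Duhamel/last-exit decomposition) to identify the transition density of the killed motion with $\sum_k e^{-\lambda_k t}\varphi_k(x)\varphi_k(y) = p_\Omega(t,x,y)$, and integrate against $f$. Either way, the proof closes with the density argument promised above, extending from $C_c^\infty$ to all $f\in L^2(\Omega)$.
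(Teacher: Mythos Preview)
The paper does not actually prove this proposition; it records the Feynman--Kac theorem as a classical result and refers the reader to \cite{GM2018APDE} for a more detailed discussion. There is therefore no paper proof to compare your attempt against.

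That said, your proposal is a sound outline of a standard argument: establish the full-space case by direct computation with the Gaussian kernel (or equivalently on the Fourier side), then handle the Dirichlet case by identifying the killed Brownian semigroup $Q_t f(x) = \mathbb{E}_x\big(f(\omega(t))\mathbf{1}_{\{\tau>t\}}\big)$ with $e^{t\Delta_\Omega}f$ through a PDE uniqueness argument, with an alternative eigenfunction route for bounded $\Omega$. You correctly isolate the genuinely delicate step---boundary regularity and membership of $v(\cdot,t)$ in $H_0^1(\Omega)$ for general domains---and name the right toolkit (Dirichlet forms, potential-theoretic capacity) to close it. One minor caution: in the eigenfunction alternative you assert the identification of the killed transition density with $\sum_k e^{-\lambda_k t}\varphi_k(x)\varphi_k(y)$ as if it were an input, but this identity is essentially equivalent to the statement being proved, so that route still needs the Dirichlet-form/uniqueness machinery underneath it rather than bypassing it. Otherwise the structure is correct and in line with standard treatments.
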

For a more detailed discussion, see \cite{GM2018APDE}.

\subsection{Isoperimetric and Isocapacitory results}
\label{subsec:isper-vs-isocap-app}

\paragraph{Isoperimetric bounds} Isoperimetric inequalities relating the volume of a set to the surface area of its boundary have given rise to a wealth of results \cite{Burago1988}. Given a set $M$ with boundary $\partial M$, the basic pattern of isoperimetric inequalities is:
\begin{equation} \label{eq:isoperim-ineq}
    \Vol(M) \leq c_1 \, \Area(\partial M)^{\frac{n}{n-1}},
\end{equation}
where $c_1$ is an appropriate positive constant depending on the dimension $n$. In many cases, equality (or saturation in the sense of almost equality) in (\ref{eq:isoperim-ineq}) is characterized by rather special geometry. For example, classical isoperimetric results answer the question, which planar set with a given circumference possesses the largest area, with the answer being the disk. As discussed in the main text, isoperimetric considerations have recently lead to significant insights about decision boundaries of classifiers subject to adversarial defense training mechanisms \cite{Ford2019} by revealing flattening phenomena and relations to robustness.

\paragraph{Isocapacitory bounds} As mentioned in the main text, one can prove types of isocapacitory bounds that resemble the isoperimetric ones: roughly speaking, these replace the area term with suitable Brownian hitting probabilities. We have the following result (cf. also \cite{GM2018APDE}):

\begin{prop} \label{prop:isocap_ineq}
    Let $B(x, r) \subset \mathbb{R}^n, n \geq 3$, and let $E \subset B(x, r)$ denote an ``obstacle'', and consider a Brownian particle started from $x$. Then 
    the relative volume of the obstacle is controlled by the hitting probability of the obstacle:
    \begin{equation}\label{ineq:GS-C}
	    \frac{\Vol(E)}{\Vol(B(x, r))} \leq c_n \left( \psi_{E} (x, t) \right)^{\frac{n}{n-2}}.
    \end{equation}
    Here, $c_n$ is a positive constant 
    whose value is dependent only on $n$ provided the ratio between $r^2$ and $t$ is suitably bounded. In particular, in the regime $r^2 = nt$, we have that 
    $c_n = \left(\Gamma\left(\frac{n}{2} - 1\right)/\Gamma\left(\frac{n}{2} - 1, \frac{n}{4}\right)\right)^{\frac{n}{n-2}}$. Here, $\Gamma (s, x)$ represents the upper incomplete Gamma function 
    \begin{equation*}
        \Gamma (s, x) := \int_{x}^\infty e^{-t}t^{s - 1}\; dt.
    \end{equation*}
\end{prop}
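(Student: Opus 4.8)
\textbf{Proof plan for Proposition \ref{prop:isocap_ineq}.}

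The plan is to derive the inequality from the classical isocapacitory (or ``isoperimetric for capacity'') inequality in potential theory, which bounds the Newtonian capacity of a set from below by a power of its volume, combined with the probabilistic interpretation of hitting probabilities in terms of equilibrium potentials. First I would recall that for a compact obstacle $E$ sitting inside $B(x,r)$, the probability $\psi_E(x,t)$ that Brownian motion from $x$ hits $E$ before time $t$ is bounded below by the probability $\psi_E(x,\infty)$ of ever hitting $E$ minus a tail correction; but more cleanly, one works directly with the truncated (parabolic/heat) capacity or, alternatively, fixes the time scale $t = c_n r^2$ so that with high probability the Brownian motion stays in a controlled region, and then compares $\psi_E(x,t)$ to the Green-capacity of $E$ relative to a slightly larger ball. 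The key classical input is the inequality
\begin{equation*}
    \Vol(E) \leq C_n \, \mathrm{Cap}(E)^{\frac{n}{n-2}},
\end{equation*}
valid in $\mathbb{R}^n$ for $n \geq 3$, where $\mathrm{Cap}$ is the Newtonian capacity; this is a standard consequence of the rearrangement/Polya--Szeg\H{o} circle of ideas, since among all sets of given capacity the ball has the largest volume.

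Next I would connect capacity to the hitting probability. By the standard representation (see e.g. \cite{Morters2010}), the hitting probability of $E$ by Brownian motion started at $x$ equals the equilibrium potential of $E$ evaluated at $x$, and integrating this potential against the equilibrium measure, or evaluating it at the center, gives two-sided bounds of the form $\mathrm{Cap}(E) \asymp r^{n-2} \psi_E(x,t)$ once the running time is comparable to $r^2$. The role of the condition ``the ratio between $r^2$ and $t$ is suitably bounded'' is precisely to make this comparison uniform: if $t$ is too small the Brownian motion cannot reach $E$ and the bound degenerates, while if $t$ is large the heat-capacity converges to the Newtonian capacity. Substituting into the volume--capacity inequality yields $\Vol(E) \leq c_n \bigl(r^{n-2}\psi_E(x,t)\bigr)^{\frac{n}{n-2}} = c_n \, r^n \, \psi_E(x,t)^{\frac{n}{n-2}}$, and dividing by $\Vol(B(x,r)) = \omega_n r^n$ absorbs the $r^n$ factor, giving (\ref{ineq:GS-C}) with a purely dimensional constant.

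For the explicit constant in the regime $r^2 = nt$, I would track the sharp case. Equality in the volume--capacity inequality is attained when $E$ is a ball, so I would compute both sides for $E = B(x,\rho)$ and optimize, or more directly compute the hitting probability of a centered ball by a Brownian motion run for the prescribed time. This is where the incomplete Gamma function enters: the hitting probability of a small centered ball, or equivalently the relevant Green's-function ratio, is expressible via $\int_{n/4}^\infty e^{-u} u^{n/2-2}\,du = \Gamma(n/2-1, n/4)$ against the full $\Gamma(n/2-1)$, coming from the radial heat kernel $\frac{1}{(4\pi t)^{n/2}} e^{-|y|^2/4t}$ integrated over the ball of radius $r = \sqrt{nt}$ after the substitution $u = |y|^2/(4t)$. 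Carrying the sharp constant of the classical inequality through the substitution produces exactly $c_n = \bigl(\Gamma(n/2-1)/\Gamma(n/2-1, n/4)\bigr)^{n/(n-2)}$.

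The main obstacle I anticipate is making the comparison between the \emph{time-truncated} hitting probability $\psi_E(x,t)$ and the \emph{stationary} capacity quantitatively sharp enough to extract the precise constant, rather than just an order-of-magnitude bound; this requires carefully choosing the coupling between $r$ and $t$ and using the explicit Gaussian heat kernel rather than abstract capacity estimates. A secondary technical point is handling obstacles $E$ that are not nicely shaped (non-smooth, disconnected) — here one reduces to the general case by monotonicity of both volume and capacity/hitting probability under inclusion and by approximating $E$ from outside by regular sets, so no regularity hypothesis on $E$ is actually needed beyond measurability.
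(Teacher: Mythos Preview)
Your approach is essentially the paper's: the classical isocapacitory inequality $\Capa(E)\gtrsim |E|^{(n-2)/n}$ combined with a lower bound on $\psi_E(x,t)$ in terms of $\Capa(E)$. The paper makes the second step concrete via Grigor'yan's estimate $\psi_E(x,t)\geq \Capa(E)\int_0^t \inf_{y\in\partial E} p(s,x,y)\,ds$, and then uses $|x-y|\leq r$ to replace the infimum by $e^{-r^2/4s}$; this is exactly the ``quantitatively sharp'' comparison you flag as the main obstacle, and it resolves it without passing through equilibrium potentials or an extremal-ball argument. One correction on the constant: the incomplete Gamma function arises from the \emph{time} integral $\int_0^t (4\pi s)^{-n/2}e^{-r^2/4s}\,ds$ after substituting $z=r^2/4s$ (giving lower limit $r^2/4t=n/4$ when $r^2=nt$), not from a spatial integral of the heat kernel over the ball, and the explicit $c_n$ is read off directly from tracking constants through the general chain rather than by evaluating a sharp case.
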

\begin{proof} 
Recall that the capacity (or more formally, the $2$-capacity) of a set $K \subset \mathbb{R}^n$ defined as 
\begin{equation}\label{eq:cap_def}
    \Capa(K) = \inf_{\eta|_K \equiv 1, \eta \in C_c^\infty(\mathbb{R}^n)} \int_{\mathbb{R}^n} |\nabla \eta|^2.
\end{equation}
From Section 2.2.3, \cite{Mazya2011}, we have the following ``isocapacitory inequality'':
\begin{equation}
    \Capa (E) \geq \omega_n^{2/n} n^{\frac{n - 2}{n}} (n - 2) |E|^{\frac{n - 2}{n}}, 
\end{equation}
where $\omega_n = \frac{2\pi^{n/2}}{\Gamma\left(\frac{n}{2}\right)}$ is the $(n - 1)$-dimensional surface area of $S^{n - 1}$. 
Now, we bring in the following estimate given by Theorem 3.7 of \cite{GrigorYan2002}:
\begin{equation}
    \psi_E(x, t) \geq \Capa(E)\int_0^t \inf_{y \in \partial E} p(s, x, y)\; ds.
\end{equation}
Now, we have
\begin{align*}
\psi_{E}(x, t) & \geq \omega_n^{2/n} n^{\frac{n - 2}{n}} (n - 2) |E|^{\frac{n - 2}{n}} \int_0^t \frac{1}{\left(4\pi s\right)^{n/2}} \inf_{y \in \partial E} e^{-\frac{|x - y|^2}{4s}} \; ds\\
& \geq \omega_n^{2/n} n^{\frac{n - 2}{n}} (n - 2) |E|^{\frac{n - 2}{n}} \int_0^t \frac{1}{\left(4\pi s\right)^{n/2}} e^{-\frac{r^2}{4s}}\; ds\\
& = \omega_n^{2/n} n^{\frac{n - 2}{n}} (n - 2) |E|^{\frac{n - 2}{n}}\frac{1}{4r^{n -2} \pi^{n/2}}  \int^\infty_{\frac{r^2}{4t}} e^{-z}z^{n/2 - 2}\; dz.
\end{align*}
After rearrangement the proposed claim follows. 
\end{proof}

Intuitively, it makes sense that if the volume of a set is fixed, one can increase its hitting probability by ``hammering'' the set into a large thin sheet. However, it seems unlikely that after lumping the set together (as in a ball), one can reduce capacity/hitting probability any further. Moreover, isocapacitory bounds are saturated by the $n$-ball.

It is also illustrative to compare the seemingly allied concepts of capacity and surface area. A main difference of capacity with surface area is the interaction of capacity with hitting probabilities. As an illustrative example, think of a book which is open at an angle of $180^\circ, 90^\circ, 45^\circ$ respectively. Clearly, all three have the same surface area, but the probability of a Brownian particle striking them goes from the highest to the lowest in the three cases respectively. It is rather difficult to make the heuristic precise in terms of capacity (at least from the definition). Capacity can be thought of as a soft measure of how "spread out" or "opened-up" a surface is, and is highly dependent on how the surface is embedded in the ambient space. 

\begin{figure}
    \centering
    \includegraphics[width=0.9\linewidth]{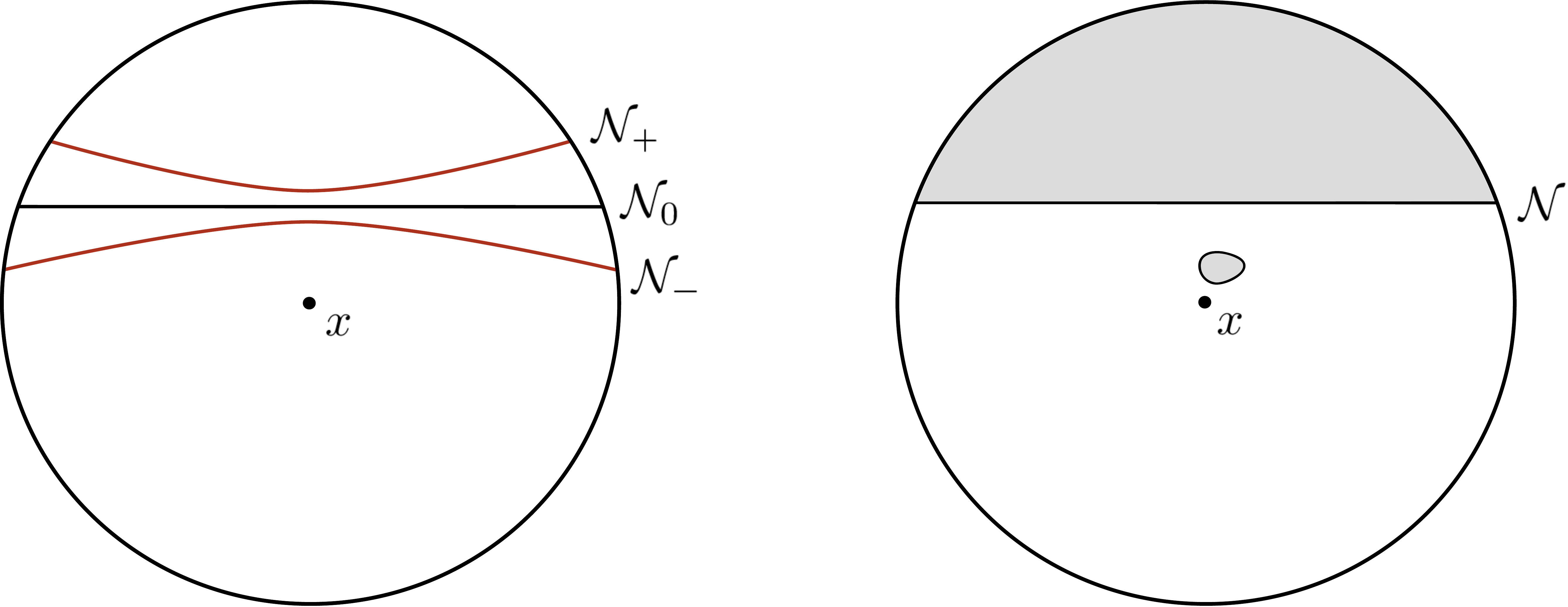}
    \caption{Examples illustrating the interplay between isoperimetric and isocapacitory saturation in high dimensions. \textit{(Left)} Slightly bending a flat decision boundary $\mathcal{N}_0$ causes significant changes in $\tau$ with the isoperimetric inequality still being very close to optimal: $\mathcal{N}_+$ (resp. $\mathcal{N}_-$) leads to a increase (resp. decrease) in $\tau$ (cf. also Fig. \ref{fig:isobound-model-case}). \textit{(Right)} Small "pockets" near the data sample $x$ can also cause large Brownian hitting probabilities (hence, large $\tau$ values) with still well-saturated isoperimetric bounds.}
    \label{fig:isobounds}
\end{figure}

\begin{figure}
    \centering
    \includegraphics[width=0.7\linewidth]{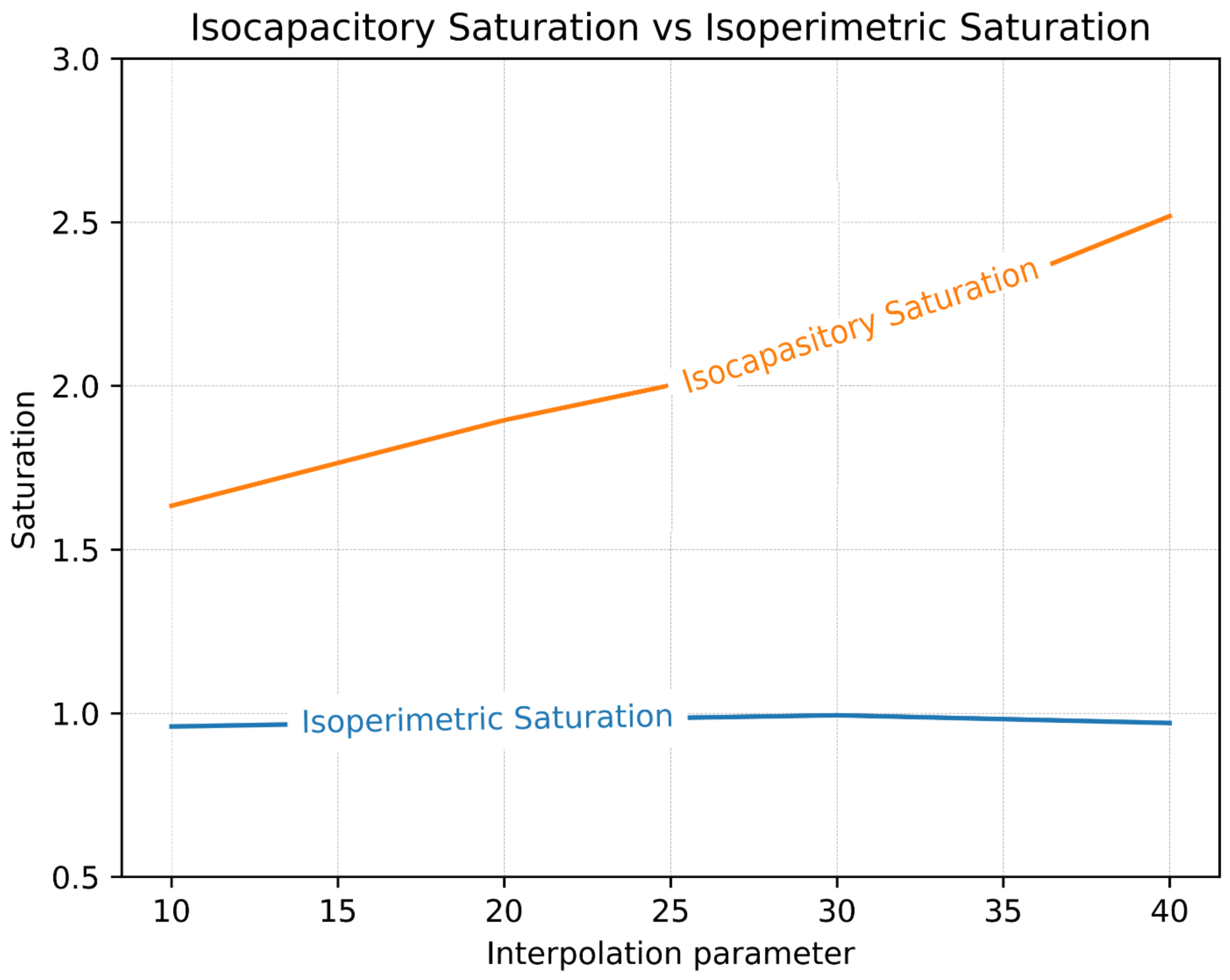}
    \caption{A continuation on Fig. \ref{fig:isobounds}: Isocapacitory and isoperimetric saturation while slightly bending the decision boundary ($\mathcal{N}_{-}$ and $\mathcal{N}_{+}$ in Fig. \ref{fig:isobounds}). In this plot the decision boundary $\mathcal{N}_{-}, \mathcal{N}_{+}$ is a cap of a larger sphere with radius $R $ (set initially to $15 r$) in dimension $3072$ (corresponding to CIFAR10).
    We interpolate between $\mathcal{N}_{-}$ and $\mathcal{N}_{+}$: first, by increasing the radius $R$, $\mathcal{N}_{-}$ converges to the flat $\mathcal{N}_0$ and, similarly, starting from $\mathcal{N}_0$ we decrease $R$ to get to $\mathcal{N}_+$. Along this interpolation process, we plot the graphs of the isocapacitory and isoperimetric saturation. In particular, we observe at least $96\%$ saturation of the isoperimetric bound whereas the isocapacitory bounds shows a much more sensitive behaviour on this scale.}
    \label{fig:isobound-model-case}
\end{figure}

\paragraph{Isocapacitory vs isoperimetric saturation} A main line of analysis in the present work addresses the interplay between isocapacitory and isoperimetric saturation. In our particular context of defense training mechanisms we observe saturation of isoperimetric bounds for the classifier's decision boundaries - this implies that decision boundaries are not far from being flat. However, as mentioned before, it turns out that isocapacitory saturation does not concentrate around the values corresponding to hyperplanes (overall, it seems to stay well below that value). In this sense, isocapacitory saturation acts as a finer sensitive measure of deviation from flatness. A simple model geometric scenario that provides similar behaviour is illustrated in Fig. \ref{fig:isobounds} and Fig. \ref{fig:isobound-model-case}.

\subsection{Model Cases}\label{subsec:mod_case}
We first begin with the proof of Lemma \ref{lem:half-space-hit-prob}.
\begin{proof}
Let us select an orthonormal basis $\{ e_1, \dots, e_n \}$ so that $e_1$ coincides with the given hyperplane's normal vector. A standard fact about $n$-dimensional Brownian motion is that the projections on the coordinate axes are again one-dimensional Brownian motions \cite{Morters2010}. Thus, projecting the $n$-dimensional Brownian motion onto $e_1$ the hitting probability of the hyperplane is the same as the probability that one-dimensional Brownian motion $\omega(t)$ will pass a certain threshold $d$ by time $t$. To compute this probability we use the reflection principle (\ref{eq:reflection_principle}) 
in conjunction with Remark \ref{lem:bm-distrib}. Consequently, the RHS is equal to $2 \Phi(- d / \sqrt{t})$. The computation of $\mu(x,  r)$ follows by definition.
\end{proof}

Here we note that the dimension $n$ enters only in terms of the spherical cap volume. An impression how $\tau$ behaves for different choices of $n$ in terms of the distance $d$ is given in Fig. \ref{fig:hyperplanes}. In particular, one observes the well-known \textit{concentration of measure} phenomenon and Levy's lemma: the volume of the spherical cap exhibits a very rapid decay as $n$ becomes large. Moreover, experiments reveal a curious phenomenon: there is a threshold distance $d_0$ until which $\tau \approx 2$ and afterwards $\tau$ explodes.

In Fig. \ref{fig:cone-wedge} we plot further interesting model cases where the error set forms a wedge (the region between two intersecting hyperplanes) or a cone.

\begin{figure}
    \centering
    \includegraphics[width=0.9\linewidth]{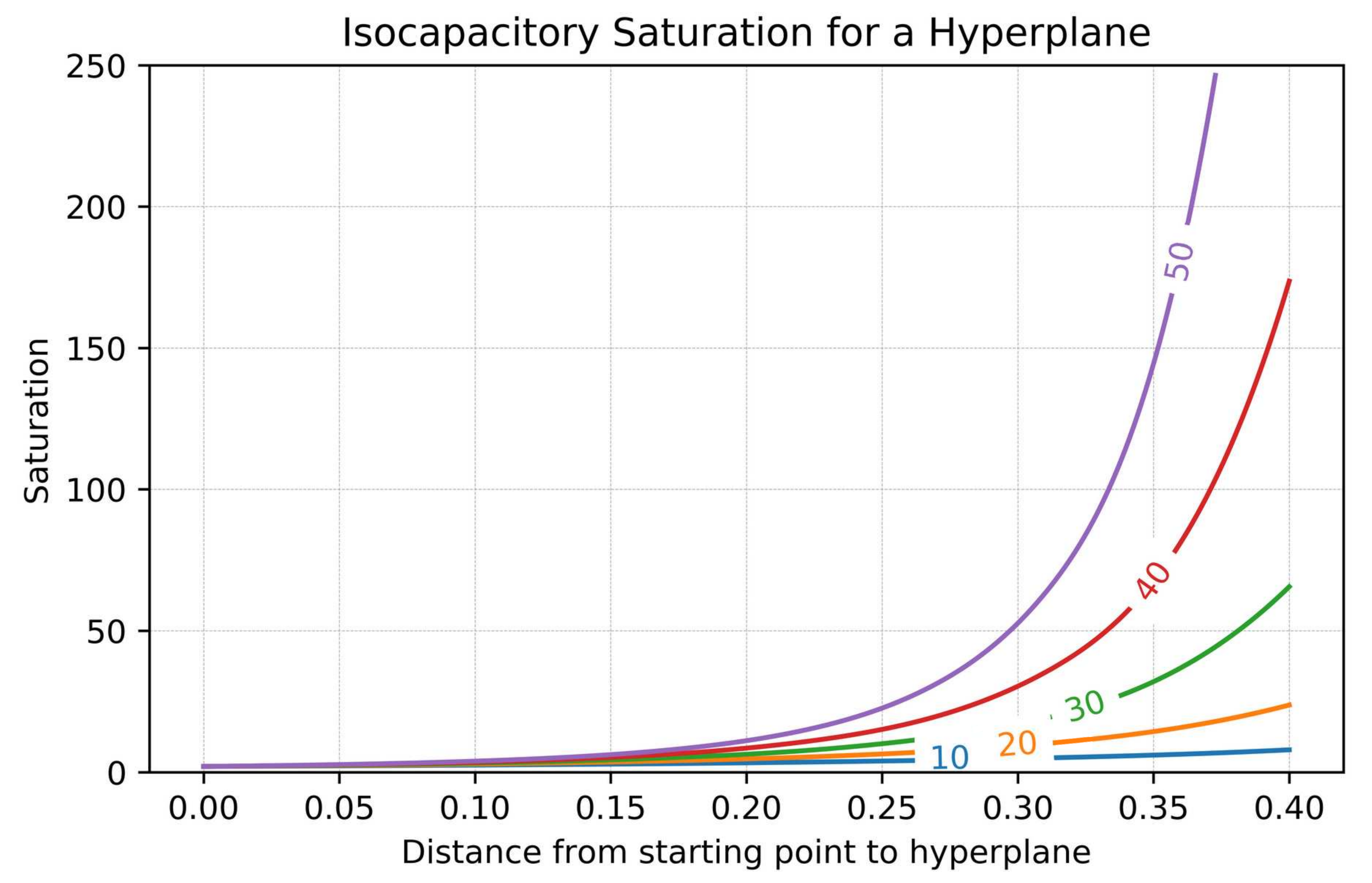}
    \caption{The isocapacitory saturation $\tau$ of a flat error set. Given a point $x$, the computation takes place in $B(x, r)$ with $r = 1$. The distance to the flat decision hyperplane is given on the x-axis, while the y-axis gives $\tau$. Curve labeling indicates the respective dimension. There appears to be a threshold dividing between the regimes $\tau \approx 2$ and $\tau \rightarrow \infty$.}
    \label{fig:hyperplanes}
\end{figure}

\begin{figure}
    \centering
    \includegraphics[width=1.0\linewidth]{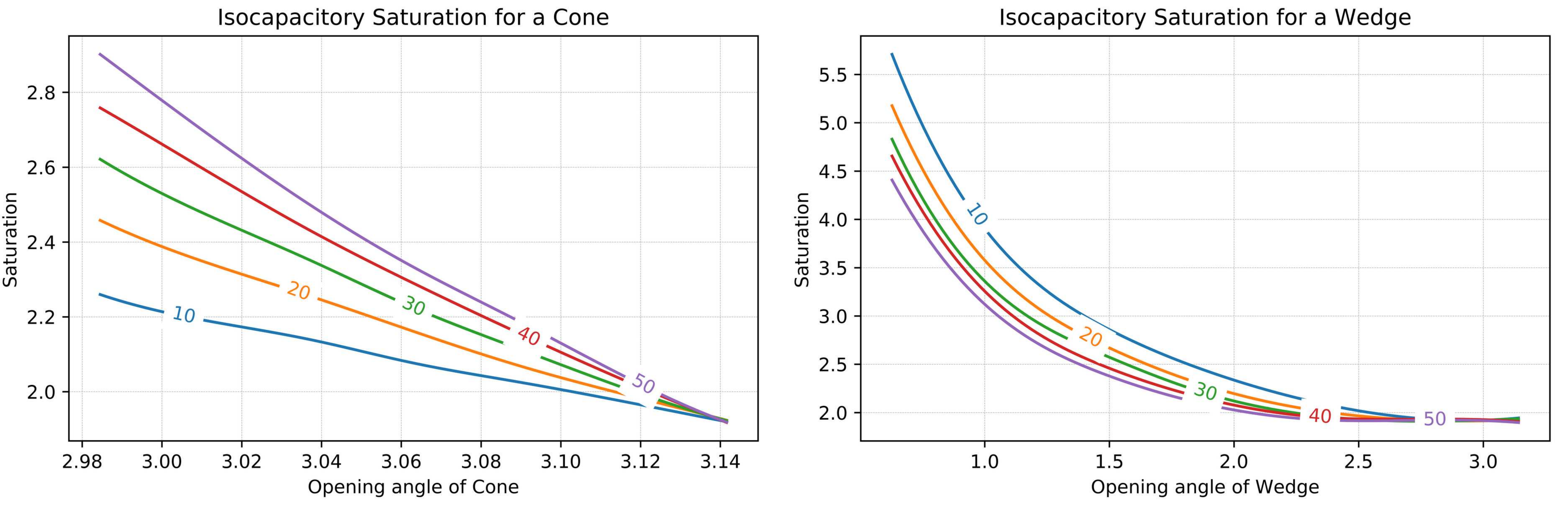}
    \caption{Further model cases and plots of the isocapacitory saturation $\tau$. \textit{(Left)} Isocapacitory saturation of cone in terms of the opening angle (radians). \textit{(Right)} Isocapacitory saturation of wedge in terms of the opening angle (radians). Curve labels indicate the respective dimension. Again one observes concentration of measure as the volume of the cone decreases to $0$ exponentially fast in terms of the dimension $n$: this is why we plot the opening angle around $\pi$ in this case. Furthermore, cones and wedges with an opening angle of almost $\pi$ behave like hyperplanes in terms of saturation.}
    \label{fig:cone-wedge}
\end{figure}

\paragraph{Spiky sets} As discussed in the main text, one observes a high isocapacitory saturation $\tau$ for the so-called "spiky" sets - these are sets of relatively small volume and relatively large/dense boundary. Theoretically, a guiding model case in this direction is given by Lemma \ref{lem:spiky-cylinder} in the main text, whose proof we now record.

\begin{proof}
Let $T_\rho$ denote the $\rho$- tubular neighborhood of a line segment of length $h$ inside $\mathbb{R}^n$. Clearly, $T_\rho \cong B(0, \rho) \times [0, h]$, where $B(0, r)$ is a $\rho$-ball inside $\mathbb{R}^{n - 1}$. 

By the well-known process of Steiner symmetrization in  $\mathbb{R}^n$, it is clear that the expression for capacity in (\ref{eq:cap_def}) will be minimized by a function that is ``radially symmetric'' around the central axis of the tube $T_\rho$, that is $f (x, y) = f(|x|)$, where $x \in B(0, \rho), y \in [0, h]$. Then, as we scale $\rho \to \lambda \rho$, where $\lambda \searrow 0$, $\Capa \left(T_{\lambda \rho}\right) \sim \lambda^{n - 3}\Capa \left(T_\rho\right)$ (which is seen directly from the definition (\ref{eq:cap_def})), whereas the volume scales as $\left|T_{\lambda \rho}\right| = \lambda^{n - 1}\left| T_\rho\right|$. 

Now assume that the cylinder $T_\rho$ is inside the closed ball $\overline{B(x, r)} \subset \mathbb{R}^n$, the central axis of $T_\rho$ is pointing towards $x$, and $T_\rho$ is touching the boundary of $B(x, r)$. To pass from capacity to hitting probability of the set $T_\rho$, we use that \cite{GrigorYan2002}:
\begin{equation}
	\frac{\Capa(T_\rho)r^2}{\Vol(B(x, r))}e^{-C\frac{r^2}{t}} \leq \psi_{T_\rho}(x, t).
\end{equation}  

Finally, using the definition of $\tau$ and putting the above estimates together, one sees that in the time regime of  $O(r^2)$, $\tau$ scales like $\lambda^{-2/(n - 2)}$, and hence, $\tau \nearrow \infty$ as $\lambda \searrow 0$.

\end{proof}

See also Figure 8 for a visual discussion of the isocapacitory saturation for the model cases of wedges and cones.

\begin{figure}
    \centering
    \includegraphics[width=0.45\linewidth]{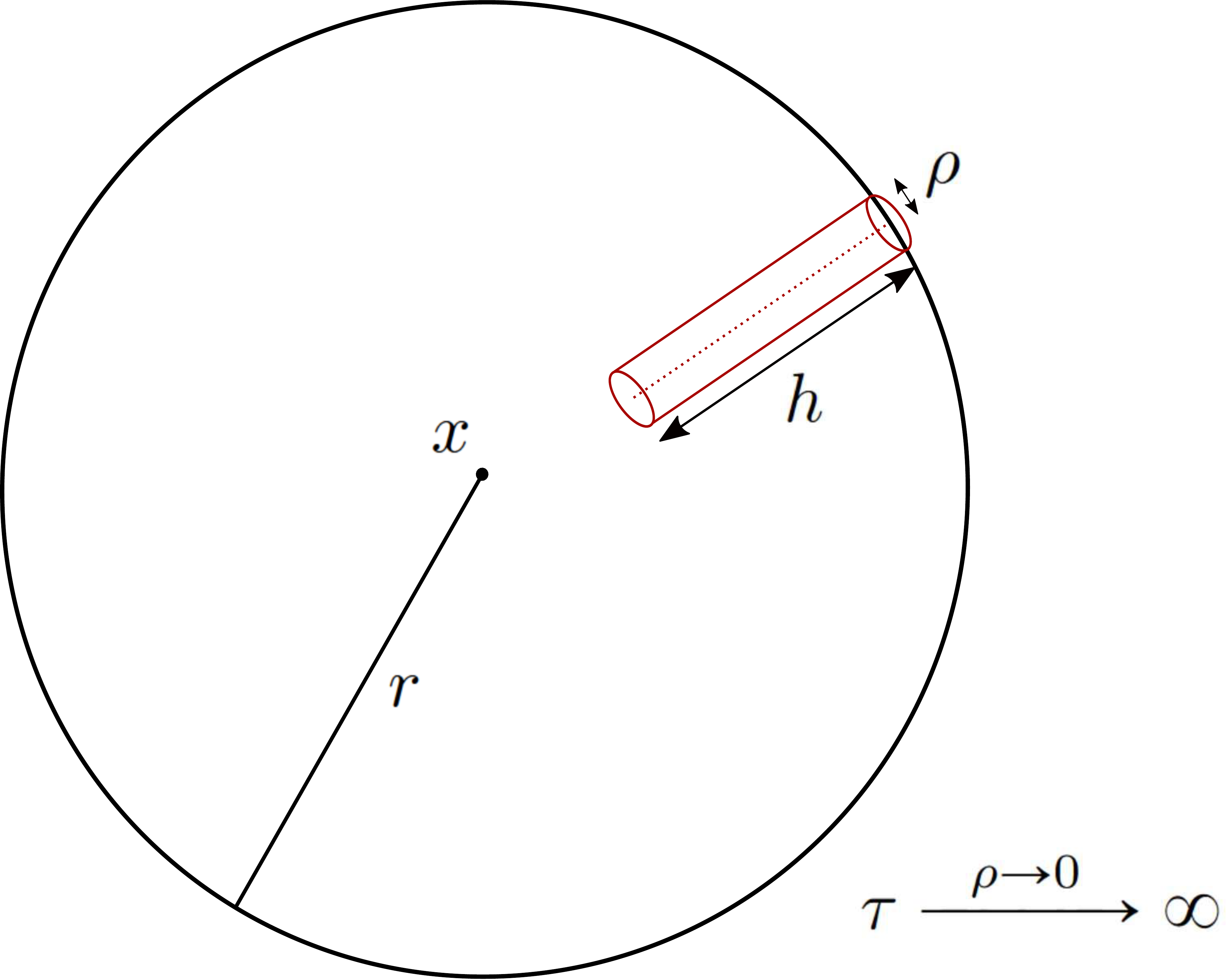}
    \caption{Cylindrical "spike" of height $h$ and radius $\rho$ inside the ball $B(x, r)$.}
    \label{fig:mayukh_figure}
\end{figure}
\subsection{Curvature estimates in terms of isocapacitory saturation}\label{subsec_curv_1}

The geometric concept of curvature has a rich history and plays a central role in differential geometry and geometric analysis. There are several notions of curvature in the literature, ranging from intrinsic notions like sectional, Ricci or scalar curvatures to extrinsic (that is, dependent on the embedding) notions like principal curvatures and mean curvature, which are encoded in the second fundamental form. In this note we use a somewhat ``soft'' definition of curvature, following previous work \cite{Fawzi2016NIPS, Dezfooli2018}. Suppose the decision boundary $\mathcal{N}_f$ is sufficiently regular ($C^2$ is enough for our purpose) and it separates $\mathbb{R}^n$ into two components $\mathcal{R}_1 := \{ f > 0\}$ and $\mathcal{R}_2 := \{ f < 0\}$, corresponding to a binary classification (the construction in the multi-label case is analogous). For a given $ p \in \mathcal{N}_f$, let $r_{j}(p)$ denote the radius of the largest sphere that is tangent to $\mathcal{N}_f$ at $p$, and fully contained in $\mathcal{R}_j$. Then, one defines the curvature $\kappa$ at $p$ as
\begin{equation}\label{eq:def_curv}
    \kappa(p) = 1/\min\left( r_1(p), r_2(p)\right).
\end{equation}
See Fig. \ref{fig:curvature_def} for a geometric illustration. 
However, it turns out that most notions of curvature are quite subtle (see \cite{Fawzi2016NIPS}) and at this point, seemingly more cumbersome and intractable to handle experimentally. We will take an indirect approach, and attempt to read off the effect of and on curvature via the isocapacitory saturation $\tau$.

\begin{figure}
    \centering
    \includegraphics[width=0.45\linewidth]{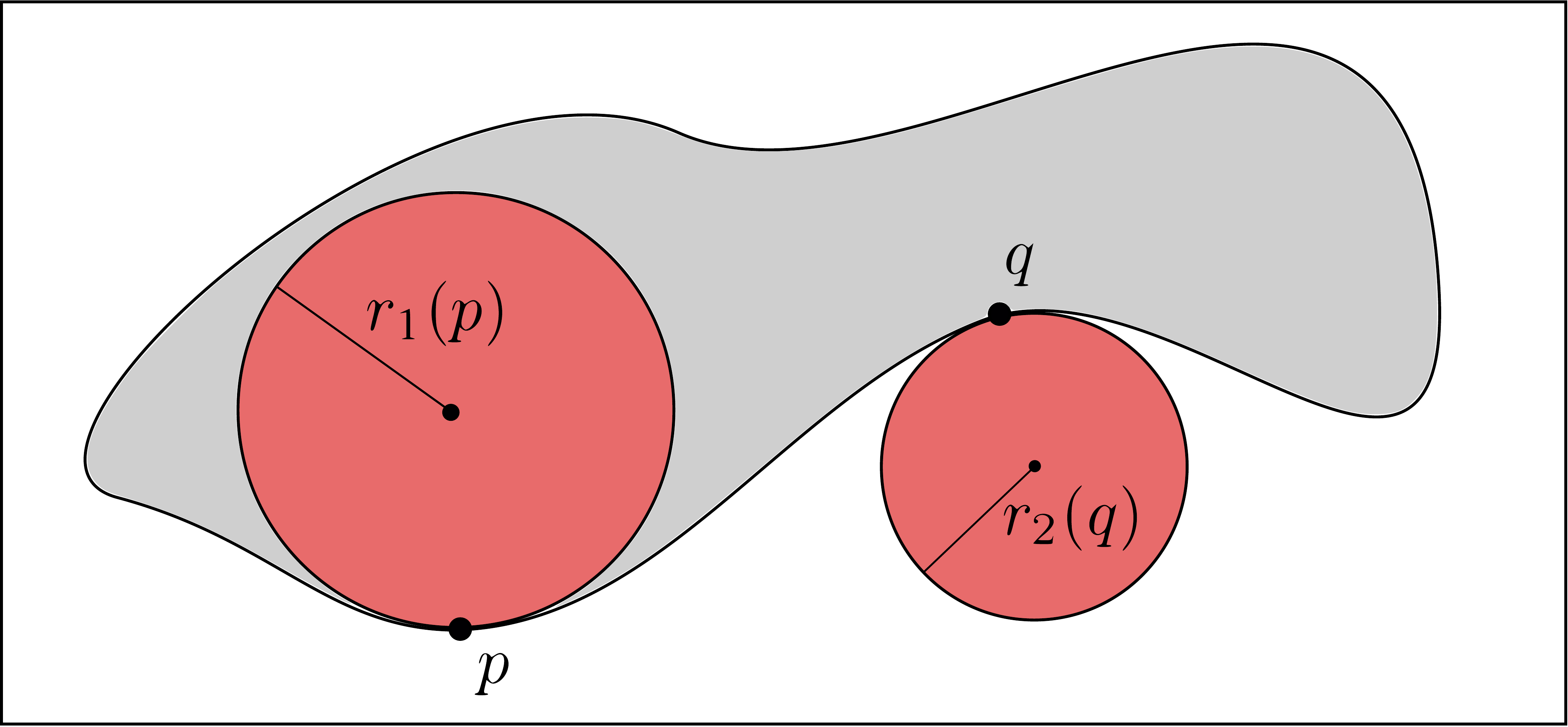}
    \caption{``Soft'' definition of curvature given by the inverse radius of the osculating sphere.}
    \label{fig:curvature_def}
\end{figure}

Again, we begin with the model cases: we first study the behaviour of curvature $\kappa$ if $\tau$ achieves its least possible value. We start by fixing some notation. As before let us consider a ball $B(x, r)$ with an error set $E \subset B(x, r)$ and boundary $\mathcal{N} = \partial E$ (clearly our main case of interest is $E = E(y)  \cap B(x, r)$). Let us denote the  the distance $d = d(x, \mathcal{N})$ and suppose the point $y \in \mathcal{N}$ realizes this distance, i.e. $d(x, y) = d$. To rule out some degenerate cases and ease the analysis we introduce the following assumption:

{\bf Assumption}: The hypersurface $\mathcal{N}$ and the point $x$ are on different sides of the tangent hyperplane $H^* := T_y \mathcal{N}$ (cf. Fig. \ref{fig:moving_chunk}).

This assumption is also technically important, as otherwise low values of $\tau$ will be produced by annuli surrounding $x$. With that in place, we have the following rigidity result:

\begin{prop}
\label{prop:curvature-rigidity}
    Let us fix the distance $d = d(x, \mathcal{N})$ and suppose the assumption above holds. Then the least possible value of $\tau$ is attained only if the curvature $\kappa$ of the hypersurface $\mathcal{N}$ is $0$.
\end{prop}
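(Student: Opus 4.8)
The plan is to reduce the rigidity statement to the isocapacitory inequality (\ref{ineq:GS-C}) together with its saturation case (the $n$-ball), via a volume comparison argument that holds the distance $d=d(x,\mathcal N)$ fixed. The key quantity is $\tau(x,r)=\psi_{E}(x,t)^{n/(n-2)}/\mu(x,r)$ with $r^2=nt$; since the numerator $\psi_E(x,t)$ depends on the geometry of $E$ only through how "visible" $E$ is from $x$, the strategy is to compare, among all admissible obstacles $E$ with $d(x,\partial E)=d$ and satisfying the Assumption, the configuration that jointly minimizes $\tau$. The main point is that $\tau$ is minimized by making $E$ as "lumped together" and as "flat near $y$" as possible, and one then shows these two pressures are simultaneously met only by a half-space (equivalently, $\kappa=0$).

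First I would fix the geometric frame: put $y$ at the point of $\mathcal N$ realizing $d(x,y)=d$, let $H^*=T_y\mathcal N$ be the tangent hyperplane, and record that by the Assumption $x$ and $\mathcal N$ lie on opposite sides of $H^*$, so that $E$ is contained in the closed half-space $H^*_-$ not containing $x$ and the segment $[x,y]$ meets $E$ only at $y$. Next I would invoke the isocapacitory inequality $\mu(x,r)\le c_n\,\psi_E(x,t)^{n/(n-2)}$ from Proposition~\ref{prop:isocap_ineq}, i.e. $\tau(x,r)\ge 1/c_n$, with equality (saturation) characterized by $E$ being a ball. So the minimal value of $\tau$ over all obstacles is $1/c_n$, attained by balls. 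Now comes the constraint: among balls $E=B(z,\rho)$ with $d(x,\partial E)=d$ fixed and satisfying the Assumption, I would let $\rho\nearrow\infty$; the tangent hyperplane at the near point $y$ stays at distance $d$ from $x$, and $B(z,\rho)$ converges (locally near $y$, in the region that matters for the Brownian hitting probability started at $x$) to the half-space $H^*_-$. A continuity/monotonicity argument for $\psi$ and $\mu$ under this limit — using the scale-invariance of $\psi$ in the regime $r^2=nt$ and dominated convergence for the hitting probability — shows $\tau$ decreases monotonically toward the half-space value as $\rho\to\infty$, and that for any finite $\rho$ the saturation bound $\tau=1/c_n$ is in fact \emph{not} attained (a genuine ball of finite radius tangent from one side to $H^*$ has strictly larger relative volume inside $B(x,r)$ relative to its hitting probability than the limiting half-space does, because curving the boundary away from $x$ while keeping $d$ fixed only shrinks the "shadow" of $E$ on $\partial B(x,r)$). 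Hence the infimal value of $\tau$ compatible with the distance constraint and the Assumption is the half-space value, attained only in the limit $\rho=\infty$, i.e. when $\mathcal N$ is flat at $y$, which is exactly $\kappa(y)=0$ by the definition (\ref{eq:def_curv}) (both osculating radii $r_1,r_2$ become infinite).

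To make "only if $\kappa=0$" rigorous rather than a limiting statement, I would argue contrapositively and locally: if $\kappa(y)>0$, then one of the osculating spheres, say the one in $\mathcal R_j\ni$ (the side containing $E$), has finite radius $r_j(y)=1/\kappa(y)$ at $y$; comparing $E$ against the half-space $H^*_-$ through a one-sided perturbation supported near $y$, the curvature strictly increases $\Vol(E\cap B(x,r))$ at second order in the size of the perturbation while changing $\psi_E(x,t)$ only at a strictly smaller order (a Brownian particle from $x$ sees essentially the same "first obstacle" $H^*$; the curvature correction to the hitting probability is higher order because the relevant part of $\mathcal N$ is at distance $\sim d$ and the deviation from $H^*$ is quadratic in the transverse coordinate). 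Therefore $\mu$ strictly increases relative to $\psi^{n/(n-2)}$, so $\tau$ strictly decreases, contradicting minimality. I would package the second-order expansions using the standard local graph representation $\mathcal N=\{x_1 = d + \tfrac12\sum \kappa_i x_i'^2 + o(|x'|^2)\}$ over $H^*$ and the Gaussian isoperimetric/half-space computation from Lemma~\ref{lem:half-space-hit-prob} as the reference point.

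The main obstacle I expect is the second bullet: quantifying precisely that curving $\mathcal N$ changes the hitting probability $\psi_E(x,t)$ only to \emph{lower} order than it changes the relative volume $\mu(x,r)$, while $d$ is held fixed. Unlike the volume, which is a benign local integral and expands cleanly via the second fundamental form, the hitting probability is a nonlocal functional of the whole obstacle, and one must control how the first-passage event "the Brownian path from $x$ reaches $E$" responds to a small normal deformation of $\partial E$ concentrated at distance $d$. I would handle this by a coupling/reflection argument in the single coordinate normal to $H^*$ — reducing, as in the proof of Lemma~\ref{lem:half-space-hit-prob}, to one-dimensional Brownian motion hitting a moving barrier — and estimating the difference between hitting a flat barrier and a slightly bent one by a Taylor expansion of $2\Phi(-\cdot/\sqrt t)$ together with a bound on the transverse excursion of $n$-dimensional Brownian motion up to the (first) hitting time, which is $O(\sqrt{t})=O(r/\sqrt n)$; the curvature enters the barrier displacement at order $\kappa\cdot(\text{transverse excursion})^2$, which is the desired higher order. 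Making this excursion bound and the resulting error term uniform is the technical crux; everything else is comparison geometry and the already-established isocapacitory inequality.
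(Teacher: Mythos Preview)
Your proposal has two genuine gaps, and both stem from trying to compare how $\psi$ and $\mu$ respond \emph{separately} to curving the boundary, whereas the paper sidesteps this comparison entirely.

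First, the appeal to the isocapacitory saturation case (balls) does not mesh with the distance constraint. You correctly note that balls at distance $d$ from $x$ satisfy the Assumption, but then you claim that for such balls the bound $\tau=1/c_n$ is \emph{not} attained and that $\tau$ decreases monotonically to the half-space value as $\rho\to\infty$. These two claims are in tension with the statement that balls saturate the isocapacitory inequality, and in any case the limiting argument yields only that the half-space is an infimum, not a rigid minimizer. Second, and more seriously, your contrapositive perturbation argument has a sign error and an unjustified order claim. Under the Assumption, $E$ lies entirely in the half-space $H^*_-$, so bending $\mathcal N$ away from $x$ (introducing $\kappa>0$) \emph{decreases} $\Vol(E\cap B(x,r))$, not increases it. With the sign corrected, you would need that $\mu$ decreases proportionally more than $\psi^{n/(n-2)}$; but your heuristic that ``the curvature correction to $\psi$ is higher order because the deviation from $H^*$ is quadratic in the transverse coordinate'' conflates the order in the transverse variable with the order in the perturbation parameter $\kappa$. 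Both $\Delta\mu$ and $\Delta\psi$ are first order in $\kappa$: the volume change is $\sim\kappa\int|x'|^2\,dx'$, and the hitting-probability change is $\sim\kappa\cdot(\text{typical transverse excursion})^2\cdot\Phi'$, also linear in $\kappa$. Deciding which wins would require the exact constants, which is precisely the delicate comparison you flag as the main obstacle --- and your proposed coupling/reflection resolution does not deliver the claimed order separation.

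The paper's proof avoids this comparison altogether via a \emph{volume-preserving} rearrangement: if $\mathcal N\neq H^*$, one excises a small piece of $E$ near $x$ and reattaches (a congruent copy of) it farther away, keeping $|E|$ fixed. Since the heat kernel $p(t,x,\cdot)$ is strictly radially decreasing, this strictly lowers $\psi_E(x,t)=\int_0^T\!\int_E p(t,x,y)\,dy\,dt$ while $\mu$ is unchanged, so $\tau$ strictly decreases, contradicting minimality. The entire argument is a monotonicity of the heat kernel plus a cut-and-paste; no second-order expansion or order-of-magnitude race is needed.
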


\begin{proof}
    As above let $H^*$ be the tangent hyperplane at distance $d$ from $x$, and let $C$ denote the (smaller) spherical cap formed by $H^* \cap B(x, r)$. The proof relies on the following variational argument. If $\mathcal{N}$ is not the same as $H^*$, then $\mathcal{N} \subseteq C$, with $y \in \mathcal{N} \cap H^*$. We wish to argue then one can perturb $\mathcal{N}$ infinitesimally to decrease the value of $\tau$, so the only minimizer of the above expression has to be $H^*$. The basic idea is to cut out a small piece $p_v$ around $v$ and paste it in the region of around $\Tilde{v}$ (Fig. \ref{fig:moving_chunk}).
    
    We say that $\mathcal{N}$ has positive curvature at some point $z$ if the ball defining the curvature at $z$ and the point $x$ lie on different sides of $\mathcal{N}$. The construction is as follows. Let $S(x, s)$ be the $(n - 1)$-sphere centered at $x$ with radius $s$. We consider two cases:
    
    \textbf{Case I}: Let us suppose that there exist $s_1 < s_2 \leq r$ and points $v, \Tilde{v} \in \mathcal{N}$ such that the curvature of $\mathcal{N}$ at $v \in \mathcal{N} \cap S(x, s_1)$ is greater than the curvature at $\Tilde{v} \in \mathcal{N} \cap S(x, s_2)$. Let us, moreover, choose the infimum among such $s_1$ and the supremum among such $s_2$.
    
    To define the mentioned piece $p_v$, we consider two small balls $B(v, \varepsilon), B(\Tilde{v}, \varepsilon)$ (where $\varepsilon \ll s_2 - s_1$), and cut out a set $p_v = E \cap B(v, \varepsilon)$ such that $\partial \left( E \setminus B(v, \varepsilon) \right)$ is congruent to $ \mathcal{N} \cap  B(\Tilde{v}, \varepsilon)$ (this is possible due to the curvature assumptions at $v, \Tilde{v}$). Then, we define the new error set $E' = E \cup p_{\Tilde{v}} \setminus p_v$ and the boundary $\mathcal{N}' = \partial E'$, where $p_{\Tilde{v}}$ represents the image of $p_v$ under the rigid motion and attached inside $B(\Tilde{v}, \varepsilon)$ (see Fig. \ref{fig:moving_chunk}). It is now clear that $|E| = |E'|$, but $\psi_{E'} (x, T) < \psi_E (x, T)$ for all $T > 0$. The last inequality follows from the evaluation of the explicit heat kernel that defines hitting probability $\psi$ as stated by Feynman-Kac duality: 
    \begin{align*}
        \psi_{E}(x, T) & = \int_0^{T}\int_{E}\frac{1}{(4\pi t)^{n/2}}e^{- \frac{(x - y)^2}{4t}} \; dy \;dt\\
        & > \int_0^{T}\int_{E'}\frac{1}{(4\pi t)^{n/2}}e^{- \frac{(x - y)^2}{4t}} \; dy \;dt = \psi_{E'}(x, T).
    \end{align*}
    
    It follows from the definition of $\tau$ that $\tau_{E} \geq \tau_{E'}$.
    
    \textbf{Case II}: If Case I is not satisfied, then, similarly, we choose two points $v, \Tilde{v}$, but instead of defining the piece $p_v$ by intersection with a small ball around $v$ we select $p_v$ as a ``concavo-convex lens shape'' domain, where the curvature on the concave ``inner side'' of $p_v$ of the lens is greater than that on the convex outer side. As before, we attach a rigid motion image of $p_v$ inside $B(\Tilde{v}, \varepsilon)$. The rest of the argument is similar to Case I.
\end{proof}

\begin{figure}
    \centering
    \includegraphics[width=0.45\linewidth]{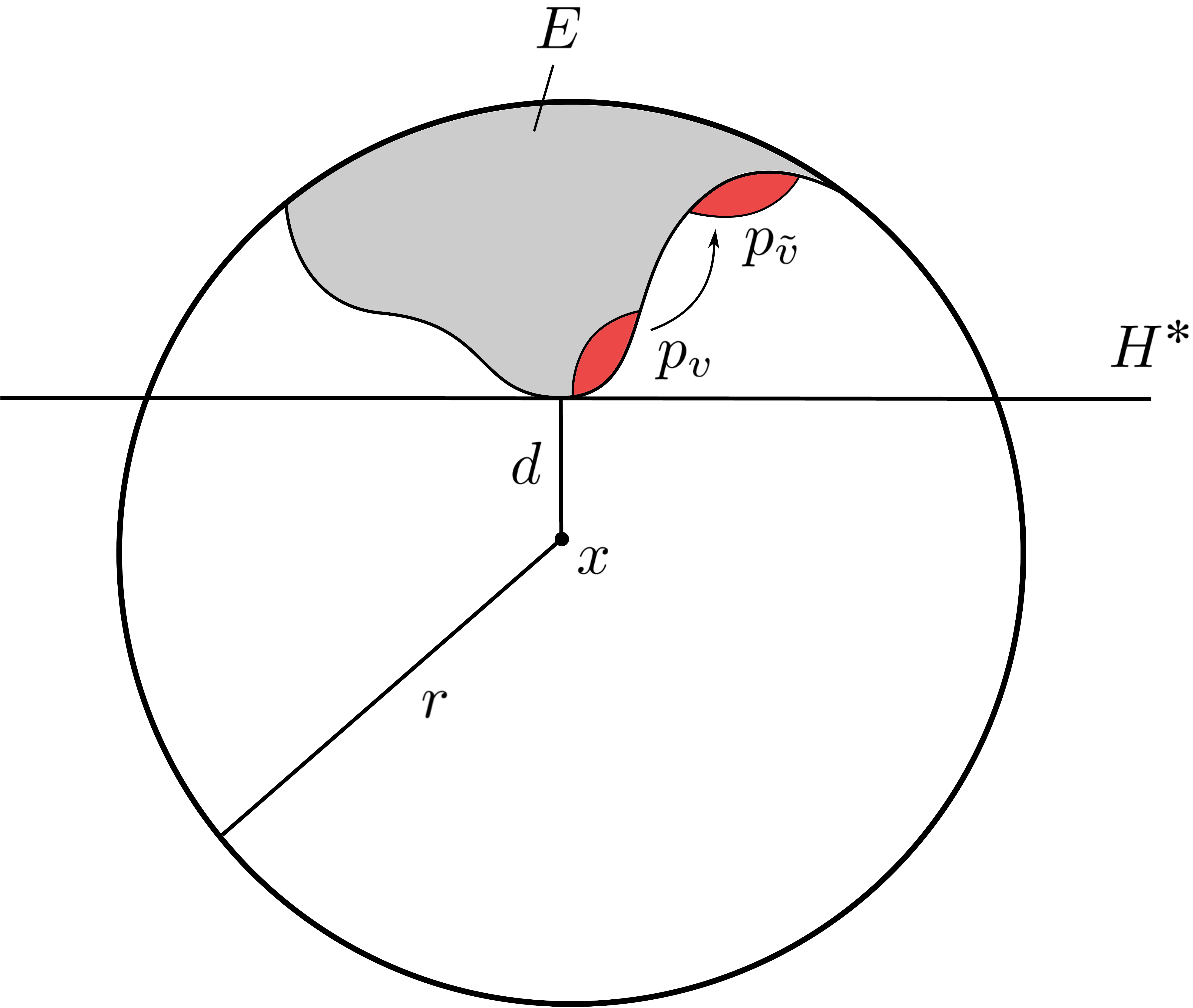}
    \caption{Moving the piece $p_v$ near the tip of the obstacle and reattaching it far away as $p_{\Tilde{v}}$} reduces the hitting probability, but preserves volume.
    \label{fig:moving_chunk}
\end{figure}

With reference to our previous discussion of spikes, it heuristically makes sense that a spike must have reasonably high curvature (it can have high curvature on the average, or if it is flat at most places, then have a sharp needle like end where the curvature is very high). In the same setting as Proposition \ref{prop:curvature-rigidity} let us, moreover, for simplicity assume that $\mathcal{N}$ is the graph of a function over the tangent hyperplane $H^*$ (Fig. \ref{fig:moving_chunk}).

\begin{prop}
\label{prop:max-curvature-bound}
    In the above setting let us fix the value of $d$. Then, if the maximum curvature $\kappa_{\max}$ of $\mathcal{N}$ is sufficiently high (greater than some universal constant), then it satisfies
    \begin{equation}\label{ineq:R_tau_gen}
        \kappa_{\max} \geq \frac{\tau^{\frac{1}{n}}}{r} \left(\Phi\left( - \frac{d}{\sqrt{t}}\right)\right)^{-\frac{1}{n - 2}},
    \end{equation}
    where $\Phi$ denotes the c.d.f. of the standard normal distribution. If a point attaining this maximum curvature is within the half concentric ball $B(x, r/2)$, then $\kappa_{\max}$ satisfies the stronger estimate 
    \begin{equation}\label{ineq:R_tau_spec}
        \kappa_{\max} \geq  \frac{\tau^{\frac{1}{n}} (r-d)}{r^{\frac{n}{n-1}}} \left(\Phi\left( - \frac{d}{\sqrt{t}}\right)\right)^{-\frac{n}{(n-1)(n-2)}}.
    \end{equation}
\end{prop}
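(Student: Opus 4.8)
The plan is to sandwich the isocapacitory saturation $\tau = \psi_{E}(x,t)^{n/(n-2)}/\mu(x,r)$ between an \emph{upper} bound on the hitting probability $\psi_E(x,t)$ coming from a half-space comparison and a \emph{lower} bound on the relative volume $\mu(x,r)$ coming from the curvature hypothesis, and then solve the resulting inequality for $\kappa_{\max}$. Fix coordinates with $x$ at the origin and $y = (d,0,\dots,0)$, so the tangent hyperplane is $H^{\ast} = \{x_1 = d\}$; by the \textbf{Assumption} preceding Proposition~\ref{prop:curvature-rigidity} together with the hypothesis that $\mathcal{N}$ is a graph over $H^{\ast}$, the error set $E = E(y)\cap B(x,r)$ lies entirely in the far half-space $\{x_1 \ge d\}$.

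\emph{Step 1 (upper bound on $\psi$).} Since $E \subseteq \{x_1 \ge d\}$, a Brownian path started at $x$ must cross $\{x_1 = d\}$ before entering $E$. Projecting onto the $x_1$-axis (which is again a one-dimensional Brownian motion) and applying the reflection principle exactly as in the proof of Lemma~\ref{lem:half-space-hit-prob} gives $\psi_E(x,t) \le 2\,\Phi(-d/\sqrt{t})$. This is the step that makes the quantity $\Phi(-d/\sqrt t)$ appear, and it identifies the half-space (the equality case of Lemma~\ref{lem:half-space-hit-prob}) as the borderline geometry.

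\emph{Step 2 (lower bound on $\mu$ and conclusion).} This is the geometric heart. Since $\kappa_{\max}$ is the maximal curvature, at every point of $\mathcal{N}$ the osculating sphere lying on the $E$-side has radius at least $1/\kappa_{\max}$; applying this at $y$ and intersecting with $B(x,r)$ shows that $E$ contains a ball of radius $\min(1/\kappa_{\max},(r-d)/2)$, which in the stated regime ``$\kappa_{\max}$ sufficiently large'' (precisely $\kappa_{\max}\ge 2/(r-d)$) equals $1/\kappa_{\max}$. Hence $\mu(x,r)\ge (\kappa_{\max}r)^{-n}$, and combining with Step~1 and the definition \eqref{eq:tau} yields $\tau \le \big(2\Phi(-d/\sqrt t)\big)^{n/(n-2)}(\kappa_{\max}r)^{n}$; taking $n$-th roots and rearranging gives \eqref{ineq:R_tau_gen} (up to a dimensional constant $\le 1$, absorbed as in Lemma~\ref{lem:half-space-hit-prob}). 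For the sharper bound \eqref{ineq:R_tau_spec} one must replace the single ball of radius $1/\kappa_{\max}$ by an \emph{elongated} region: when the maximal curvature is attained at a point inside $B(x,r/2)$, the graph $\mathcal{N}$ — being $C^2$ with curvature $\le\kappa_{\max}$ — is controlled by a rolling-ball (Blaschke-type) argument, so above that point the obstacle $E$ contains, at every height up to $\sim (r-d)$, a transverse ball of radius $\sim 1/\kappa_{\max}$; integrating the cross-sections produces $\Vol(E)\gtrsim \kappa_{\max}^{-(n-1)}(r-d)$, i.e. a strictly larger lower bound for $\mu$. Feeding this into $\tau = \psi^{n/(n-2)}/\mu$ together with Step~1 and solving for $\kappa_{\max}$ gives \eqref{ineq:R_tau_spec} after routine arithmetic.

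\emph{Main obstacle.} Steps~1 and the final algebraic manipulations are routine. The real work — and the reason for the two technical hypotheses (``$\kappa_{\max}$ sufficiently large'' and ``the maximum attained within $B(x,r/2)$'') — is the passage ``curvature bound $\Rightarrow$ a concrete ball, respectively tube, sitting inside $E = E(y)\cap B(x,r)$''. Making this rigorous requires a rolling-ball theorem valid for a merely $C^2$ graph, careful control of the transverse thickness of the elongated region (the curvature bound controls the Hessian of the defining function but not its gradient, so the quadratic comparison must be set up with care), and bookkeeping of the truncation by $\partial B(x,r)$ so that the produced ball/tube genuinely lies in $E$ rather than spilling outside the working ball.
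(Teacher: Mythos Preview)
Your proposal is correct and follows essentially the same two-step strategy as the paper: bound $\psi_E(x,t)$ from above by the half-space hitting probability (monotonicity plus Lemma~\ref{lem:half-space-hit-prob}), bound $\Vol(E)$ from below by the volume of an inscribed ball of radius $1/\kappa_{\max}$, and then rearrange. For the sharpened estimate your ``tube'' picture (cross-sections of radius $\sim 1/\kappa_{\max}$ over a height $\sim r-d$) is equivalent to the paper's phrasing that $E$ contains $\approx \kappa_{\max}(r-d)$ disjoint balls of radius $1/\kappa_{\max}$; both yield $\Vol(E)\gtrsim (r-d)\kappa_{\max}^{-(n-1)}$ and lead to \eqref{ineq:R_tau_spec}. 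Your write-up is in fact more careful than the paper's about where the inscribed ball sits and why the hypotheses (``$\kappa_{\max}$ large'' and ``maximum inside $B(x,r/2)$'') are needed to keep the ball/tube inside $B(x,r)$.
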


\begin{proof}
Recalling the definition of the isocapacitory saturation $\tau$, we will bound the numerator (resp. denominator) of $\tau$ from above (resp. below).
First, for the numerator $\psi_E(x, t)$ we will use a basic monotonicity property of hitting probabilities stating that for two sets $A \subseteq B $ one has $\psi_A(x, t) \leq \psi_B(x, t)$ - this follows directly from the definition of $\psi$. Now, since $E \subseteq C $ where $C$ is the smaller spherical cap of $B(x, r) \cap H^*$, we have $\psi_E (x, t) \leq \psi_C (x, t)$. However, recalling the explicit form of $\psi_C$ from Lemma \ref{lem:half-space-hit-prob} of the main text, we have
\begin{equation*}
    \psi_E(x, t) \leq \Phi\left( - \frac{d}{\sqrt{t}}\right).
\end{equation*}

Second, to bound the denominator of $\tau$ (i.e. $\Vol(E)$), we observe that if $\kappa_{\max}$ is large enough, by definition $E$ contains a ball of radius $\frac{1}{\kappa_{\max}}$, and $\Vol(E) \geq \frac{\omega_n}{\kappa_{\max}^n}$ where $\omega_n$ denotes the volume of unit $n$-dimensional ball. That finally implies,
\begin{align*}
    \tau & \leq \left(\Phi\left( - \frac{d}{\sqrt{t}}\right)\right)^{\frac{n}{n - 2}} \frac{\Vol (B(x, r))}{\Vol (E)} \\
    & \leq \left(\Phi\left( - \frac{d}{\sqrt{t}}\right)\right)^{\frac{n}{n - 2}} r^n \kappa^n_{\max},
\end{align*}
which proves (\ref{ineq:R_tau_gen}). 

If a point of maximum curvature is inside a concentric ball of radius $r/2$, then $E$ contains  $\approx \frac{\kappa_{\max}(r - d)}{2}$ balls of radius $\frac{1}{\kappa_{\max}}$, which implies that $\Vol(E) \geq \kappa_{\max}(r - d)\left( \frac{\omega_n}{\kappa^n_{\max}} \right)$.

The rest of the proof is similar.
\end{proof}

Now, we give a curvature estimate which works in any regime, without any restrictions. The tradeoff is a global average bound of the $L^p$-type rather than pointwise estimates.
\begin{prop}\label{prop:Lp_curvature}
    In the setting as above, let us fix the distance $d = d(x, \mathcal{N})$. At each point of $\mathcal{N}$, let us denote by $\kappa$ the maximal sectional curvature of $\mathcal{N}$ at that point. The following estimate holds:
    \begin{equation}\label{ineq:integ_curv_est}
        \| \mathcal{K} \|_{L^1} \geq V_n(d, r) - \frac{2\omega_nr^n\Phi\left( - \frac{d}{\sqrt{t}}\right)}{\tau_H},
    \end{equation}
    where $V_n(d, r)$ denotes the volume of the smaller spherical cap at distance $d$, the constant $\omega_n$ denotes the volume of unit ball in $\mathbb{R}^{n}$, and the function $\mathcal{K}$ is an integral function of the curvature $\kappa$ over lines (defined in (\ref{eq:def_cal_K}) below).
\end{prop}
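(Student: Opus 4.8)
The plan is to mirror the structure of Proposition~\ref{prop:max-curvature-bound}: bound the numerator $\psi_E(x,t)^{n/(n-2)}$ of $\tau_H$ from above by a half-space hitting probability, and bound the denominator $\Vol(E)$ from below by comparing $E$ with the spherical cap $C := H^*\cap B(x,r)$ of volume $V_n(d,r)$ --- the new ingredient being that the resulting volume deficit $V_n(d,r)-\Vol(E)$ is an integrated curvature quantity rather than a pointwise one.

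First I would use the Assumption (that $\mathcal{N}$ and $x$ lie on opposite sides of $H^* = T_y\mathcal{N}$) together with the standing hypothesis that $\mathcal{N}\cap B(x,r)$ is a graph over $H^*$. Writing $\mathcal{N}$ as $u\mapsto u+h(u)\,\nu$ with $\nu$ the unit normal to $H^*$ pointing away from $x$, one gets $h\ge 0$, with $h$ and $\nabla h$ vanishing at the foot point $y'$ of $y$, and $E\subseteq C$ with $C\setminus E$ the slab lying between $H^*$ and $\mathcal{N}$. Hence
\[
  \Vol(E)\;=\;V_n(d,r)\;-\;\int h(u)\,du,
\]
the integral being over the projection of $\mathcal{N}\cap B(x,r)$ onto $H^*$. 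A second-order Taylor expansion of $h$ with integral remainder along the ray from $y'$ to $u$ inside $H^*$ bounds $h(u)$ by a double integral of $|\mathrm{Hess}\,h|$ along that ray, and on the fixed cap $C$ the slope of the graph is uniformly bounded, so $|\mathrm{Hess}\,h|$ is controlled by the maximal sectional curvature $\kappa$ of $\mathcal{N}$ up to a bounded factor. Integrating over the base, this is exactly the curvature-line functional $\mathcal{K}$ of (\ref{eq:def_cal_K}), so $\int h(u)\,du\le\|\mathcal{K}\|_{L^1}$, i.e. $\|\mathcal{K}\|_{L^1}\ge V_n(d,r)-\Vol(E)$.

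It remains to bound $\Vol(E)$ from above in terms of $\tau_H$. From the definition (\ref{eq:tau}), $\Vol(E)=\psi_E(x,t)^{n/(n-2)}\,\Vol(B(x,r))/\tau_H$. By monotonicity of hitting probabilities, $E\subseteq C$ and the reflection-principle computation behind Lemma~\ref{lem:half-space-hit-prob} give $\psi_E(x,t)\le 2\,\Phi(-d/\sqrt t)\le 1$; since $n/(n-2)>1$, raising a quantity $\le 1$ to that power only decreases it, whence $\psi_E(x,t)^{n/(n-2)}\le 2\,\Phi(-d/\sqrt t)$. Using $\Vol(B(x,r))=\omega_n r^n$ then gives $\Vol(E)\le 2\omega_n r^n\,\Phi(-d/\sqrt t)/\tau_H$, and combining with the previous paragraph yields (\ref{ineq:integ_curv_est}).

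The main obstacle is the second step: making the passage from the graph height $h$ to the sectional curvature $\kappa$ both rigorous and uniform. Because $\nabla h$ is nonzero away from $y'$, $\mathrm{Hess}\,h$ is not literally the second fundamental form, so one must carry the $(1+|\nabla h|^2)^{3/2}$ factor and bound it using that $\mathcal{N}\subseteq C$ keeps the graph's slope under control; one must also check that the set where $\mathcal{N}$ exits through $\partial B(x,r)$ --- so that the projection of $\mathcal{N}$ fails to cover the full base of $C$ --- only decreases $\int h$ and is therefore harmless. Everything else is bookkeeping with the heat kernel and the definition of $\tau_H$.
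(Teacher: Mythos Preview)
Your proposal is correct and follows essentially the same route as the paper: bound $\psi_E$ above by the half-space value $2\Phi(-d/\sqrt t)$, write $\mathcal N$ as a graph $h$ over $H^*$, express the height via two applications of the fundamental theorem of calculus along rays from the tangency point (your ``Taylor with integral remainder''), and then replace second derivatives of $h$ by the maximal sectional curvature $\kappa$. The paper handles the boundary issue you flag by multiplying $h$ with a smooth cutoff $\varphi_\epsilon$ supported in an $\epsilon$-neighbourhood of the projection of $\mathcal N\cap B(x,r)$, proving the inequality for $\mathcal K_\epsilon$, and sending $\epsilon\searrow 0$; on the Hessian-versus-curvature point the paper is no more careful than you are, simply asserting that ``the maximum sectional curvature bounds the second derivatives.''
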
{}

\begin{proof}
    Again, we suitably bound the numerator and denominator of $\tau$. Starting with the numerator, as explained in Proposition \ref{prop:max-curvature-bound}, we have by monotonicity
    \begin{equation}
        \psi_E(x, t) \leq 2 \Phi\left(-\frac{d}{\sqrt{t}}\right).
    \end{equation}{}

    To bound the denominator of $\tau$ we proceed as follows. Let $\mathcal{N}$ be the graph of the function $\tilde{g}(x_1,\cdots, x_{n - 1})$, where the variables $x_j$ are taken from the hyperplane $H^*$ (Fig. \ref{fig:moving_chunk}) at distance $d$ from $x$; the point at which $\mathcal{N}$ touches this hyperplane is taken as the origin. Let $\varphi_\epsilon$ be a smooth cut-off function defined on the hyperplane such that $\varphi \equiv 1$ on the set $S$ of all $(x_1, \cdots, x_{n - 1})$ such that $\tilde{g}(x_1, \cdots, x_{n - 1}) \in B(x, r)$, and $\varphi \equiv 0$ outside the $\epsilon$-tubular neighborhood of $S$. Finally, let $g_\epsilon := \varphi_\epsilon \tilde{g}$.
    
    Now we see that, letting $a = (r^2 - d^2)^{1/2}$, 
    \begin{align*}
        V_n(d, r) - \Vol(E) & \leq \int_{\rho = 0}^a \int_{S^{n - 2}} g_\epsilon(\rho, \theta) \; \rho^{n - 2}\; d\rho \; d\theta.
    \end{align*}
    Now, if $\eta$ denotes the unit vector in the direction of a fixed $(\rho, \theta)$, observing that $g_\epsilon(0) = 0$, we have by the fundamental theorem of calculus
    \begin{equation*}
        g_\epsilon(\rho, \theta) = \int_0^1 \partial_tg_\epsilon(t\rho\eta, \theta)\; dt.  
    \end{equation*}
    In turn, applying the fundamental theorem a second time and observing that $\nabla g_\epsilon (0) = 0$, we have that 
     \begin{equation*}
        g_\epsilon(\rho, \theta) = \int_0^1 \int_0^1 \partial_s\partial_t g_\epsilon(st\rho\eta, \theta)\;ds\; dt.  
    \end{equation*}
    Putting everything together  we get, 
    \begin{equation*}
        V_n(d, r) - \Vol(E) \leq \int_{\rho = 0}^a \int_{S^{n - 2}} \left(\int_0^1 \int_0^1 \partial_s\partial_t g_\epsilon(st\rho\eta, \theta)\;ds\; dt\right) \; \rho^{n - 2}\; d\rho \; d\theta.
    \end{equation*}
    Now, we define the following integral quantity:
    \begin{equation}\label{eq:def_cal_K}
        \mathcal{K}_\epsilon(\rho, \theta) = \int_0^1 \int_0^1 |\kappa_\epsilon(st\rho\eta, \theta)|\;ds\; dt. 
    \end{equation}
    Noting that the maximum sectional curvature bounds the second derivatives, finally we have that 
    \begin{equation}
        V_n(d, r) - \Vol(E) \leq \| \mathcal{K}_\epsilon\|_{L^1}.
    \end{equation}
    To obtain (\ref{ineq:integ_curv_est}) we now put all the above estimates together and let $\epsilon \searrow 0$.
\end{proof}{}

\newpage
\section{Appendix B: generalization bounds and compression schemes}
\label{sec:gen-bounds-app}

\paragraph{Background} A main line of ML and statistical inference research addresses questions of generalization. To set the stage we start with some notation. Let us suppose that the dataset $\mathcal{X}$ is sampled from a probability distribution $D$, i.e. $(x, y) \sim D$. Following conventions from the literature \cite{Arora2018} we define the expected margin loss of a classifier $f$ by
\begin{equation} \label{eq:expected-margin-loss}
    L_\gamma(f) := \Prob_{(x, y) \sim D} \left[ f(x)[y] \leq \gamma + \max_{j=1,\dots,k; j\neq y} f(x)[j] \right].
\end{equation}
We use the notation $\hat{L}_\gamma$ to denote the expected empirical margin loss over the given data set $\mathcal{X}$. Finally, the generalization error is defined as $L_\gamma - \hat{L}_\gamma$.

Quite roughly speaking, standard generalization results attempt to estimate the performance of the classifier on unseen samples (i.e. the full data distribution), thus yielding bounds of the form:

\begin{equation}
    L_{\gamma_1} (f) \leq \hat{L}_{\gamma_2}(f) + F(\gamma_1, \gamma_2, f, \mathcal{X}),
\end{equation}
where $F$ is an additional term that usually depends, e.g. on the size of $\mathcal{X}$, the expressiveness of $f$ and further margin information $(\gamma_1, \gamma_2)$.

\subsection{Compression in a heat diffusion sense implies generalization bounds}

We first state a well-known concentration inequality due to Hoeffding which will find repeated use in the ensuing sections:
\begin{prop}[Hoeffding's inequality]
Let $X_1,\dots,X_n$ be independent random variables  taking values in the interval $[0, 1]$, and let $\overline {X} = \frac{1}{n}(X_1 + \dots + X_n)$ be the empirical mean of these random variables. Then we have:
\begin{equation}\label{ineq:Hoeff}
   \Prob\left({\overline {X}}- \Exp\left({\overline {X}}\right)\geq t\right)\leq e^{-2nt^{2}}.
\end{equation}
\end{prop}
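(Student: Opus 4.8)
The plan is the classical exponential-moment (Chernoff) argument, whose only non-routine ingredient is a bound on the moment generating function of a bounded, centered random variable. Write $S := \sum_{i=1}^{n}\left(X_i - \Exp X_i\right)$, so that $\overline{X} - \Exp\!\left(\overline{X}\right) = S/n$. For any $s > 0$, Markov's inequality applied to the nonnegative variable $e^{sS}$ gives
\begin{equation*}
    \Prob\!\left(\overline{X} - \Exp\!\left(\overline{X}\right) \geq t\right) = \Prob\!\left(e^{sS} \geq e^{snt}\right) \leq e^{-snt}\,\Exp\!\left[e^{sS}\right],
\end{equation*}
and since the $X_i$ are independent the expectation factorizes: $\Exp\!\left[e^{sS}\right] = \prod_{i=1}^{n}\Exp\!\left[e^{s(X_i - \Exp X_i)}\right]$. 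Thus everything reduces to controlling the MGF of a single centered, bounded summand.

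The key step I would establish is Hoeffding's lemma: if $Y \in [a,b]$ almost surely with $\Exp Y = 0$, then $\Exp\!\left[e^{sY}\right] \leq e^{s^2(b-a)^2/8}$. The argument writes $Y = \lambda b + (1-\lambda)a$ with $\lambda = (Y-a)/(b-a) \in [0,1]$, applies convexity of $z \mapsto e^{sz}$ to obtain $e^{sY} \leq \lambda e^{sb} + (1-\lambda)e^{sa}$, and takes expectations; the constraint $\Exp Y = 0$ forces $\Exp\lambda = -a/(b-a) =: p$, so $\Exp\!\left[e^{sY}\right] \leq e^{\varphi(u)}$ with $u := s(b-a)$ and $\varphi(u) := -pu + \log\!\left(1 - p + p e^{u}\right)$. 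One checks $\varphi(0) = \varphi'(0) = 0$ and $\varphi''(u) = \theta(1-\theta) \leq \frac{1}{4}$, where $\theta := p e^{u}/(1 - p + p e^{u}) \in (0,1)$, so Taylor's theorem gives $\varphi(u) \leq u^2/8$, which is the claimed bound. Since $X_i \in [0,1]$, each summand $X_i - \Exp X_i$ lies in an interval of length at most $1$, so every factor is at most $e^{s^2/8}$ and hence $\Exp\!\left[e^{sS}\right] \leq e^{n s^2/8}$.

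Combining the two displays, $\Prob\!\left(\overline{X} - \Exp\!\left(\overline{X}\right) \geq t\right) \leq \exp\!\left(-snt + n s^2/8\right)$ for every $s > 0$; the exponent is a quadratic in $s$ minimized at $s = 4t$, where it equals $-4nt^2 + 2nt^2 = -2nt^2$, yielding the stated inequality $e^{-2nt^2}$. The only genuinely non-trivial part of this is Hoeffding's lemma, i.e. the sub-Gaussian MGF estimate for bounded variables; the remaining steps are Markov's inequality, the independence factorization, and a one-variable optimization, all routine. A symmetrization-based comparison of moments is an alternative route to the lemma, but the convexity argument above is the most economical.
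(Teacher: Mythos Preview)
Your argument is correct and is the standard Chernoff-bound proof of Hoeffding's inequality: Markov on the exponential moment, independence to factorize, Hoeffding's lemma for the sub-Gaussian MGF bound on each bounded centered summand, and finally optimization in $s$. All steps check out, including the computation $\varphi''(u)=\theta(1-\theta)\le 1/4$ and the minimizer $s=4t$ yielding the exponent $-2nt^2$.

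The paper, however, does not prove this statement at all; it simply records Hoeffding's inequality as a well-known concentration result and then invokes it in the proof of the subsequent proposition. So there is nothing to compare approaches against: you have supplied a complete (and standard) proof where the paper only quotes the result.
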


We now provide the proof of Proposition \ref{prop:gen_bound_gen} of the main text.
\label{subsec:gen-bounds-proof-app}
\begin{proof}
    The strategy of proof follows well-known "weak-law-of-large-numbers" concentration techniques in a spirit similar to \cite{Arora2018}.
    
    \textbf{Step 1}. First, we show that for a given $g$ as $|\mathcal{X}| \rightarrow \infty $,
    \begin{equation}
        \Prob_{(x, y) \sim \mathcal{X}} \left( C_g(x, y, t_1)  \right) \rightarrow \Prob_{(x, y) \sim D} \left( C_g(x, y, t_1) \right),
    \end{equation}
    where $C_{g}(x, y, \gamma^2)$ is the event that a Brownian path starting at $x$ hits $E_{g}(y)$ within time $\gamma^2$. The rate of convergence is determined through Chernoff concentration bounds.

    Choose $\alpha \in A$, and let $g_\alpha$ be the corresponding classifier. Attached to each sample point $x_j$, there is a Bernoulli random variable $X_j$ which takes the value $1$ if $C_{g_\alpha}(x_j, y, \gamma^2)$ happens, and $0$ otherwise. Then, the average $\overline{X} = \frac{1}{m} \sum_{j = 1}^m X_j$ is given by the average of $m$ i.i.d. Bernoulli random variables each of whose expectations is given by $\Prob_{(x, y)\sim D} C_{g_\alpha}(x, y, \gamma^2)$. Furthermore, we note that if a data sample is misclassified, then the Brownian particle almost surely will hit the error set. Combining this observation with the concentration estimate (\ref{ineq:Hoeff}) above, we obtain
    \begin{align}\label{ineq:Cher_bound}
        L_0(g_\alpha) & \leq \Prob_{(x, y) \sim D}\left(C_{g_\alpha}(x, y, \gamma^2)\right)\nonumber\\
        & \leq \Prob_{(x, y) \sim \mathcal{X}} \left( C_{g_\alpha}(x, y, \gamma^2)\right) + \xi,
    \end{align}
    with probability at least $1 - e^{-2\xi^2m}$. 
    If each classifier $g_\alpha$ has $q$ parameters, each of which can take $r$ discrete values, we take $\xi = \sqrt{\frac{q \log r}{m}}$.

    \textbf{Step 2}. The estimate from the previous step should hold for every classifier $g_\alpha$ in the family $A$ with large probability. This is guaranteed by a union bound and tuning the Chernoff bounds from the convergence rate. More precisely, there are $r^q$ different choices $\alpha \in A$, and hence by taking the union of the estimate in (\ref{ineq:Cher_bound}), one can say that 
    \begin{equation}\label{ineq:Cher_bound_union}
        \Prob_{(x, y) \sim D}\left(C_{g_\alpha}(x, y, \gamma^2)\right)
        \leq \Prob_{(x, y) \sim \mathcal{X}} \left( C_{g_\alpha}(x, y, \gamma^2)\right) + \sqrt{\frac{q\log r}{m}}
    \end{equation}
    with probability at least $1 - e^{-q\log r}$ over all $\alpha \in A$.

    \textbf{Step 3}. Finally one uses the fact that $f$ is approximable by at least one $g = g_{\alpha_0}$ for some $\alpha_0$ in $A$.
    Via Definition \ref{def:compression} of the main text, one sees that 
    \begin{align*}
        \Prob_{(x, y) \sim \mathcal{X}}\left(C_{g_{\alpha_0}}(x, y, \gamma^2)\right) & \leq \Prob_{(x, y) \sim \mathcal{X}} \left(C_f(x, y, \gamma^2)\right) + \eta, 
    \end{align*}
    which finally gives that with probability at least $1 - e^{-q\log r}$, we have
    \begin{equation}\label{ineq:Arora_type_loss}
      L_0(g) \leq \Prob_{(x, y) \sim \mathcal{X}} \left(C_f(x, y, \gamma^2)\right) + \eta + O\left(\sqrt{\frac{q \log r}{m}}\right).  
    \end{equation}
\end{proof}

\begin{remark}
    As noted, a classifier $f$ classifies a point $x$ wrongly if and only if $\psi_{E(y)}(x, t) = 1$ for all time scales $t$. With this observation, and since (\ref{ineq:Arora_type_loss}) works for all real numbers $\gamma$, letting $\gamma \to 0$, we have that with probability at least $1 - e^{-q\log r}$, 
    $$
        L_0(g) \leq \hat{L}_0(f) + \eta + O\left(\sqrt{\frac{q \log r}{m}}\right).
    $$
    This recovers a loss estimate which is similar to the estimate in Theorem 2.1 of \textnormal{[1]}.
    
    Indeed, one can consider $\Prob_{(x, y) \sim \mathcal{X}}\left( C_f(x, y, \gamma^2\right)$ as a ``soft'' or probabilistic measure of classification with margin $\approx \gamma$.
\end{remark}
   
When defining the notion of a compression, instead of taking a pointwise difference as in Definition 1 of \cite{Arora2018}, we would like to capture the idea that the decision boundary of a good compression should be ``close enough'' to the decision boundary of the original classifier. In our context, this implies that their ``heat signatures'' at the sample points should be close enough at all time scales. As noted in the main text, Definition \ref{def:compression} is  definitely one natural option to define goodness of compression in a heat-diffusion sense. Another natural way is to consider the Brownian motion's running time and define a good approximation as follows:

\begin{dfn} \label{def:compression-runtime}
    Given a positive real number $\eta$, a classifier $g$ is said to be an $\eta-$compression w.r.t. hitting time of $f$ if
    \begin{equation}\label{eq:good_comp_bm_time}
        \psi_{E_{g}(y)}(x, \gamma^2 - \eta) \leq \psi_{E_{f}(y)}(x, \gamma^2) \leq \psi_{E_{g}(y)}(x, \gamma^2 + \eta) 
    \end{equation}
    for all points $x$ in the training sample, labels $y$ and real numbers $\gamma^2 \geq \eta$.
\end{dfn}

Analogously, we have the following 
\begin{prop}\label{prop:gen_bound_gen_runtime}
    Let us suppose that $f$ is approximable by $g$ in the sense of Definition \ref{def:compression-runtime}. Here $g \in A$, where $A$ is a family of classifiers $\mathbb{R}^n \rightarrow \mathbb{R}$ parametrized by $q$ parameters assuming $r$ discrete values. As before, for a classifier $h$, let $C_{h}(x, y, t)$ be the event that a Brownian path starting at $x$ hits $E_{h}(y)$ within time $t$. Then we have
    \begin{equation}
        L_0(g) \leq \Prob_{(x, y) \sim D}\left(C_{g_\alpha}(x, y, \gamma^2 - \eta)\right) \leq \Prob_{(x, y) \sim \mathcal{X}} \left(C_f(x, y, \gamma^2)\right) + O\left(\sqrt{\frac{q \log r}{m}}\right)
    \end{equation} 
    with probability at least $1 - e^{-q\log r}$. 
\end{prop}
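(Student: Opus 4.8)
The plan is to mirror, step for step, the ``concentration plus union bound'' argument used for Proposition~\ref{prop:gen_bound_gen}, changing only the last step: here the compression notion of Definition~\ref{def:compression-runtime} compares hitting probabilities at \emph{shifted running times} rather than up to an additive error, so the passage from the compressed model $g$ to the original model $f$ will be a one-line monotonicity argument instead of a triangle-type estimate. As before, the generalization guarantee is obtained first for an arbitrary member of the parametrized family $A$, then made uniform over $A$ by a union bound, and finally specialized to the member that $\eta$-compresses $f$.

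Concretely, fix $\alpha \in A$ and a real number $\gamma$ with $\gamma^2 \geq \eta$, and attach to each sample point $(x_j, y_j)$ the Bernoulli variable $X_j := \mathbf{1}\!\left[ C_{g_\alpha}(x_j, y_j, \gamma^2 - \eta) \right]$, so that $\Exp[X_j] = \Prob_{(x,y)\sim D}\!\left( C_{g_\alpha}(x, y, \gamma^2 - \eta) \right)$ and the empirical mean $\overline{X}$ equals $\Prob_{(x,y)\sim\mathcal{X}}\!\left( C_{g_\alpha}(x, y, \gamma^2 - \eta) \right)$. Since a misclassified point lies in $E_{g_\alpha}(y)$ and is hence hit almost surely for any positive running time, $L_0(g_\alpha) \leq \Prob_{(x,y)\sim D}\!\left( C_{g_\alpha}(x, y, \gamma^2 - \eta) \right)$, and Hoeffding's inequality (\ref{ineq:Hoeff}) applied to $\overline{X}$ yields
\begin{equation*}
 L_0(g_\alpha) \leq \Prob_{(x,y)\sim\mathcal{X}}\!\left( C_{g_\alpha}(x, y, \gamma^2 - \eta) \right) + \xi
\end{equation*}
with probability at least $1 - e^{-2\xi^2 m}$. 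Because each $g_\alpha$ is described by $q$ parameters taking $r$ discrete values, there are at most $r^q$ classifiers in $A$; choosing $\xi = \sqrt{q\log r/m}$ and taking a union bound over all $\alpha$, the displayed estimate holds simultaneously for every $g_\alpha \in A$ with probability at least $1 - r^q e^{-2\xi^2 m} = 1 - e^{-q\log r}$.

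It remains to insert $f$. By hypothesis $f$ is approximable by some $g = g_{\alpha_0} \in A$ in the sense of Definition~\ref{def:compression-runtime}, so for every sample point $x$, label $y$ and $\gamma^2 \geq \eta$ we have the pointwise inequality $\psi_{E_{g_{\alpha_0}}(y)}(x, \gamma^2 - \eta) \leq \psi_{E_f(y)}(x, \gamma^2)$. Since $\psi_U(x, \cdot)$ is nondecreasing in time (a longer lifetime can only increase the chance of entering $U$), averaging this inequality over the empirical distribution gives $\Prob_{(x,y)\sim\mathcal{X}}\!\left( C_{g_{\alpha_0}}(x, y, \gamma^2 - \eta) \right) \leq \Prob_{(x,y)\sim\mathcal{X}}\!\left( C_f(x, y, \gamma^2) \right)$. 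Chaining this with the previous step applied to $\alpha_0$ produces, with probability at least $1 - e^{-q\log r}$,
\begin{equation*}
 L_0(g) \leq \Prob_{(x,y)\sim D}\!\left( C_{g_\alpha}(x, y, \gamma^2 - \eta) \right) \leq \Prob_{(x,y)\sim\mathcal{X}}\!\left( C_f(x, y, \gamma^2) \right) + O\!\left( \sqrt{\tfrac{q\log r}{m}} \right),
\end{equation*}
which is the assertion.

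The argument is essentially bookkeeping, so I do not expect a serious obstacle; the only genuine care points are (i) the boundary case $\gamma^2 = \eta$, where one must either adopt the convention $\psi_U(x,0) = \mathbf{1}[x \in \overline{U}]$ or restrict to $\gamma^2 > \eta$ and pass to the limit $\gamma^2 - \eta \searrow 0$, and (ii) making sure that Definition~\ref{def:compression-runtime}, being stated for a fixed $\gamma$, is invoked at the same $\gamma$ throughout, so that no extra union bound over $\gamma$ is required. The conceptual point worth highlighting is that, in contrast to Proposition~\ref{prop:gen_bound_gen}, there is no additive $\eta$ on the right-hand side: the compression slack has been entirely absorbed into the time shift $\gamma^2 \mapsto \gamma^2 - \eta$, which is precisely why the final step reduces to monotonicity of hitting probabilities in time.
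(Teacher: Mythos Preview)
Your proof is correct and follows essentially the same approach as the paper, which simply states that ``the proof proceeds similarly as above'' (i.e., as for Proposition~\ref{prop:gen_bound_gen}). You have correctly identified that the only modification is in Step~3, where the left inequality of Definition~\ref{def:compression-runtime} replaces the additive $\eta$-closeness of Definition~\ref{def:compression}, so the compression slack is absorbed into the time shift rather than appearing as an additive term on the right-hand side.
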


The proof proceeds similarly as above. Letting $\gamma^2 \rightarrow \eta$ gives us
\begin{equation}
    L_0(g) \leq \Prob_{(x, y) \sim \mathcal{X}} \left(C_f(x, y, \eta)\right) + O\left(\sqrt{\frac{q \log r}{m}}\right).
\end{equation}

Again, the first term on the RHS can be interpreted as the geometric margin of classification. In particular, if the classifier $f$ separates points by a distance of $\approx \sqrt{n \eta}$, then since the Brownian motion travels $\approx \sqrt{n \eta}$ hitting the error set will happen only if a misclassification occurred, i.e. we have
\begin{equation}
    \Prob_{(x, y) \sim \mathcal{X}} \left(C_f(x, y, \eta)\right) \approx L_0(f).
\end{equation}

\subsection{A sharp variant of the Johnson-Lindenstrauss algorithm}

Several state-of-art compression schemes utilize a dimensionality reduction in the spirit of Johnson-Lindenstrauss (JL), \cite{Arora2018}. In this Subsection we discuss a JL compression scheme that will later be coupled with and tuned by some heat-diffusion estimates. We begin by discussing a variant of JL (Alg. \ref{alg:JL}).

\begin{algorithm}
    \KwData{Original matrix $A$ of dimension $h_1 \times h_2$, $\beta \in (0, 1)$.}
    \KwResult{Stochastic compressed matrix $\hat{A}$ with $O\left(\log(h_1h_2)/\beta\alpha^2\right)$ non-zero entries such that $\Prob\left[ \| \hat{A}x - Ax \| \geq \alpha \|A\|_F\|x\|\right] \leq \beta.$}
    Start with matrix $A$, real number $ \alpha$\;
    \While{$i \leq h_1$, $j \leq h_2$}{
    Let $z_{ij} = 1$ with probability $p_{ij} = \frac{2a_{ij}^2}{\beta\alpha^2\|A\|_F^2}$, $0$ otherwise\;
    Let $\hat{a}_{ij} = \frac{z_{ij}a_{ij}}{p_{ij}}$.
    }
    Return $\hat{A} = (\hat{a}_{ij})$.
    \caption{Compressing a matrix $A \in \mathbb{R}^{h_1 \times h_2}$}
    \label{alg:JL}
\end{algorithm}

\begin{prop}\label{prop:alg_2}
Let $A$ be a matrix of dimension $h_1 \times h_2$. Then, one can find a compressed matrix $\hat{A}$ such that 
\begin{equation*}
\| Ax - \hat{A}x \| \leq \alpha\|A\|_F\|x\|,
\end{equation*}
with probability at least $1 - \beta$, where the number of parameters of $\hat{A}$ is $O\left(\log(h_1h_2)/\beta\alpha^2\right)$.
\end{prop}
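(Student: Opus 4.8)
The plan is to view the output $\hat A$ of Algorithm~\ref{alg:JL} as an unbiased, randomly sparsified surrogate for $A$ and to establish the two halves of the claim separately: the approximation bound $\|Ax-\hat Ax\|\le\alpha\|A\|_F\|x\|$ from a second-moment estimate plus Markov's inequality, and the bound on the number of surviving entries from a control of the expected number of nonzeros plus a multiplicative Chernoff bound. The proposition then follows by intersecting the two favourable events.

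For the approximation bound, fix $x$ and record the elementary facts about $\hat a_{ij}=z_{ij}a_{ij}/p_{ij}$ with $z_{ij}\sim\mathrm{Bernoulli}(p_{ij})$ and $p_{ij}$ equal to the truncated value $\min(1,\,2a_{ij}^2/(\beta\alpha^2\|A\|_F^2))$ implicit in the algorithm: one has $\Exp[\hat a_{ij}]=a_{ij}$, hence $\Exp[\hat Ax]=Ax$, and $\Var(\hat a_{ij})=a_{ij}^2(1/p_{ij}-1)$, which vanishes for the deterministically retained entries and is otherwise at most $a_{ij}^2/p_{ij}$. Because the $z_{ij}$ are jointly independent, the coordinates of $\hat Ax-Ax$ are sums of independent mean-zero terms, so
\begin{equation*}
    \Exp\bigl[\,\|\hat Ax-Ax\|^2\,\bigr]\ =\ \sum_{i,j}x_j^2\,\Var(\hat a_{ij})\ \le\ \sum_{i,j}x_j^2\,\frac{a_{ij}^2}{p_{ij}}.
\end{equation*}
The role of the choice $p_{ij}\propto a_{ij}^2$ is that $a_{ij}^2/p_{ij}$ becomes constant in $(i,j)$, so the right-hand side reduces to a fixed multiple of $\|A\|_F^2\|x\|^2$; once the proportionality constant in $p_{ij}$ is calibrated so that this multiple equals $\tfrac12\beta\alpha^2$, Markov's inequality applied to the nonnegative variable $\|\hat Ax-Ax\|^2$ yields $\Prob[\|\hat Ax-Ax\|\ge\alpha\|A\|_F\|x\|]\le\tfrac12\beta$, with the remaining slack coming from the factor $(1/p_{ij}-1)$ in place of $1/p_{ij}$.

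For the sparsity bound, the number of nonzeros is $N=\sum_{i,j}z_{ij}$, a sum of independent Bernoulli variables with $\Exp[N]=\sum_{i,j}p_{ij}\le 2/(\beta\alpha^2)$ (up to the same calibration constant), since $\sum_{i,j}a_{ij}^2=\|A\|_F^2$. A multiplicative Chernoff bound then gives $N=O(\log(h_1h_2)/(\beta\alpha^2))$ on an event of probability at least $1-\tfrac12\beta$; the logarithmic factor is precisely the cost of forcing the sparsity estimate to hold with probability high enough to combine with the approximation event into a single $(1-\beta)$-event. A union bound over the two failure events completes the argument.

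The step I expect to be the genuine obstacle is the bookkeeping around the truncation $p_{ij}=\min(1,\cdot)$ and, above all, the calibration of the sampling probabilities. One must check that the entries kept deterministically (those with $2a_{ij}^2\ge\beta\alpha^2\|A\|_F^2$, of which there are at most $2/(\beta\alpha^2)$ since each uses up that much of the budget $\|A\|_F^2$) contribute zero variance and disturb neither estimate; and one must pin down the normalization in Algorithm~\ref{alg:JL} so that the second-moment bound lands at $\alpha^2\|A\|_F^2\|x\|^2$ while the expected sparsity stays $O(1/(\beta\alpha^2))$ — the naive choice carrying a stray dependence on the row dimension $h_1$, which in the intended applications is reabsorbed through a layer-cushion-type normalization in the spirit of \cite{Arora2018}. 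Relatedly, it matters whether the guarantee is required only for the fixed $x$ stated here or uniformly over a subspace, the latter costing an additional net/union-bound factor in the parameter count.
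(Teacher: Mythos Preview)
Your proposal follows essentially the same route as the paper's proof: unbiased Bernoulli sparsification with $p_{ij}\propto a_{ij}^2$, a second-moment bound combined with Chebyshev/Markov for the approximation guarantee, and a Chernoff bound on $\sum_{ij}z_{ij}$ for the parameter count. Your bookkeeping is in fact more careful than the paper's --- in particular, the stray $h_1$-dependence you flag is a genuine issue that the paper's variance line $\Var(\hat Ax)\le\|x\|^2\|A\|_F^2\beta\alpha^2$ glosses over (the sum over output coordinates is not displayed), and your remark that it is absorbed via cushion-type normalizations in the spirit of \cite{Arora2018} is the correct resolution.
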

A proof of Proposition \ref{prop:alg_2} in the spirit of classical JL can be provided - however, here we introduce a Bernoulli scheme which is a minor modification of Algorithm 2 of \cite{Arora2018}.
\begin{proof}
Define the random variables $z_{ij}$ which take the value $1$ with probability $p_{ij} = \frac{2a_{ij}^2}{\beta\alpha^2\|A\|_F^2}$, and the value $0$ otherwise. Define $\hat{a}_{ij} = \frac{z_{ij}a_{ij}}{p_{ij}}$. One can now calculate that $\mathbb{E}\left( \hat{a}_{ij}\right) = a_{ij}$, and $\vari \left( \hat{a}_{ij}\right) \leq \beta\alpha^2\| A\|_F^2$. Using the above, one can further calculate that $\mathbb{E} (\hat{A}x) = Ax$, and $\vari (\hat{A}x) \leq \| x\|^2\|A\|^2_F\beta\alpha^2$. By Chebyshev's inequality, this gives us that 
\begin{equation*}
\Prob\left[ \| \hat{A}x - Ax \| \geq \alpha \|A\|_F\|x\|\right] \leq \beta.
\end{equation*}
Now, the expected number of non-zero entries in $\hat{A}$ is $\sum_{i, j}p_{ij} = \frac{2}{\beta\alpha^2}$. An application of Chernoff bounds now gives that with high probability the number of non-zero entries is $O\left(\log(h_1h_2)/\beta\alpha^2\right)$.
\end{proof}

\subsection{Hitting probability, capacity sensitivity and compression}

As discussed in the main text, here we use hitting probabilities associated to the decision boundary to define a concept ``capacity sensitivity'' of a neural net layer. The heuristic is, the less the capacity sensitivity of a layer, the greater the facility in compressing the layer to one with fewer parameters. This goes in the spirit of current state-of-art results on compression and generalization bounds (\cite{Arora2018}, \cite{Suzuki2018SpectralPruningCD}, \cite{Suzuki2020Compression}). In particular, in \cite{Arora2018} the authors provide the notions of noise sensitivity and noise cushions motivated by Gaussian noise injections. Our first proposed definition for "heat-diffusion noise cushions" and capacity sensitivity goes as follows:

\begin{dfn} \label{def:cap-sensitivity}
    Let $\eta \sim \mathcal{N}$ be distributed along a noise distribution $\mathcal{N}$ concentrated in ball $\| \eta \| \leq \eta_0 $. We define the capacity sensitivity $S(x, A_i; t)$ of a layer $A_i$ at the point $x$ as
    \begin{equation}\label{eq:cap_sensitivity}
        S(x, A_i; t) :=  \Exp_{\eta \sim \mathcal{N}} \frac{ \left| \psi_{E_f}(\phi(A_i(x + \| x \| \eta)), t) - \psi_{E_f}(\phi(A_i x), t) \right|}{ \left| \psi_{E_f}(\phi(A_i x), t)  \right|}.
    \end{equation}{}
    We denote the maximum and expected sensitivity respectively as
    \begin{equation}
        S^m(A_i; t) := \max_{x \in \mathcal{X}} S(x, A_i; t), \quad S^e(A_i; t) := \Exp_{x \sim \mathcal{X}} S(x, A_i; t).
    \end{equation}{}
\end{dfn}{}

Now we use Algorithm $1$ to investigate a method for compressing a layer $A_i$ so that the capacity properties are preserved. 
\begin{prop}\label{prop:3.8}
    Let a particular layer $A_i$ of the neural net be of dimension $h_1 \times h_2$.
    Then, Algorithm $1$ generates an approximation $\hat{A}_i$ with $O\left(\log(h_1h_2) / \beta\alpha^2\right)$ parameters for which we guarantee that $\psi_{E_f(y)}(\phi(\hat{A}_i)) $ is proportional to $ \psi_{E_f(y)}(\phi(A_i))$ up to an error $\epsilon$ with probability $\beta + S^m(A_i)/ \epsilon$.
\end{prop}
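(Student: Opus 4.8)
The plan is to run Algorithm~\ref{alg:JL} on the weight matrix $A_i$ with parameters $\alpha,\beta$, and then reinterpret the perturbation it produces as a ``heat-diffusion noise injection'' in the sense of Definition~\ref{def:cap-sensitivity}. Concretely, Proposition~\ref{prop:alg_2} guarantees that the output $\hat{A}_i$ has $O(\log(h_1h_2)/\beta\alpha^2)$ nonzero entries and that, with probability at least $1-\beta$, one has $\|\hat{A}_ix - A_ix\| \leq \alpha\|A_i\|_F\|x\|$ at the input $x$ under consideration. Since the activation $\phi$ is coordinatewise $1$-Lipschitz, this gives $\|\phi(\hat{A}_ix) - \phi(A_ix)\| \leq \alpha\|A_i\|_F\|x\|$, so the compressed layer moves the activation vector by a displacement of norm at most $\|x\|\eta_0$ with $\eta_0 := \alpha\|A_i\|_F$. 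Thus $\phi(\hat{A}_ix)$ is exactly a sample of $\phi(A_i(x+\|x\|\eta))$ for a zero-mean noise vector $\eta$ supported in $\|\eta\|\leq\eta_0$; we take the distribution $\mathcal{N}$ appearing in Definition~\ref{def:cap-sensitivity} to be the one induced by the Bernoulli scheme of Algorithm~\ref{alg:JL} at this scale $\eta_0$.

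With this identification, I would invoke the definition of capacity sensitivity directly. By Definition~\ref{def:cap-sensitivity},
\begin{equation*}
\Exp_{\eta}\frac{\left|\psi_{E_f}(\phi(A_i(x+\|x\|\eta)),t) - \psi_{E_f}(\phi(A_ix),t)\right|}{\left|\psi_{E_f}(\phi(A_ix),t)\right|} = S(x, A_i; t) \leq S^m(A_i; t).
\end{equation*}
Applying Markov's inequality to the nonnegative random variable inside this expectation, the relative discrepancy between $\psi_{E_f(y)}(\phi(\hat{A}_ix),t)$ and $\psi_{E_f(y)}(\phi(A_ix),t)$ exceeds $\epsilon$ with probability at most $S^m(A_i;t)/\epsilon$. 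On the complement of this event, $\psi_{E_f(y)}(\phi(\hat{A}_ix))$ equals $\psi_{E_f(y)}(\phi(A_ix))$ up to a multiplicative factor $1\pm\epsilon$, which is the claimed ``proportionality up to error $\epsilon$''.

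Finally I would close with a union bound over the two sources of failure: the event that Algorithm~\ref{alg:JL} fails its norm guarantee (probability $\leq\beta$) and the event that the Markov estimate fails (probability $\leq S^m(A_i)/\epsilon$). Outside this union, the parameter count $O(\log(h_1h_2)/\beta\alpha^2)$ and the capacity-proportionality both hold, so the conclusion holds with probability at least $1-\beta-S^m(A_i)/\epsilon$, i.e.\ it can fail with probability at most $\beta + S^m(A_i)/\epsilon$, as stated. (One should keep the running time $t$ fixed to the margin scale relevant for $f$ throughout, as in the discussion following Definition~\ref{def:compression}.)

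The main obstacle I anticipate is the identification step in the first paragraph. The perturbation produced by Algorithm~\ref{alg:JL} is neither Gaussian nor isotropic, and its support radius is controlled only with probability $1-\beta$ rather than almost surely, whereas Definition~\ref{def:cap-sensitivity} is phrased for a fixed noise law $\mathcal{N}$ bounded by $\eta_0$. Making the two match cleanly requires either (i) defining the capacity sensitivity and the associated ``noise cushion'' with respect to exactly the Algorithm~\ref{alg:JL} distribution, conditioned on its norm bound, or (ii) assuming the sensitivity bound $S^m(A_i;t)$ holds uniformly over all noise distributions of a given scale --- essentially a noise-stability hypothesis on $\psi_{E_f}$ analogous to the noise-cushion assumptions of \cite{Arora2018}. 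A secondary point is that one must ensure $\phi$ does not expand distances so that the displacement of the activation genuinely controls the displacement fed into $\psi_{E_f}$; for standard (ReLU-type) activations this is immediate, but it should be stated as a hypothesis.
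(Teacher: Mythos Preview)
Your proposal is essentially the paper's proof: the paper likewise writes $\psi_{E_f(y)}(\phi(\hat{A}_ix),t)=\psi_{E_f(y)}(\phi(A_i(x+\|x\|\eta)),t)$, applies Markov's inequality to the capacity-sensitivity expectation to bound the relative-discrepancy event by $S(x,A_i;t)/\delta$, and then combines this with the $\beta$-failure of Algorithm~\ref{alg:JL} via the same two-term split (phrased there as a decomposition of $\Prob[\hat{A}_\delta]$ according to whether the norm bound holds, rather than a union bound, but the arithmetic is identical). Your ``main obstacle'' is well spotted and is present in the paper too---the identification is asserted without comment, and the paper even remarks immediately afterward that the proposition ``may seem suboptimal and even somewhat of a tautology''; the intended reading is your option~(i), namely that the noise law $\mathcal{N}$ in Definition~\ref{def:cap-sensitivity} is taken to be the one induced by Algorithm~\ref{alg:JL} itself.
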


\begin{proof}
   Using the fact that $\psi_{E_f(y)}\left(\phi\left(\hat{A}x\right), t\right) = \psi_{E_f(y)}\left(\phi\left(A(x + \|x\|\eta)\right), t\right)$, let $A_\delta$ denote the event that 
    \begin{equation*}
     \left| \frac{\psi_{E_f(y)}\left(\phi(\hat{A}_ix), t\right) - \psi_{E_f(y)}\left(\phi(A_ix), t\right) }{  \psi_{E_f(y)}(\phi(A_i x), t)  } \right| = \left|\frac{ \psi_{E_f(y)}(\phi(A_i(x + \| x \| \eta)), t) - \psi_{E_f(y)}(\phi(A_i x), t) }{  \psi_{E_f(y)}(\phi(A_i x), t)  }\right| \geq \delta.
    \end{equation*}
    For every fixed $x \in \mathcal{X}$, using (\ref{eq:cap_sensitivity}) and Markov's inequality immediately implies
    \begin{equation}
        \Prob \left[ 
        A_\delta
        \right] \leq \frac{S(x, A_i; t)}{\delta}.
    \end{equation}{}

    Since 
    Algorithm $1$ yields controlled distortion, we have that given error parameters $\alpha, \beta$, one gets $\hat{A}$, a stochastic approximation of $A$ 
    such that 
    \begin{equation}
        \Prob\left[  \| \hat{A}_i(x) - A_i(x)  \| \geq \alpha \left\|A_i\right\|_F \|x\| \right] \leq \beta. 
    \end{equation}{}
    Here the reduced number of the parameters of $\hat{A}$ is $O\left(\log(h_1h_2) / \beta\alpha^2\right)$. With that, we have
    
    \begin{align}\label{eq:prob_A_delta}
        \Prob\left[ \hat{A}_\delta \right] &
         = \Prob\left[ \left(\frac{ \| \hat{A}_i(x) - A_i(x)  \| }{ \alpha \|A_i\|_F \|x\| } < 1 \right) \bigcap \hat{A}_\delta \right] 
         + \Prob\left[ \left(\frac{\| \hat{A}_i(x) - A_i(x) \| }{ \alpha \|A_i\|_F \|x\| } \geq 1 \right) \bigcap \hat{A}_\delta\right]\\
        & \leq  \Prob \left[ A_\delta \right] + \Prob\left[ \frac{\| \hat{A}_i(x) - A_i(x) \| }{ \alpha \|A_i\|_F \|x\| } \geq 1 \right] \nonumber\\
        & \leq \frac{S(x, A_i; t)}{\delta} + \beta.\nonumber
    \end{align}

    This concludes the claim.
\end{proof}{}

The above proposition may seem suboptimal and even somewhat of a tautology, but we include all the details, because one way forward is now evidently clear. In particular, the  step in (\ref{eq:prob_A_delta}) can be improved if we know that if the distance between two vectors $z$ and $w$ is bounded above, then $\psi_{E_f}(z, t) - \psi_{E_f}(w, t)$ is bounded above. In plain language, we would like to say the following: if two points are close, then the respective probabilities of Brownian particles starting from them and hitting $\mathcal{N}_f$ are also close. This is too much to expect in general, but can be accomplished when one places, in addition, certain nice assumptions on the decision boundary.
\subsection{Proof of first part of Proposition \ref{prop:gen-bound-flat-db} of the main text}

We will break down the proof over three propositions, to illustrate the flow of ideas. The first is the case of the hyperplane which we discussed to some extent above in our curvature analysis (see also Lemma \ref{lem:half-space-hit-prob} of the main text).
\begin{prop} \label{prop:hyperplane}
If the decision boundary $\mathcal{N}_f$ is a hyperplane, then  
given $\beta, \epsilon $, one can find an $\alpha$ for which the compression scheme of Algorithm $1$ gives a compression of a layer $A_i$ of dimension $h_1 \times h_2$ to $\hat{A}_i$ with $O\left(\log(h_1h_2)/\beta\alpha^2\right)$ parameters such that 
\begin{equation*}
    \Prob\left[\| A_i(x) - \hat{A}_ix \| \leq \alpha \|A_i\|_F\|x\| \right] \geq 1 - \beta,
\end{equation*}
and 
\begin{equation*}
\left|\psi_{E_f}(A_ix, t) - \psi_{E_f} (\hat{A}_ix, t)\right| \leq \epsilon
\end{equation*}
with probability at least $1 - \beta$. Here $t = O\left(\text{dist}(A_i(x), \mathcal{N}_f)^2\right)$. The choice of $\alpha$ is made explicit by (\ref{eq:alpha_choice}) below.
\end{prop}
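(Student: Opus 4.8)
The plan is to reduce the claim to a one-line Lipschitz estimate fed into the distortion guarantee of Algorithm~1. The starting point is the explicit formula from (the proof of) Lemma~\ref{lem:half-space-hit-prob}: since $\mathcal{N}_f$ is a hyperplane, for \emph{any} point $z$ one has $\psi_{E_f}(z,t) = 2\Phi\!\left(-d_z/\sqrt{t}\,\right)$, where $d_z := \text{dist}(z,\mathcal{N}_f)$ (project the Brownian motion onto the hyperplane's unit normal $\nu$ and apply the reflection principle). The scalar function $s \mapsto 2\Phi(-s/\sqrt{t})$ has derivative of absolute value at most $\tfrac{2}{\sqrt{2\pi t}}=\sqrt{2/(\pi t)}$, hence is $\sqrt{2/(\pi t)}$-Lipschitz; and $z\mapsto d_z$ is $1$-Lipschitz because $|d_z-d_w|=|\langle z-w,\nu\rangle|\le\|z-w\|$. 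Composing these,
\begin{equation}\label{eq:hitprob-lip}
  \bigl|\psi_{E_f}(A_ix,t)-\psi_{E_f}(\hat A_i x,t)\bigr|\;\le\;\sqrt{\tfrac{2}{\pi t}}\;\bigl\|A_ix-\hat A_i x\bigr\|.
\end{equation}

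Next I would apply Proposition~\ref{prop:alg_2}: for prescribed $\alpha,\beta\in(0,1)$ Algorithm~1 outputs a stochastic $\hat A_i$ with $O(\log(h_1h_2)/\beta\alpha^2)$ nonzero entries such that $\|A_ix-\hat A_i x\|\le\alpha\|A_i\|_F\|x\|$ with probability at least $1-\beta$. Fixing $t=c\,\text{dist}(A_ix,\mathcal{N}_f)^2$ for a constant $c>0$ (which is the meaning of $t=O(\text{dist}(A_ix,\mathcal{N}_f)^2)$), I then set
\begin{equation}\label{eq:alpha_choice}
  \alpha \;:=\; \epsilon\,\sqrt{\tfrac{\pi c}{2}}\;\frac{\text{dist}(A_ix,\mathcal{N}_f)}{\|A_i\|_F\,\|x\|},
\end{equation}
so that $\sqrt{2/(\pi t)}\,\alpha\|A_i\|_F\|x\|=\epsilon$. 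On the (probability $\ge 1-\beta$) event where the distortion bound of Proposition~\ref{prop:alg_2} holds, \eqref{eq:hitprob-lip} then gives $|\psi_{E_f}(A_ix,t)-\psi_{E_f}(\hat A_ix,t)|\le\epsilon$, and the parameter count $O(\log(h_1h_2)/\beta\alpha^2)$ is exactly the one from Proposition~\ref{prop:alg_2} with this $\alpha$. Thus all three assertions hold simultaneously, the second being a deterministic consequence of the first.

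There is no substantial obstacle here; the only care needed is bookkeeping. The Lipschitz constant in \eqref{eq:hitprob-lip} scales like $1/\text{dist}(A_ix,\mathcal{N}_f)$, which would be dangerous near the boundary, but this is precisely offset by the factor $\text{dist}(A_ix,\mathcal{N}_f)$ in \eqref{eq:alpha_choice}, so $\alpha$ stays bounded away from $0$ as long as $A_ix$ is classified with positive margin; the degenerate case $d_{A_ix}=0$ is vacuous since then $\psi_{E_f}(A_ix,t)=1$. Note also that \eqref{eq:hitprob-lip} legitimately uses a \emph{common} time $t$ on both sides, since in the statement $t$ is pinned to $\text{dist}(A_ix,\mathcal{N}_f)^2$ and not to the perturbed distance $d_{\hat A_i x}$; the entire effect of compression therefore enters only through the displacement $\|A_ix-\hat A_i x\|$. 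If a version uniform over the sample $\mathcal{X}$ is desired, one replaces $\text{dist}(A_ix,\mathcal{N}_f)$ in \eqref{eq:alpha_choice} by $\min_{x\in\mathcal{X}}\text{dist}(A_ix,\mathcal{N}_f)$ and absorbs a $\log|\mathcal{X}|$ factor into the failure probability via a union bound, but since the stated proposition is the pointwise version I would only remark on this.
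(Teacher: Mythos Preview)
Your proof is correct and follows essentially the same route as the paper: both start from the explicit half-space formula $\psi_{E_f}(z,t)=2\Phi(-d_z/\sqrt t)$, bound $|\psi_{E_f}(A_ix,t)-\psi_{E_f}(\hat A_ix,t)|$ in terms of the displacement $\|A_ix-\hat A_ix\|$, and then invoke the distortion guarantee of Algorithm~1 with $\alpha$ chosen to make that bound equal $\epsilon$.

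The only difference is cosmetic. The paper keeps the exact increment $2\bigl(\Phi(-(d-\delta)/\sqrt t)-\Phi(-d/\sqrt t)\bigr)$ (the ``slab'' probability) and defines $\alpha$ \emph{implicitly} through the equation $2\bigl(\Phi(-(d-\delta)/\sqrt t)/\Phi(-d/\sqrt t)-1\bigr)=\epsilon$ with $\delta=\alpha\|A_i\|_F\|x\|$; you instead linearize via the global Lipschitz constant $\sqrt{2/(\pi t)}$ of $s\mapsto 2\Phi(-s/\sqrt t)$, which yields the closed-form $\alpha$ in your \eqref{eq:alpha_choice}. Your bound is marginally looser (you use the maximum of the Gaussian density rather than its value near $d/\sqrt t$), but this costs nothing for the stated conclusion and buys an explicit expression for $\alpha$. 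Your treatment is also more faithful to the \emph{absolute} error $|\psi-\psi|\le\epsilon$ actually asserted in the proposition, whereas the paper's displayed choice of $\alpha$ is phrased for the relative error.
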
 

\begin{proof}

Let $w, z \in \mathbb{R}^n$ be two points such that $\| w -  z\| \leq \delta$.
It is clear that the maximum value of $\left|\psi_{E_f}(w, t) - \psi_{E_f}(z, t)\right|$ is given by the probability that a Brownian particle starting from a point $x \in \mathbb{R}^n$ strikes a ``slab''
of thickness $\delta$ at a distance $d - \delta$ from  $x$ (a slab is a tubular neighborhood of a hyperplane) within time $t$. Without loss of generality, assume that the point $z$ is at a distance $d$ from the hyperplane $\mathcal{N}_f$. Then,

\begin{align*}
    0 \leq \left|\psi_{E_f}(w, t) - \psi_{E_f}(z, t)\right| & \leq 2\left(\Phi\left( - \frac{d - \delta}{\sqrt{t}}\right) - \Phi\left( - \frac{d}{\sqrt{t}}\right)\right),
\end{align*}
 
which implies that 

\begin{align*}
    \left|\frac{\psi_{E_f}(w, t) - \psi_{E_f}(z, t)}{\psi_{E_f}(z, t)}\right| & \leq 2\left( \frac{\Phi\left( - \frac{d - \delta}{\sqrt{t}}\right)}{\Phi\left( - \frac{d}{\sqrt{t}}\right)} - 1\right).
\end{align*}

From the above calculation, we get that 
\begin{align}\label{eq:hyperplane_est}
 \frac{\|   A_i(x)  -  \hat{A}_i(x) \| }{ \| A_i \|_F\| x\|  }  \leq \alpha & \implies    \frac{ \left| \psi_{E_f} (A_i (x), t) - \psi_{E_f} (\hat{A}_i (x), t) \right|}{ \left| \psi_{E_f} (A_i (x), t)  \right|} \leq 2\left( \frac{\Phi\left( - \frac{d - \delta}{\sqrt{t}}\right)}{\Phi\left( - \frac{d}{\sqrt{t}}\right)} - 1\right),
\end{align}
where 
\begin{equation*}
\delta = \alpha \|A\|_F \|x\|.
\end{equation*}
We wish to apply the above estimate in the regime $t = O(d^2)$. For the sake of specificity, let $t = c_nd^2$. 
Now, given $\epsilon$, from (\ref{eq:hyperplane_est}) one can choose $\alpha$ such that 
\begin{equation*}
\Prob\left[ \left(\frac{ \| \hat{A}_i(x) - A_i(x)  \| }{ \alpha \|A_i\|_F \|x\| } \leq 1 \right) \bigcap A_\epsilon \right] = 0.
\end{equation*}
It suffices to choose $\alpha$ such that when $t = c_nd^2$,
\begin{equation}\label{eq:alpha_choice}
    2\left( \frac{\Phi\left( - \frac{d - \delta}{\sqrt{t}}\right)}{\Phi\left( - \frac{d}{\sqrt{t}}\right)} - 1\right) = \epsilon, \text{ where } \delta = \alpha \|A\|_F \|x\|. 
\end{equation}
Then, 
\begin{equation*}
\Prob[A_\epsilon] \leq \beta.
\end{equation*}
\end{proof}
\begin{remark}
  In the above calculation, the nonlinearity $\phi$ can be introduced easily. Clearly, by the compression properties of Algorithm $1$, we have that $\|\hat{A}_i(\phi x) - A_i(\phi x)\| \leq 
  \alpha\|A_i\|_F\|\phi x\| \leq \alpha\lambda\|A_i\|_F\|x\|$, where $\lambda$ is the Lipschitz constant associated to the nonlinearity $\phi$. In particular, if $\phi$ is the ReLU, then $\lambda = 1$.
 This gives us that if $\| \hat{A}_i (x) - A_i(x) \| \leq \alpha \|A\|_F \|x\|$, 
\begin{align}\label{eq:hyperplane_est_nonlin}
 \frac{\|  A_i(\phi x)   -  \hat{A}_i(\phi x) \| }{ \| A_i \|_F\| x\|  }  \leq \alpha\lambda & \implies    \frac{ \left| \psi_{E_f} (A_i (x), t) - \psi_{E_f} (\hat{A}_i (x), t) \right|}{ \left| \psi_{E_f} (A_i x, t)  \right|} \leq 2\left( \frac{\Phi\left( - \frac{d - \delta}{\sqrt{t}}\right)}{\Phi\left( - \frac{d}{\sqrt{t}}\right)} - 1\right),
\end{align}
where 
\begin{equation*}
\delta = \alpha\lambda \|A\|_F \|x\|.
\end{equation*}  
\end{remark}
We mention in passing that the above proposition gives a connection between our capacity sensitivity $S(x, A; t)$ and the noise sensitivity $\psi_{\mathcal{N}}$ defined by \cite{Arora2018}.

Now consider the case of a curved hypersurface, denoted by $H$ (which is being thought of as the decision boundary $\mathcal{N}_f$), which is ``sandwiched'' between two hyperplanes $H_1$ and $H_3$. Assume that the hypersurface is at a distance $d$ from the point $z$, and the distance between $H_1$ and $H_3$ is $l$.

\begin{prop}\label{prop:sandwiched_hyperplane}
In the above setting, all the conclusions of Proposition \ref{prop:hyperplane} apply to $H$.
\end{prop}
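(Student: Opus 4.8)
The plan is to rerun the proof of Proposition~\ref{prop:hyperplane} essentially verbatim, replacing the one place where the exact half-space hitting probability is used by a two-sided monotonicity estimate for the sandwiched error set. First I would fix coordinates with $z$ at the origin and with $H_1,H_3$ orthogonal to $e_1$, so that $H_1=\{x_1=d\}$ is the nearer hyperplane and $H_3=\{x_1=d+l\}$. Since $H$ lies between them, the error set obeys
\begin{equation*}
\{x_1 \geq d+l\}\cap B(x,r) \;\subseteq\; E_f\cap B(x,r) \;\subseteq\; \{x_1 \geq d\}\cap B(x,r).
\end{equation*}
(If $H_1,H_3$ are not parallel one simply lets $l$ be the width of the thinnest slab normal to $z$ containing $H$ near the point of $H$ closest to $z$.) The restriction to $B(x,r)$ is treated exactly as in Proposition~\ref{prop:hyperplane}: in the regime $t=O(d^2)$ a Brownian particle started at $z$ exits $B(x,r)$ with negligible probability, so $\psi_{E_f}(z,t)$ may be compared with the half-space probabilities supplied by Lemma~\ref{lem:half-space-hit-prob}.

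Next I would control the fluctuation of $\psi_{E_f}$ under a small translation, which is what the distortion guarantee of Algorithm~\ref{alg:JL} produces. If $\|w-z\|\leq\delta$, then $\psi_{E_f}(w,t)=\psi_{E_f-(w-z)}(z,t)$, and translating the two-sided inclusion above by a vector of norm at most $\delta$ gives $\{x_1\geq d+l+\delta\}\subseteq E_f-(w-z)\subseteq\{x_1\geq d-\delta\}$. Invoking monotonicity of hitting probabilities (used already in Proposition~\ref{prop:max-curvature-bound}) together with the reflection-principle formula, both $\psi_{E_f}(w,t)$ and $\psi_{E_f}(z,t)$ lie in $\bigl[\,2\Phi(-\tfrac{d+l+\delta}{\sqrt t}),\,2\Phi(-\tfrac{d-\delta}{\sqrt t})\,\bigr]$, whence
\begin{equation*}
\bigl|\psi_{E_f}(w,t)-\psi_{E_f}(z,t)\bigr| \;\leq\; 2\Bigl(\Phi\bigl(-\tfrac{d-\delta}{\sqrt t}\bigr)-\Phi\bigl(-\tfrac{d+l+\delta}{\sqrt t}\bigr)\Bigr).
\end{equation*}
Taking $w=\hat A_i x$, $z=A_i x$ and $\delta=\alpha\|A_i\|_F\|x\|$ (or $\alpha\lambda\|A_i\|_F\|x\|$ after absorbing the Lipschitz constant $\lambda$ of the nonlinearity $\phi$, as in the Remark after Proposition~\ref{prop:hyperplane}) turns this into the $H$-analogue of $(\ref{eq:hyperplane_est})$.

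From here the argument closes exactly as before. In the time scale $t=c_n d^2$ the right-hand side above is a continuous increasing function of $\delta$, so for any target $\epsilon$ exceeding its limiting value as $\delta\searrow 0$ one selects $\alpha$ (equivalently $\delta$) small enough that the bound is $\leq\epsilon$ — this is the one genuinely new computation, the analogue of $(\ref{eq:alpha_choice})$. Then the event $\{\|\hat A_i x-A_i x\|\leq\alpha\|A_i\|_F\|x\|\}\cap\{|\psi_{E_f}(A_i x,t)-\psi_{E_f}(\hat A_i x,t)|>\epsilon\}$ is empty, and running Algorithm~\ref{alg:JL} with this $\alpha$ yields $\hat A_i$ with $O(\log(h_1 h_2)/\beta\alpha^2)$ parameters satisfying $\Prob[\|\hat A_i x-A_i x\|\leq\alpha\|A_i\|_F\|x\|]\geq 1-\beta$, which delivers both conclusions of Proposition~\ref{prop:hyperplane} for $H$.

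The main obstacle — and the reason the statement is strictly weaker here than for a genuine hyperplane — is the irreducible term $2\bigl(\Phi(-\tfrac{d}{\sqrt t})-\Phi(-\tfrac{d+l}{\sqrt t})\bigr)$ that survives as $\delta\to 0$: the translation estimate cannot separate $E_f$ from either enclosing half-space, so the compression tolerance $\epsilon$ cannot be pushed below this floor unless the slab width $l$ is itself shrunk comparably. Thus ``all conclusions of Proposition~\ref{prop:hyperplane} apply'' must be read with $l$ small relative to $\sqrt t\sim d$, and making the resulting dependence of $\alpha$ on $(\epsilon,l,d)$ explicit is the only nontrivial part; everything else is inherited verbatim from the hyperplane case, including the handling of $\phi$.
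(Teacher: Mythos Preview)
Your proposal is correct and follows essentially the same route as the paper: both sandwich the error set between the two half-spaces determined by $H_1$ and $H_3$, invoke monotonicity of hitting probabilities together with the half-space formula, and then rerun the remainder of Proposition~\ref{prop:hyperplane}. The paper writes the resulting difference bound as a maximum over four $\Phi$-differences and divides by the lower bound $\Phi(-(d+l)/\sqrt t)$ to control the ratio, whereas you collapse everything into the single interval $[2\Phi(-(d+l+\delta)/\sqrt t),\,2\Phi(-(d-\delta)/\sqrt t)]$; these are equivalent up to constants. Your explicit discussion of the irreducible floor $2(\Phi(-d/\sqrt t)-\Phi(-(d+l)/\sqrt t))$ that persists as $\delta\to 0$ is a point the paper leaves implicit when it says ``the rest of the argument is similar.''
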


\begin{proof}
We have that $\left| \psi_{\mathcal{N}_f}(z, t) - \psi_{\mathcal{N}_f}(w, t) \right| $ is less than or equal to the maximum of the quantities $ \left|\Phi\left( -\frac{d}{\sqrt{t}}\right) - \Phi\left( -\frac{d + \delta + l}{\sqrt{t}}\right)\right|$, $\left|\Phi\left( -\frac{d}{\sqrt{t}}\right) - \Phi\left( -\frac{d - \delta + l}{\sqrt{t}}\right)\right|$, $\left|\Phi\left( -\frac{d + l}{\sqrt{t}}\right) - \Phi\left( -\frac{d + \delta}{\sqrt{t}}\right)\right|,$ $ \left|\Phi\left( -\frac{d + l}{\sqrt{t}}\right) - \Phi\left( -\frac{d - \delta}{\sqrt{t}}\right)\right|$. Let $M(d, t)$ denote this maximum. As argued before, $\psi_{\mathcal{N}_f}(z, t) \geq \Phi\left(-\frac{d + l}{\sqrt{t}}\right)$. That gives, 
\begin{equation*}
\frac{\left|\psi_{E_f}(z, t) - \psi_{E_f}(w, t)\right|}{\left| \psi_{E_f}(z, t)\right|} \leq \frac{M(d, t)}{\Phi\left(-\frac{d + l}{\sqrt{t}}\right)}.
\end{equation*}
The rest of the argument is similar to the proof of Proposition \ref{prop:hyperplane}, and we skip the details.
\end{proof}
 
Before moving on to the case of controlled curvature, we need a technical lemma. We state it explicitly because it seems to us 
that it could have potentially other applications. 

\begin{lem}\label{lem:hitting_cube}
Let $p \in \mathbb{R}^n$, and consider a cuboid $Q \subset \mathbb{R}^n$ with side lengths $a_1,\cdots, a_n$. Let $q \in Q$ be the unique point which attains $d = \| p - q \| = \dist (p, Q)$. Lastly, assume that the line segment $\overline{pq}$ is perpendicular to the side of $Q$ on which $q$ lies. Then 
\begin{equation}
\psi_Q(p, t) = 2^n\left(\Phi\left(-\frac{a_1}{\sqrt{t}}\right) - \Phi\left(-\frac{a_1 + d}{\sqrt{t}}\right)\right) \prod_{j = 2}^n \left(\Phi\left(\frac{a_j}{2\sqrt{t}}\right) - \Phi\left(-\frac{a_j}{2\sqrt{t}}\right)\right).
\end{equation}
\end{lem}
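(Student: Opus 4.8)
The plan is to reduce the hitting probability to one–dimensional computations, one per coordinate, exploiting the perpendicularity hypothesis. First I would choose coordinates adapted to the geometry: rotate and translate so that the face of $Q$ containing $q$ lies in the hyperplane $\{x_1 = 0\}$, so that $Q = [0, a_1] \times \prod_{j=2}^n [-a_j/2, a_j/2]$, and — because $\overline{pq}$ is perpendicular to that face and $q$ is the unique nearest point — $p = (-d, 0, \dots, 0)$ with $q = 0$. Writing the Brownian motion started at $p$ as $\omega(s) = (\omega_1(s), \dots, \omega_n(s))$, the coordinate processes are independent one–dimensional Brownian motions (Definition \ref{def:bm}) with $\omega_1(0) = -d$ and $\omega_j(0) = 0$ for $j \ge 2$, and $\omega$ enters $Q$ within time $t$ exactly when there is some $s \in [0, t]$ with $\omega_1(s) \in [0, a_1]$ and $\omega_j(s) \in [-a_j/2, a_j/2]$ for every $j \ge 2$.

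Second I would treat the distinguished coordinate $\omega_1$ by the reflection principle (Eq. \ref{eq:reflection_principle}): since $\omega_1$ starts at $-d < 0$ and is continuous, entering the window $[0, a_1]$ forces a crossing of the level $0$, and conditioning on the first crossing time together with the strong Markov property turns the ``one–sided barrier at $0$, slab width $a_1$, seen from distance $d$'' picture into the factor $2(\Phi(-a_1/\sqrt{t}) - \Phi(-(a_1+d)/\sqrt{t}))$ after invoking Remark \ref{lem:bm-distrib}. For each transverse coordinate $j \ge 2$ the relevant contribution is that $\omega_j$ lies in the symmetric window $[-a_j/2, a_j/2]$; since $\omega_j(t) \sim N(0, t)$ (Remark \ref{lem:bm-distrib}) this yields $\Phi(a_j/(2\sqrt{t})) - \Phi(-a_j/(2\sqrt{t}))$, with the reflection/hitting bookkeeping across all $n$ directions accounting for the overall prefactor $2^n$. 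Multiplying the $n$ contributions and using independence of the $\omega_i$ then gives the stated formula.

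The step I expect to be the main obstacle — and the one that really uses the perpendicularity hypothesis — is the factorization itself: a priori, ``$\exists\, s \le t$ with all coordinates simultaneously inside their windows'' is not the product of the one–dimensional events, because the constraints must all hold at a common time $s$. To make this rigorous I would set up the reflection simultaneously in all $n$ coordinate directions (reflecting the path the first time it crosses each bounding hyperplane of the relevant slab), so that the joint hitting event becomes an event about $\omega(t)$ alone, whose probability then factors by independence of the coordinates; the perpendicularity of $\overline{pq}$ to the face is exactly what makes $x_1$ the sole ``barrier'' direction and the transverse directions symmetric about the axis through $q$, which keeps the multi-dimensional reflection clean. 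I would also carefully re-derive the constants and the precise form of the $x_1$-factor in this reflection computation, since that is where an error (or an implicit small-$t$ / large-box approximation) would most easily slip in; natural consistency checks are the half-space case of Lemma \ref{lem:half-space-hit-prob} together with the slab and the semi-infinite-bar cases, all of which should be recovered by letting the appropriate $a_i \to \infty$.
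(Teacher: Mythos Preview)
Your approach --- choosing adapted coordinates, using the independence of the coordinate Brownian motions, and invoking the reflection principle --- is exactly the route the paper takes; its proof is a two-line sketch citing coordinate independence and the reflection principle and pointing back to Lemma~\ref{lem:half-space-hit-prob}. Your explicit identification of the simultaneous-time factorization as the delicate step (and the proposed multi-coordinate reflection to reduce the hitting event to a statement about $\omega(t)$) is in fact more careful than the paper's own treatment, which does not address that point at all.
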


\begin{proof}
The proof follows easily from the fact that in an $n$-dimensional Brownian motion, all the coordinates execute the standard $1$-dimensional Brownian motion independently, and then by applying the reflection principle. The ideas are very similar to the proof of Lemma \ref{lem:half-space-hit-prob} of the main text.
\end{proof}
As an immediate application of Lemma \ref{lem:hitting_cube}, we now show that the nice properties of the decision boundaries as mentioned in Propositions \ref{prop:hyperplane} and \ref{prop:sandwiched_hyperplane} above are also shared by hypersurfaces with controlled curvature.

\begin{figure}
    \centering
    \includegraphics[width=0.45\textwidth]{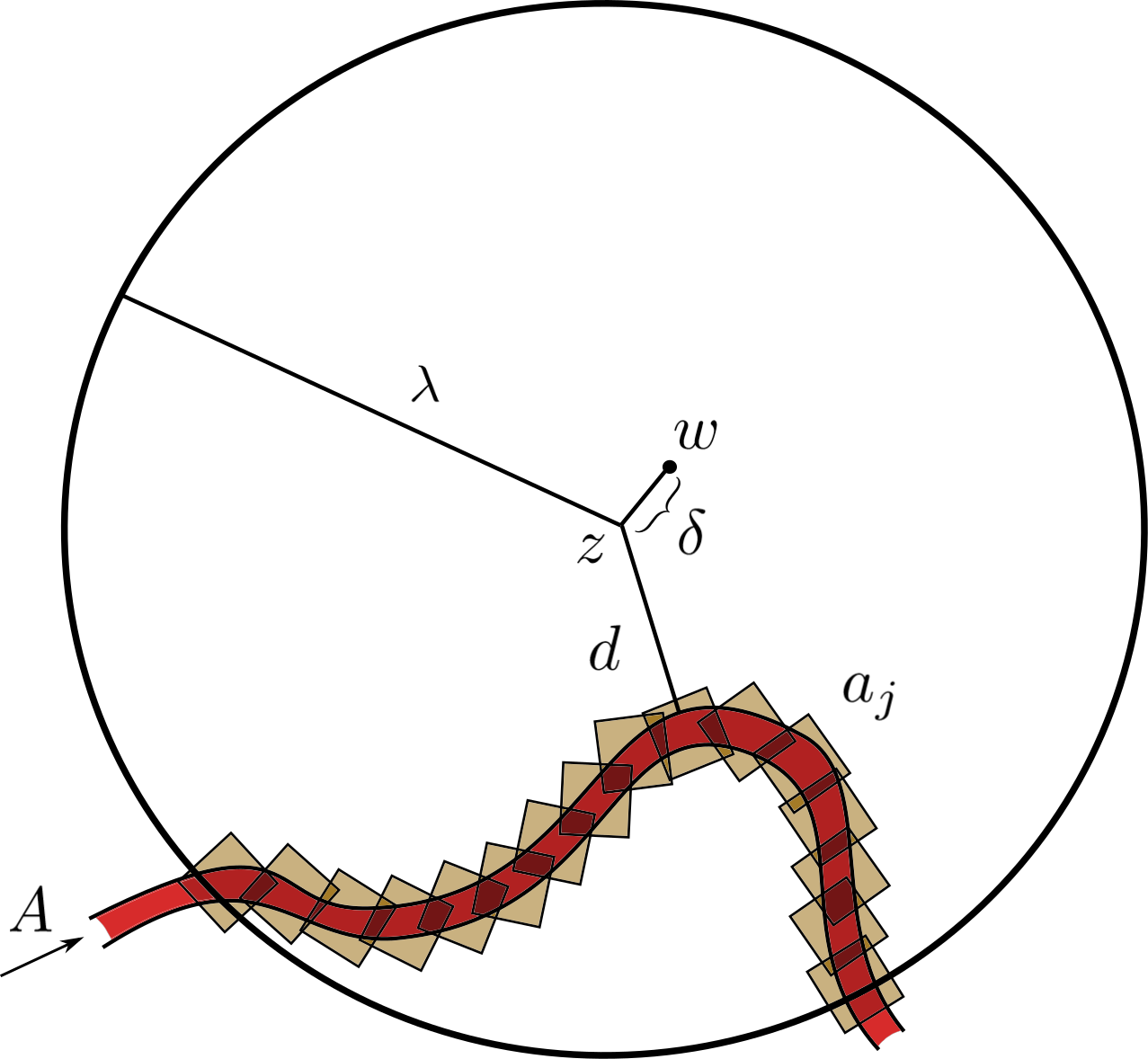}
    \caption{Covering by cuboids of side length $\delta$.}
    \label{fig:prop_3_12}
\end{figure}

\begin{prop}\label{prop:controlled_curv}
    Let $H$ be a hypersurface which is diffeomorphic to a hyperplane, of curvature $\kappa$ (in the sense of (\ref{eq:def_curv})) satisfying $r \leq \kappa \leq R$. Then the conclusion of Proposition \ref{prop:hyperplane} applies to $H$.
\end{prop}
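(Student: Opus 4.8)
The plan is to reduce the statement to the compression guarantee of Algorithm~1 together with a \emph{continuity estimate} for hitting probabilities under a small perturbation of the base point, where the single slab between $H_1$ and $H_3$ used in Proposition~\ref{prop:sandwiched_hyperplane} is now replaced by a union of small cuboids controlled via Lemma~\ref{lem:hitting_cube} and the upper curvature bound $\kappa\le R$ (the lower bound $\kappa\ge r$ is not needed here). First I would invoke Proposition~\ref{prop:alg_2}: applied to the layer $A_i$ with parameters $\alpha,\beta$ to be fixed later, it produces $\hat A_i$ with $O(\log(h_1h_2)/\beta\alpha^2)$ nonzero entries such that, with probability at least $1-\beta$, $\|A_i(x)-\hat A_i x\|\le\alpha\|A_i\|_F\|x\|=:\delta$. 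Writing $z=A_i(x)$, $w=\hat A_i x$, $d=\dist(z,\mathcal{N}_f)$ and $t=c_nd^2$, it then suffices to prove the deterministic implication ``$\|z-w\|\le\delta\ \Rightarrow\ |\psi_{E_f}(z,t)-\psi_{E_f}(w,t)|\le\epsilon$'' for $\alpha$ (hence $\delta$) small enough; this is exactly the device used at the end of the proof of Proposition~\ref{prop:hyperplane}, where the event $A_\epsilon\cap\{\|\hat A_ix-A_ix\|\le\delta\}$ is shown to be empty, and it upgrades automatically to ``$\le\epsilon$ with probability $\ge1-\beta$''.

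For the geometric estimate, let $q$ be the foot of the perpendicular from $z$ to $H$ and $H^*=T_qH$. By the reflection principle~(\ref{eq:reflection_principle}) applied coordinatewise, a Brownian path from $z$ exits $B(z,\rho_0)$ within time $t$ with a probability $\eta_1(\rho_0)$ that decays like $\Phi(-c\rho_0/\sqrt t)$ for an absolute constant $c>0$, so I would first fix $\rho_0$, a suitable multiple of $d$, with $\eta_1(\rho_0)\le\epsilon/3$; here the time constant $c_n$ is taken small enough — which is consistent with $t=O(\dist(A_i(x),\mathcal{N}_f)^2)$ — so that $\Phi(-1/\sqrt{c_n})$ is as small as desired. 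On the complementary event the path only meets $H\cap B(q,2\rho_0)$; since $\kappa\le R$ gives an interior and an exterior tangent ball of radius $1/R$, provided $\rho_0<1/(2R)$ this piece is a graph over $H^*$ contained in a slab $\Sigma$ of width $l\le CR\rho_0^2$, whose ``horizontal'' footprint I cover by $O((\rho_0/\delta)^{n-1})$ axis-parallel cuboids of side $\delta$ and short side $\asymp\delta(1+R\rho_0)$. Summing the cuboid hitting probabilities from Lemma~\ref{lem:hitting_cube} yields an explicit upper bound for $\psi_{E_f}(z,t)$; conversely, because this portion of $H$ is a single graph patch, a path from $z$ that reaches the far face of $\Sigma$ over the footprint disc of radius $\rho_0/2$ must cross $H$, giving a matching lower bound of the form $\Phi(-(d+l)/\sqrt t)-\eta_1$.

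Next I would compare $z$ and $w$. Since $\|z-w\|\le\delta$, the closest-point distance changes by at most $\delta$, and the cuboid cover built around $q$ still covers $H$ near the foot point of $w$ after enlarging each cuboid by $\delta$ in every direction; the difference between the enlarged and the original covers is a ``one--cell shell'' whose hitting probability, again via Lemma~\ref{lem:hitting_cube}, is a finite sum of products of Gaussian increments each carrying a factor $\Phi(-\tfrac{a}{\sqrt t})-\Phi(-\tfrac{a+\delta}{\sqrt t})$ and therefore tends to $0$ as $\delta\searrow0$ with $\rho_0,R,d,t$ fixed. Assembling the bounds,
\begin{equation*}
\bigl|\psi_{E_f}(z,t)-\psi_{E_f}(w,t)\bigr|\ \le\ 2\Bigl(\Phi\bigl(-\tfrac{d-\delta-l}{\sqrt t}\bigr)-\Phi\bigl(-\tfrac{d+\delta+l}{\sqrt t}\bigr)\Bigr)+(\text{shell term})+2\eta_1 ,
\end{equation*}
and with $t=c_nd^2$ the quantity $\Phi(-d/\sqrt t)=\Phi(-1/\sqrt{c_n})$ is small by concentration of measure (cf.\ Fig.~\ref{fig:hyperplanes}), so the first term is $O(l/\sqrt t)\cdot\Phi(-d/\sqrt t)$ up to the mean value theorem and can be kept $\le\epsilon/3$ in the regime $l\lesssim R\rho_0^2<d$; choosing $\rho_0$ as above and then $\alpha$ small — an explicit value being obtained by equating the right-hand side to $\epsilon$, in the spirit of~(\ref{eq:alpha_choice}) — yields $|\psi_{E_f}(z,t)-\psi_{E_f}(w,t)|\le\epsilon$, which is the conclusion of Proposition~\ref{prop:hyperplane} for $H$.

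The main obstacle is the middle step: one must show that bounded curvature genuinely forces the relevant portion of $H$ into a slab that is thin compared with $\sqrt t$ — this is where $\kappa\le R$ is combined with the (small) time scale $t=c_nd^2$ and the concentration-of-measure smallness of $\Phi(-d/\sqrt t)$ — and that the cuboid decomposition together with its one--cell shell can be estimated \emph{uniformly} in the perturbation $\delta$. A minor but necessary point is that the argument is meaningful only while the effective range $\rho_0$ stays below $1/(2R)$, so that $H$ is a graph over $H^*$ there; for base points with $d$ comparable to $1/R$ the same reasoning goes through with worse absolute constants.
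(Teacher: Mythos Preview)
Your argument is correct in spirit and reaches the right conclusion, but it follows a more roundabout route than the paper's. You build two-sided bounds for $\psi_{E_f}(z,t)$ and $\psi_{E_f}(w,t)$ separately (localize to $B(z,\rho_0)$, trap the visible patch of $H$ in a slab of width $l\lesssim R\rho_0^2$, then use hyperplane-type bounds on each side), and finally subtract the envelopes. In effect you are rederiving Proposition~\ref{prop:sandwiched_hyperplane} after a localization step; the cuboid cover you invoke for the slab $\Sigma$ is largely redundant in your assembled bound, since summing the cuboid estimates over the whole slab just reproduces the hyperplane quantity $\Phi(-(d\mp l)/\sqrt t)$ that you end up using anyway.

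The paper instead bounds the difference directly. Using translation invariance one has $|\psi_E(z,t)-\psi_E(w,t)|\le \psi_A(z,t)$, where $A$ is the region sandwiched between $H$ and its translate $H-\delta$; this is a $\delta$-thick shell along $H$, not the full slab $\Sigma$. One then localizes to a ball $B(z,\lambda_t)$, covers $A\cap B(z,\lambda_t)$ by $N\sim_{r,R,\lambda_t}\delta^{-(n-1)}$ cuboids of all side lengths $\sim\delta$, and sums the estimates of Lemma~\ref{lem:hitting_cube}; each cuboid contributes $O((\delta/\sqrt t)^n)$, so the total is $O(\delta)$. This is where the cuboid lemma actually earns its keep: it is applied to the \emph{thin} set $A$, not to $\Sigma$. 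Your ``shell term'' plays the analogous role, but in your write-up it sits alongside the $\Phi$-difference and makes the structure less transparent. Also note that the paper lets the constants in the covering depend on both curvature bounds $r\le\kappa\le R$ (for multiplicity control of the cover), whereas you assert only the upper bound is needed; your graph-patch argument is fine for that purpose, but be aware the paper does not drop the lower bound.
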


\begin{proof}
Let $z$ be a point such that $d := \text{dist}(x, H)$, and $w$ be another point such that $ z - w = \delta$. Let $E$ denote the misclassification region defined by $H$. 
\begin{equation*}
\left| \psi_E(z, t) - \psi_E(w, t)\right| \leq \psi_A(z, t),
\end{equation*}
where $A$ denotes the region ``sandwiched'' between $H$ and $H - \delta$.
As before, we will ultimately use $t$ in the regime $O(d^2)$.
Now, given $t$, start by considering a ball $B(z, \lambda_t)$, and let $A_{\lambda_t} := A \cap B(z, \lambda_t)$. Here, $\lambda_t$ has been chosen so that $\psi_{A_{\lambda_t}}(z, t)$ comes arbitrarily close to $\psi_A(z, t)$. We will now cover $A_{\lambda_t}$ with $N$ cubes $Q_j, j = 1,\cdots, N$ such that each cube $Q_j$ has sidelengths comparable to $\delta$. Due to the controlled curvature, we know that the cover has controlled multiplicity and 
\begin{equation*}
N \sim_{r, R, \lambda_t} 1/\delta^{n - 1}.
\end{equation*}
Since we know that 
\begin{equation*}
\psi_{A_{\lambda_t}}(z, t) \leq \sum_{j = 1}^N \psi_{Q_j} (z, t), 
\end{equation*}
it suffices to prove that the RHS above is $O(\delta)$. Via Lemma \ref{lem:hitting_cube} above, it suffices to prove the following:
\begin{equation*}
\int_{-a}^{a} e^{-x^2}\; dx = O(a).
\end{equation*}
Now, we employ the following known trick:
\begin{align*}
 \left( \int_{-a}^a e^{-x^2} \right)^n & = \int_{-a}^a e^{-r^2}r^{n - 1} \; dr\; d\omega\\
 & = 2 \int_0^{a^2} e^{-\rho}\rho^{n/2 - 1}\; d\rho\\
 & = 2 \gamma(n/2, a^2), 
 \end{align*}
where $\gamma(s, x)$ denotes the usual lower incomplete Gamma function. From well-known asymptotics, it is now clear that for small enough $a$, the RHS is $O(a)$. 
\end{proof}

\subsection{Compression parameters: general case} \label{subsec:comp_par_gen}
Now we go for the full neural net compression, which is essentially an iterated version of Proposition \ref{prop:3.8}. Consider a neural net $A$ consisting of $m$ layers, and let $\hat{A}_j$ denote the neural net $A$ whose first $j$ layers have been compressed using the 
scheme in Algorithm $1$ at each level. By way of notation, let $A^j$ denote the $j$th layer of the original neural net (assumed to be of dimension $h^1_j \times h^2_j$), and $\hat{A}^j$ the $j$th layer of the compressed neural net. Then, we have the following 

\begin{prop} \label{prop:compress_para}
Given $\varepsilon > 0$ and $m$ parameter pairs $(\alpha_j, \beta_j)$, we can find a compression $\hat{A}_m$ with $\displaystyle{\sum_{j = 1}^m O\left(\log (h^1_jh^2_j)/\beta_j\alpha_j^2\right)}$ parameters and associated parameters $\rho_j$ such that 
\begin{equation*}
\left| \psi_{E_f}(Ax, t)  - \psi_{E_f}(\hat{A}_mx, t)\right| \leq \sum_{j = 1}^m \rho_j < \varepsilon
\end{equation*}
with probability at least 
$\displaystyle{\prod_{j = 1}^m\tau_j}$, where 
\begin{equation*}
  \tau_j = \prod^j_{i = 1}\left[ (1 - \beta_i) - S(\hat{x}^{j - 1}, A_j; t)\right].  
\end{equation*}
\end{prop}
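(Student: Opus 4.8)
The plan is an induction on the number of compressed layers, with Proposition~\ref{prop:3.8} supplying both the base case and the single inductive step. Write $\hat A_0 := A$ and, for $1\le j\le m$, let $\hat A_j$ be the net obtained from $\hat A_{j-1}$ by replacing its $j$th weight matrix by the output of Algorithm~$1$ run with parameters $(\alpha_j,\beta_j)$; set $\hat x^{j} := \hat A_j x$. The Bernoulli variables $z_{ik}$ used by Algorithm~$1$ at distinct layers are drawn independently, so conditioning on $\hat x^{j-1}$ the $j$th compression is independent of the first $j-1$. For the base step, Proposition~\ref{prop:3.8} applied to $A^1$ (dimension $h^1_1\times h^2_1$) yields $\hat A^1$ with $O(\log(h^1_1 h^2_1)/\beta_1\alpha_1^2)$ parameters, and — fixing a tolerance $\rho_1$ — the estimate $\Prob[\hat A_\delta]\le S(x,A_1;t)/\delta+\beta_1$ from its proof gives $|\psi_{E_f}(\hat A_1 x,t)-\psi_{E_f}(\hat A_0 x,t)|\le\rho_1$ with probability at least $(1-\beta_1)-S(\hat x^{0},A_1;t)/\rho_1$; since $\psi_{E_f}(\hat A_0 x,t)\le 1$, the proportional bound of Proposition~\ref{prop:3.8} converts directly into this absolute bound, and this is the per-layer factor entering $\tau_1$.

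For the inductive step, assume $\hat A_{j-1}$ has been built with $\sum_{i<j}O(\log(h^1_i h^2_i)/\beta_i\alpha_i^2)$ parameters and that on an event of probability at least $\prod_{i<j}\tau_i$ one has $\sum_{i<j}|\psi_{E_f}(\hat A_i x,t)-\psi_{E_f}(\hat A_{i-1}x,t)|\le\sum_{i<j}\rho_i$. Working on that event, apply Algorithm~$1$ with parameters $(\alpha_j,\beta_j)$ to the $j$th matrix acting on the already-perturbed activation $\hat x^{j-1}$; Proposition~\ref{prop:3.8} then produces $\hat A^j$ with $O(\log(h^1_j h^2_j)/\beta_j\alpha_j^2)$ parameters, and with conditional probability at least $(1-\beta_j)-S(\hat x^{j-1},A_j;t)/\rho_j$ the bound $|\psi_{E_f}(\hat A_j x,t)-\psi_{E_f}(\hat A_{j-1}x,t)|\le\rho_j$ holds. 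Since the layer-$j$ randomness is independent of the previous steps, the success probabilities multiply, and the telescoping identity
\[
 \psi_{E_f}(Ax,t)-\psi_{E_f}(\hat A_m x,t)=\sum_{j=1}^m\bigl(\psi_{E_f}(\hat A_{j-1}x,t)-\psi_{E_f}(\hat A_j x,t)\bigr)
\]
together with the triangle inequality closes the induction, giving a net $\hat A_m$ with $\sum_{j=1}^m O(\log(h^1_j h^2_j)/\beta_j\alpha_j^2)$ parameters, error at most $\sum_{j=1}^m\rho_j$, and success probability at least $\prod_{j=1}^m\tau_j$. Finally, given $\varepsilon$ and the pairs $(\alpha_j,\beta_j)$, one chooses the tolerances $\rho_j$ (for instance $\rho_j=\varepsilon 2^{-j-1}$) so that $\sum_{j=1}^m\rho_j<\varepsilon$.

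The main obstacle is the propagation of the perturbation through depth: the sensitivity governing the $j$th step is $S(\hat x^{j-1},A_j;t)$, evaluated at the \emph{distorted} activation $\hat x^{j-1}$ rather than at the clean one, so one must argue that this quantity does not blow up along the iteration. This is exactly the gap flagged right after Proposition~\ref{prop:3.8}, and controlling it requires a Lipschitz-type estimate $|\psi_{E_f}(z,t)-\psi_{E_f}(w,t)|\lesssim\|z-w\|$, which in turn needs flatness or bounded-curvature hypotheses on $\mathcal N_f$ of the kind developed in Propositions~\ref{prop:hyperplane}--\ref{prop:controlled_curv}; without such control the per-layer sensitivities, hence the factors $\tau_j$, need not stay bounded away from zero as $m$ grows. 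A secondary point requiring care is the conditioning bookkeeping: each step must be phrased as a conditional estimate given the activations produced so far, after which independence of the layer-$j$ Bernoulli variables licenses taking the product of the per-step probabilities, and the conversion from the proportional guarantee of Proposition~\ref{prop:3.8} to the additive bound $\rho_j$ is legitimate precisely because every $\psi_{E_f}(\hat A_{j-1}x,t)$ is a probability and hence at most $1$.
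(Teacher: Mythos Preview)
Your proposal is correct and follows essentially the same route as the paper: telescope $\psi_{E_f}(Ax,t)-\psi_{E_f}(\hat A_m x,t)$ into layerwise increments, compress one layer at a time via Algorithm~1/Proposition~\ref{prop:3.8}, and multiply the per-step success probabilities. The one place the paper goes further is that it explicitly propagates the \emph{norm} distortion through depth, deriving by induction the bound $\|A^j(\phi(\hat x^{j-1}))-\hat A^j(\phi(\hat x^{j-1}))\|\le \lambda^{j-1}\alpha_j\prod_{i<j}(1+\alpha_i)\prod_{i\le j}\|A_i\|_F\|x\|$; this is not strictly needed for the general-case statement (it is absorbed into your invocation of Proposition~\ref{prop:3.8}), but it is the ingredient that gets reused in Subsection~\ref{subsec:comp_par_tame} when the sensitivity is replaced by the explicit hyperplane/curvature estimates, so you may want to record it as well.
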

\begin{proof}
We see that 
\begin{align*}
\left| \psi_{E_f}(Ax, t) - \psi_{E_f}(\hat{A}_m x, t)\right| & \leq \left| \psi_{E_f}(Ax, t) - \psi_{E_f}(\hat{A}_{1} x, t)\right| + \left| \psi_{E_f}(\hat{A}_1 x, t) - \psi_{E_f}(\hat{A}_2 x, t)\right| \\
& + \left| \psi_{E_f}(\hat{A}_2x, t) - \psi_{E_f}(\hat{A}_3 x, t)\right| +  \cdots + \left| \psi_{E_f}(\hat{A}_{m - 1}x, t) - \psi_{E_f}(\hat{A}_m x, t)\right|.
\end{align*}
We will be compressing one individual layer at at time. 
At the first layer, we start with the entry $x$ taken from the sample set. 

Algorithm $1$ gives us a compression $\hat{A}^1$ that satisfies, with given $\alpha_1, \beta_1$ that 
\begin{equation*}
\| A^1x - \hat{A}^1x \| \leq \alpha_1 \| A^1\|_F\|x\|
\end{equation*}
with probability at least $1 - \beta_1$. Here the reduced number of parameters of $\hat{A}^1$ is $O\left(\log (h^1_1h^2_1)/\beta_j\alpha_j^2\right)$. As a result, 
\begin{equation*}
\left| \psi_{E_f}(Ax, t) - \psi_{E_f}(\hat{A}_1x, t) \right| \leq \rho_1,
\end{equation*}
where in the general situation (that is, without any additional assumption on the decision boundary $\mathcal{N}_f$), $\rho_1 =  \psi_{E_f}(\phi(A_1 x), t) \delta_1$ with probability at least $1 - S(x, A_1; t)/\delta_1 - \beta_1$ (this is via Proposition \ref{prop:3.8}, via application of Markov's inequality).

Now that the first layer has been compressed, the entry data at the second layer is the vector $\phi \hat{A}^1x$. Once again, we estimate that with given parameters $\alpha_2, \beta_2$, 
Algorithm $1$ generates a contraction $\hat{A}^2$ at the second layer with satisfies (with probability at least $1 - \beta_2$)
\begin{align*}
\| A^2(\phi\hat{A}^1x) - \hat{A}^2(\phi \hat{A}^1x)\| & \leq  \alpha_2\| A^2\|_F\|\phi \hat{A}^1 x \| \\
& \leq \lambda \alpha_2 \| A^2\|_F \| \hat{A}^1 x \| \quad \text{\hfill (Lipschitz-ness of the nonlinearity)}
\end{align*}
So, with probability at least $(1 - \beta_2)(1 - \beta_1)$, we have that 
\begin{align*}
\| A^2(\phi\hat{A}^1x) - \hat{A}^2(\phi \hat{A}^1x)\| & \leq \lambda \alpha_2 \| A^2\|_F \left[ \|A^1 x \| + \alpha_1 \| A^1\|_F \| x\| \right] \\
& \leq \lambda \alpha_2 \| A^2\|_F \left[ \| A^1\|_F \| x\| + \alpha_1 \| A^1\|_F \| x\| \right] \\
& = \lambda\alpha_2(1 + \alpha_1) \| A^2\|_F \| A^1\|_F \|x\|.
\end{align*}
We have then
\begin{equation*}
\left| \psi_{E_f}(\hat{A}_1x, t) - \psi_{E_f}(\hat{A}_2x, t) \right| \leq \rho_2,
\end{equation*}
where in the general situation, $\rho_2 =  \psi_{E_f}(\phi(A^2\hat{x}^1), t) \delta_2$ with probability at least $(1 - \beta_1)(1 - \beta_2) - S(\hat{x}^1, A_2; t)/\delta_2$. Here $\hat{x}^j$ denotes the output at the $j$th layer of the compressed net.

It can be checked via induction that the above process iterated $j$ times gives that 
\begin{equation*}
\| A^j (\phi(\hat{x}^{j - 1})) - \hat{A}^j (\phi(\hat{x}^{j - 1}))\| \leq \lambda^{j - 1}\alpha_j\prod_{ i = 1}^{j - 1} (1 +  \alpha_i) \prod_{i = 1}^j \| A_i\|_F\|x\|
\end{equation*}
with probability at least $\displaystyle{\prod_{i = 1}^j (1 - \beta_i)}$. That implies that 
\begin{equation*}
\left| \psi_{E_f}(\hat{A}_{ j - 1} x, t) - \psi_{E_f}(\hat{A}_jx, t) \right| \leq \rho_j,
\end{equation*}
where in the general situation, $\rho_j =  \psi_{E_f}(\phi(A_j \hat{x}^{j - 1}), t) \delta_j$ with probability at least $\displaystyle{\tau_j = \prod_{i = 1}^j (1 - \beta_i)  - S(\hat{x}^{j - 1}, A_j; t)/\delta_j}$.

Finally, this implies that 
\begin{align}
\left| \psi_{E_f}(Ax, t) - \psi_{E_f}(\hat{A}_m x, t)\right| & \leq \sum_{j = 1}^m \rho_j,
\end{align}
with probability at least 

\begin{equation*}
 \prod_{j = 1}^m \tau_j,   
\end{equation*}
and the reduced number of parameters in the compressed net is 
\begin{equation*}
    \sum_{j = 1}^m O\left(\log (h^1_jh^2_j)/\beta_j\alpha_j^2\right).
\end{equation*}
\end{proof}

\subsection{Compression parameters: tame decision boundary}\label{subsec:comp_par_tame}

We are left to indicate the proof of the second part of Proposition \ref{prop:gen-bound-flat-db} from the main text. This follows in a straightforward way following the proof of Proposition \ref{prop:compress_para} using the bounds in Propositions \ref{prop:hyperplane}, \ref{prop:sandwiched_hyperplane} and \ref{prop:controlled_curv} at every step, instead of the bounds in Proposition \ref{prop:3.8}, as we have done in the above proof.

\subsection{Second (alternative) definition of capacity sensitivity}

As an alternative working definition of noise sensitivity, we define the following:

\begin{dfn}\label{def:cap_sen_sec}
    \begin{equation}\label{def:alt_noise_sense}
        S(x, A; t) :=  \Exp_{\gamma \in \mathcal{B}, \eta \sim \mathcal{N}} \left|\frac{ \psi_{E_f, \gamma}(\phi(A(x + \| x \| \eta)), t) - \psi_{E_f}(\phi(A x), t) }{ \psi_{E_f}(\phi(A x), t) }\right|,
    \end{equation}
    where the expectation is over $\eta \in \mathcal{N}$ and all Brownian paths $\gamma$ starting at the point 
    $\phi(A(x + \| x \| \eta))$ and ending inside $E_f(y)$ within time $t$ (the latter sits inside the path space starting at $\phi(A(x + \| x \| \eta))$ and endowed with the Wiener measure). The random variable $\psi_{E_f, \gamma}(\phi(A(x + \| x \| \eta)), t)$ is defined as $1$ if the path $\gamma_l$ strikes $E_f$ within time $t$ and $0$ if it does not.
\end{dfn}

From the point of view of ML computation, Definition \ref{def:cap_sen_sec} has a slight advantage over Definition \ref{def:cap-sensitivity}. In other words, it is computationally more efficient in view of the following sampling scheme:

\begin{prop}
\label{prop:bm-sampling-conc}
If  $\eta_1,...,\eta_m$ denote $m$ sampled values of $\eta$ and $\gamma_{j1}, \gamma_{j2},..., \gamma_{jk}$ denote $k$ sampled Brownian paths starting at $x + \|x\|\eta_j$, then 
\begin{equation*}
\overline{X} = \frac{1}{mk}\sum_{j = 1}^m \sum_{l = 1}^k X_{jl},
\end{equation*}
where 
\begin{equation*}
X_{jl} = \left|\frac{ \psi_{E_f, \gamma_l}(\phi(A(x + \| x \| \eta_j)), t) - \psi_{E_f}(\phi(A x), t) }{ \psi_{E_f}(\phi(A x), t) }\right|
\end{equation*}
approximates $S(x, A; t)$ well with high probability.
\end{prop}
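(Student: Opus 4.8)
The plan is to reduce the statement to a single application of Hoeffding's inequality \eqref{ineq:Hoeff} by exploiting the block-independence structure of the samples. First I would record that $X_{jl}$ is bounded: the numerator $\psi_{E_f, \gamma_{jl}}(\phi(A(x+\|x\|\eta_j)),t) - \psi_{E_f}(\phi(Ax),t)$ lies in $[-p_0,\,1-p_0]$, where $p_0 := \psi_{E_f}(\phi(Ax),t)$ is a fixed number with $0 < p_0 \le 1$ (positive because $E_f(y)$ is a nonempty open set and a Brownian particle reaches it within any positive time with positive probability); hence $0 \le X_{jl} \le M := 1/p_0$, and in particular the ratios defining $X_{jl}$ and $S(x,A;t)$ are well defined.

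Next I would group the samples by the noise draw: set $Y_j := \tfrac{1}{k}\sum_{l=1}^k X_{jl}$ for $j = 1,\dots,m$, so that $\overline{X} = \tfrac{1}{m}\sum_{j=1}^m Y_j$. The point is that the triples $(\eta_j,\gamma_{j1},\dots,\gamma_{jk})$, $j = 1,\dots,m$, are drawn independently, so $Y_1,\dots,Y_m$ are i.i.d.; each takes values in $[0,M]$; and by Definition \ref{def:cap_sen_sec} the expectation defining $S$ is exactly over $\eta \sim \mathcal{N}$ and a Brownian path starting at $\phi(A(x+\|x\|\eta))$, so $\Exp[Y_j] = \Exp[X_{j1}] = S(x,A;t)$. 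Applying Hoeffding's inequality to $Y_1/M,\dots,Y_m/M$ (values in $[0,1]$, common mean $S(x,A;t)/M$), together with the analogous lower-tail bound, then yields for every $\epsilon > 0$
\begin{equation*}
    \Prob\left( \left| \overline{X} - S(x,A;t) \right| \ge \epsilon \right) \;\le\; 2\, e^{-2m\epsilon^2/M^2},
\end{equation*}
i.e. $\overline{X}$ is within $\epsilon$ of $S(x,A;t)$ with probability at least $1 - 2e^{-2m\epsilon^2/M^2}$. If one additionally wants the rate to reflect the number $k$ of sampled paths, I would instead condition on $(\eta_1,\dots,\eta_m)$: conditionally the $mk$ variables $X_{jl}$ are mutually independent, so a conditional Hoeffding bound (uniform in the conditioning) controls $\overline{X}$ around $\tfrac1m\sum_j \mu(\eta_j)$ with $\mu(\eta_j):=\Exp[X_{j1}\mid\eta_j]$ at a rate exponential in $mk$, and a second Hoeffding bound over the i.i.d.\ bounded variables $\mu(\eta_1),\dots,\mu(\eta_m)$ controls $\tfrac1m\sum_j\mu(\eta_j) - S(x,A;t)$ at a rate exponential in $m$; a union bound combines the two.

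The only genuine subtlety here is that the $mk$ samples $X_{jl}$ are \emph{not} mutually independent, since all $X_{jl}$ with the same $j$ share the noise draw $\eta_j$; this is precisely what forces either the grouping into the i.i.d.\ averages $Y_j$ or the conditioning argument above, rather than a direct Hoeffding bound on all $mk$ variables at once. Everything else — boundedness, matching of expectations with the definition of $S$, and the final union bound — is routine.
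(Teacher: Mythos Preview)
Your argument is correct and uses the same core tool as the paper (Hoeffding's inequality applied to the rescaled variables $X_{jl}\,p_0$, with $p_0=\psi_{E_f}(\phi(Ax),t)$), but you are more careful on one point. The paper applies Hoeffding directly to all $mk$ variables $X_{jl}p_0$, treating them as independent, and thereby obtains the bound $\Prob(|\overline{X}-S|\ge \tau/p_0)\le e^{-2\tau^2 mk}$. As you observe, the $X_{jl}$ with a common index $j$ all share the same noise draw $\eta_j$ and are therefore only \emph{conditionally} independent given $\eta_j$, not unconditionally independent; the paper's direct application of Hoeffding across all $mk$ samples is thus not fully justified as written. Your grouping into the i.i.d.\ block averages $Y_j$ fixes this cleanly, at the cost of an exponent in $m$ rather than $mk$; your optional two-stage conditioning argument is the right way to recover a rate that also reflects $k$, and it makes transparent why the dependence on $m$ is the bottleneck. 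In short: same idea, but your version closes a small gap in the paper's proof.
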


\begin{proof}
Begin by sampling $m$ values $\eta_1,...,\eta_m$ of $\eta$ and $k$ Brownian paths $\gamma_{j1}, \gamma_{j2},..., \gamma_{jk}$ starting from each such $x + \|x\|\eta_j$. Attached to each such selection is an independent random variable $X_{jl}\psi_{E_f}(\phi(Ax), t)$ which takes values in $[0, 1]$. For each $j, l$, we have that $\Exp \left( X_{jl}\psi_{E_f}(\phi(Ax), t)\right) = S(x, A; t)\psi_{E_f}(\phi(Ax), t)$. Let $\overline{X}$ denote the mean of all the random variables $X_{jl},~~ j = 1,..,m, ~~ l = 1,..., k$. Now, we can bring in Hoeffding's version of the Chernoff concentration bounds, which gives us that
\begin{equation}
    \Prob \left( \left| \overline{X} - S(x, A; t) \right| \geq \frac{\tau}{\psi_{E_f}(\phi(Ax), t)} \right) \leq e^{-2\tau^2mk}. 
\end{equation}
\end{proof}

\newpage

\section{Appendix C: datasets, sampling details, training details and further experiments.}
\label{sec:datasets-training}

\subsection{Technical Setup}
\label{subsec:tech_setup}

The experimental section of the work was conducted mainly on a CUDA 10.2 GPU-rack consisting of four NVIDA TITAN V units: this includes the model training as well as Brownian motion sampling and further statistics. The neural network framework of choice was PyTorch 1.5. We provide the training as well as the sampling code for our experiments.

\subsection{Datasets}
\label{subsec:datasets}

We worked with the well-known MNIST and CIFAR-10 datasets. The MNIST is a $784$-dimensional dataset that consists of $60000$ images of handwritten digits whose dimensions are $(28, 28)$; $50000$ images were used for training and $10000$ for validation. CIFAR-10 is collection of $60000$ 32-by-32 color images (i.e. a $3072$-dimensional dataset) corresponding to 10 different classes: airplanes, cars, birds, cats, deer, dogs, frogs, horses, ships and trucks; $50000$ images were used for training and $10000$ for validation.

As pointed out in the main text, adversarially robust decision boundaries exhibit fundamental differences between the MNIST and the CIFAR-10 dataset. MNIST yields particularly simple robust boundaries stemming from it's almost binary nature as elaborated in \cite{schmidt2017towards} and confirmed in \cite{Ford2019}. CIFAR-10 on the other hand is notoriously vulnerable to attacks, which is reflected in the quantities we measure. For our experiments this means that adversarial/noisy training flattens the surrounding boundary, i.e. saturates the isoperimetric bound, but nevertheless still exhibits spiky structure as will be reflected in the measurements of the isocapacitory bounds. For MNIST on the other hand the approximately binary nature of the examples gives the decision boundary much less 'freedom', resulting in a less distinct quantitative representation.

For some exploratory toy-examples (cf. Fig. \ref{fig:heat}, Fig. \ref{fig:intro}, Fig. \ref{fig:toy-plots} in the main text) we generated a planar dataset that alternates along a circle of radius $r = 5$: for a given ray through the origin we generate several points on the ray at approximately distance $r$ from the origin and assign them to class $0$; then we rotate the ray by a small angle counter-clockwise, sample several points on the rotated ray again at approximately distance $r$ from the origin and this time assign them to class $1$. Repeating this process we produce the mentioned 2-class dataset that alternates along the circle of radius $r$ and consists of 1250 points.

\subsection{Sampling details}
\label{subsec:sampling-details}

An evaluation of the isocapacitory saturation $\psi$ is obtained by sampling $10000$ Brownian paths with $400$ steps. In light of \textit{the curse of dimensionality}, this configuration seems adequate for our purposes: theoretically, by projecting Brownian motion along the normal directions of the decision boundary one sees that estimating hitting probabilities is \textit{essentially} a lower dimensional problem, e.g. 1-dimensional if the decision boundary is a hyperplane; practically, our experiments were numerically stable w.r.t. resampling and sample-batch-size.

Further, for each data point $x$ the relative error volume $\mu(x,  r)$ is computed by sampling $10000$ points uniformly in $B(x, r)$. To compare with isoperimetric bounds (Subsection \ref{subsec:isoperimetric}) for each data point $x$ we sample $1000$ points, normally distributed $N(x, r / \sqrt{n})$ and concentrated around $x$ in the ball $B(x, r)$, and apply a PGD with $400$ steps to obtain distance to the decision boundary $\mathcal{N}$ (a setup similar to \cite{Ford2019}). As above, repetitive runs on average reveal an acceptable numeric stability to the order of $10^{-4}$.

\subsection{Defense training: FGSM vs PGD}
\label{subsec:fgsm-vs-pgd}

In the present work we are interested in how adversarial/noise defense training are reflected geometrically. To this end we study the application of two defense strategies - FGSM and PGD.

Previous work (\cite{Ford2019}) indicates that FGSM-based training already leads to boundary flattening. However, in general it cannot be guaranteed that the FGSM-based adversarial training will provide appropriate levels of robustness (against strong adversaries, e.g. iterative attacks) - recently, \cite{Wong2020Fast} has shown that only with some proper designs (e.g. random start) the FGSM-based training will be robust. This indicates that if not taken carefully, FGSM-based and stronger defense trainings (e.g. PGD-based adversarial training in \cite{Madry2018}) can be very different in their resulting geometry of the decision boundary. Therefore, we opt for evaulating FGSM-based as well as the PGD-based defense in an attempt to reveal the relationship between the decision boundaries of a truly robust model and the isocapacitory saturation values. Details are given in Fig. \ref{fig:wrn-lenet-stats} and the accompanying analysis.
\subsection{Training details}
\label{subsec:train_details}

\paragraph{Training on the CIFAR-10 dataset.} All training procedures used standard techniques for data augmentation such as flips, horizontal shifts and crops and were normed with respect to data mean and standard deviation. The training of the Wide-ResNets followed the framework provided by \cite{cubuk2018autoaugment} with weight decay 5e-4, batch size 128 and a decrease of the initial learning rate of $0.1$ by a factor $0.2$ at epochs 60, 120 and 160. The ResNets were trained with weight decay 1e-4 respectively and step wise decrease of the learning rate 0.1 by a factor $0.1$ at epochs 100 and 150. 

\paragraph{Training on the MNIST dataset.} We consider two models trained with various data augmentation techniques. We trained a LeNet-5 architecture \cite{LeCun1998} over 50 epochs with a learning rate 1e-3 and weight decay 5e-4, batch size of 64, while optimizing cross entropy loss using root mean square propagation. The same procedure was implemented to train a basic convolutional neural network consisting of four convolutional and two subsequent linear layers. While LeNet-5 also uses convolutional layers, it additionally uses max-pooling after each convolutional layer.

\paragraph{Training on the planar toy dataset.} We experimented with several $5$-layer MLP models (each layer containing 20, 40, 70 or 100 hidden units) on the mentioned planar dataset concentrated along the circle of radius $5$ centered at the origin. Training followed a straightforward ADAM optimization procedure with a learning rate of 1.0e-5 and batch size of 128.

\subsection{Data manipulations during training} 
\label{subsec:data_manipulation}

To evaluate how various training methods affect the geometric properties of the decision boundary, for all models we conduct three major types of training: training on clean data; on data with a layer of Gaussian perturbations with variance $\sigma^2 = 0.4$; finally, training on data with additional adversarial defense methods, where for each training example we add an adversarially chosen example to the dataset using the fast gradient sign method (FGSM). For LeNet-5 we also considered the effect of adversarial training, where the additional example is the result Brownian of random walk terminated upon collision with the decision boundary. See Fig. \ref{fig:perturbations} for a visual example of perturbations/attacks with the described methods. The resulting accuracies evaluated on the clean datasets for all trained models are shown in tables \ref{tab:accuracy_wrn}, \ref{tab:accuracy_rn}, \ref{tab:accuracy_ln_cnn}. As an additional benchmark of the trained models, we evaluated the the robustness of LeNet-5 architectures. Figure \ref{fig:robustness} exhibits the resulting for the trained model's accuracies on clean data, PGD attacks with $\epsilon = 0.5$ and $\epsilon = 1.0$, Gaussian perturbations and fog with severity 4 according to the MNIST-C dataset \cite{mu2019mnistc}. 

\subsection{Isocapacitory and isoperimetric results}
\label{subsec:iso_results}

Here we summarize the observations indicated by the obtained geometric data. Besides the results presented in the main text for models Wide-ResNet 28-10 and LeNet-5 (Fig. \ref{fig:wrn-lenet-stats}), we also considered geometric properties for said Residual Networks (CIFAR-10) (see Fig. \ref{fig:resnet_stats}) with 32, 44 and 56 layers and a basic Convolutional Neural Network (MNIST) (see Fig. \ref{fig:cnn_stats}). The results admit to the observations made in the main text.

\begin{figure}
  \centering
  \includegraphics[width=1.\textwidth]{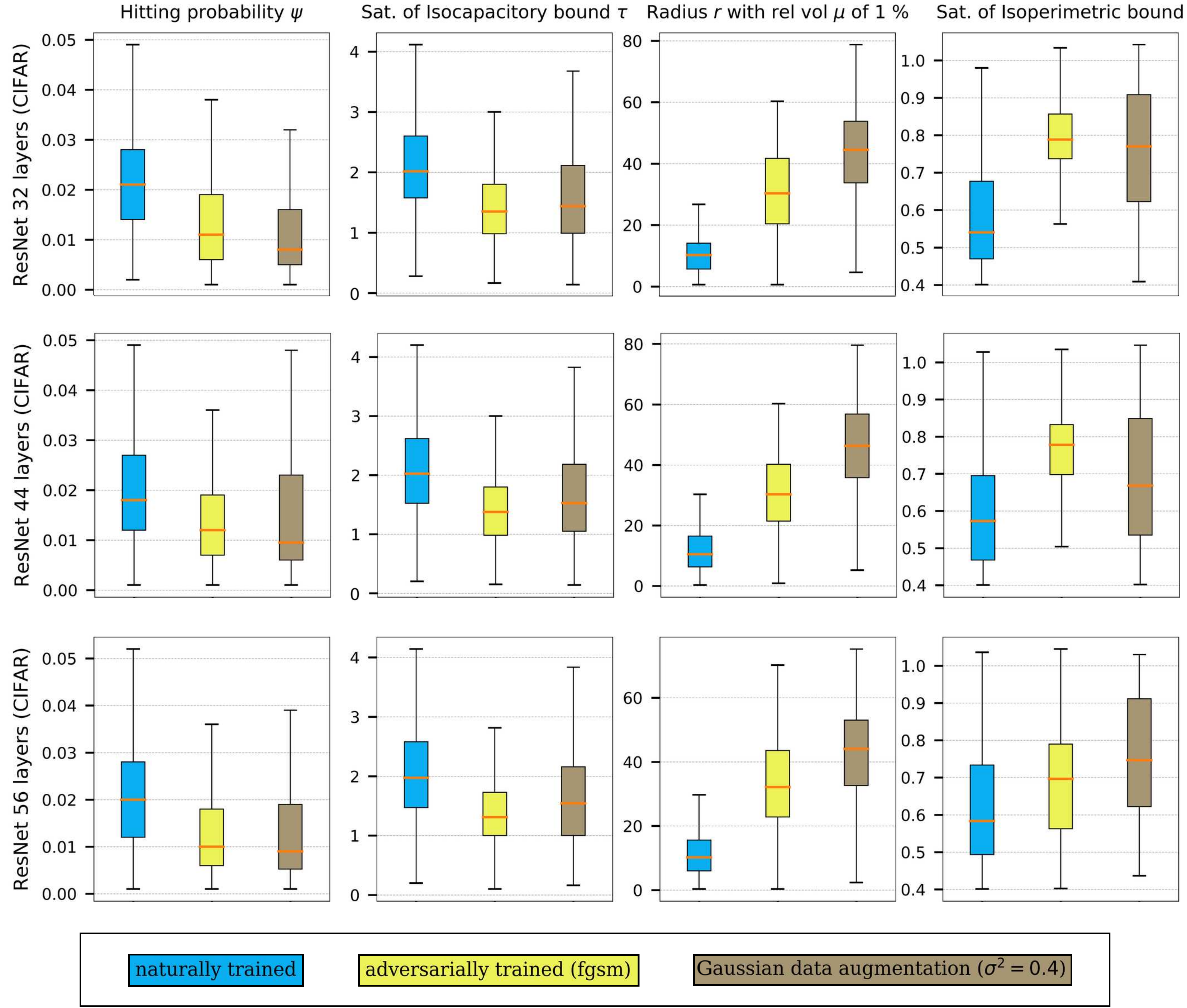}
  \caption{The statistics obtained from the Residual Networks with 32, 44 and 56 layers on the CIFAR10 dataset. For this experiment we considered the Brownian particles with average displacement equal to the radius of sphere with relative volume $\mu = 0.01$, where $\mu$ is defined according to equation (\ref{eq:relative-vol}) in the main text. The considered quantities are \textit{(Left)} the probability of a Brownian particle to collide with the decision boundary, \textit{(Center Left)} the isocapacitory bound, i.e. the ratio of said probability versus relative volume $\mu$, \textit{(Center Right)} the radius of the obtained sphere equal to the RMSD of the particle and \textit{(Right)} the saturation of the isoperimetric bound. We observe consistent behavior of the shown quantities for all three models. The trend of isoperimetric saturation (although, not so concentrated as in the case of WRN and LeNet-5, Fig. \ref{fig:wrn-lenet-stats}) as well as the increase of distances $r$ are present. Again the isocapacitory saturation does not appear to follow a distinguished concentration around the case of a flat decision boundary despite the overall increase in flatness: here both noisy and adversarial training seem to deliver a decrease in $\tau$. In fact, the heat imprint of the ordinarily trained model exhibits a "flatter" behaviour in terms of $\tau$.}
  \label{fig:resnet_stats}
\end{figure}

\begin{figure}
  \centering
  \includegraphics[width=1.\textwidth]{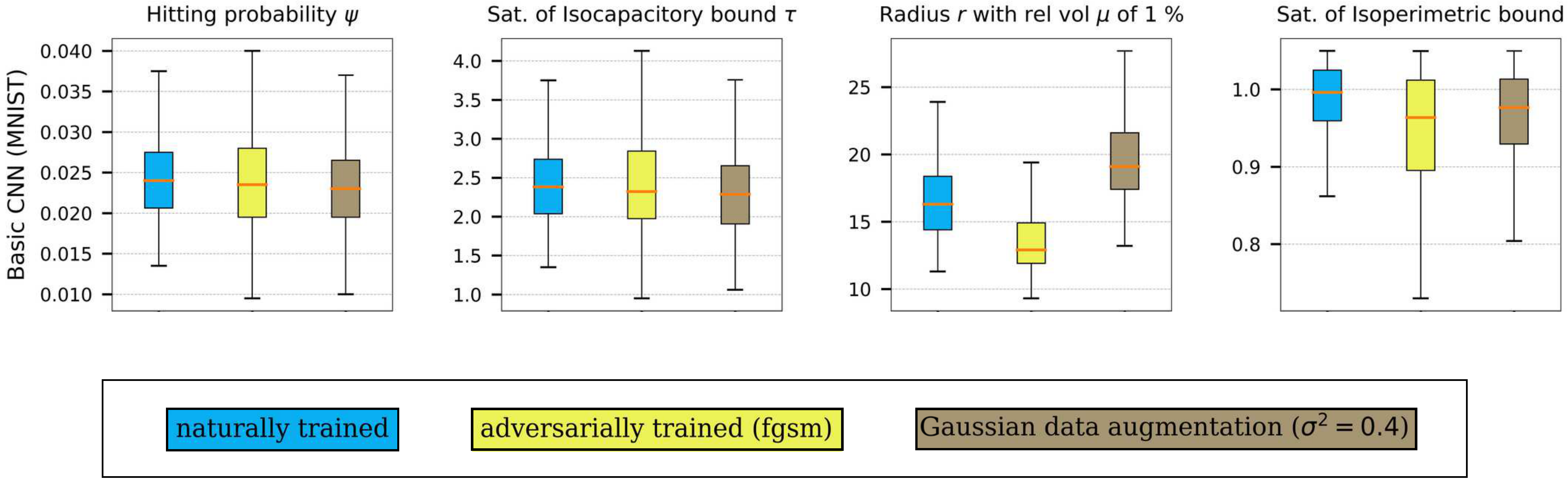}
  \label{fig:cnn_stats}
  \caption{Statistics for a convolutional neural network with four convolutional and two linear layers applied to the MNIST dataset. This particular convolutional model shows that not every architecture/training/dataset instance displays the distinguished trend in increasing the isoperimetric saturation - however, even in this scenario the isoperimetric saturation is quite sharp. Similar to other experiments above, the isocapacitory saturation $\tau$ on the other hand does not concentrate to such an extent.}
\end{figure}

\begin{figure}
  \centering
  \includegraphics[width=0.8\textwidth]{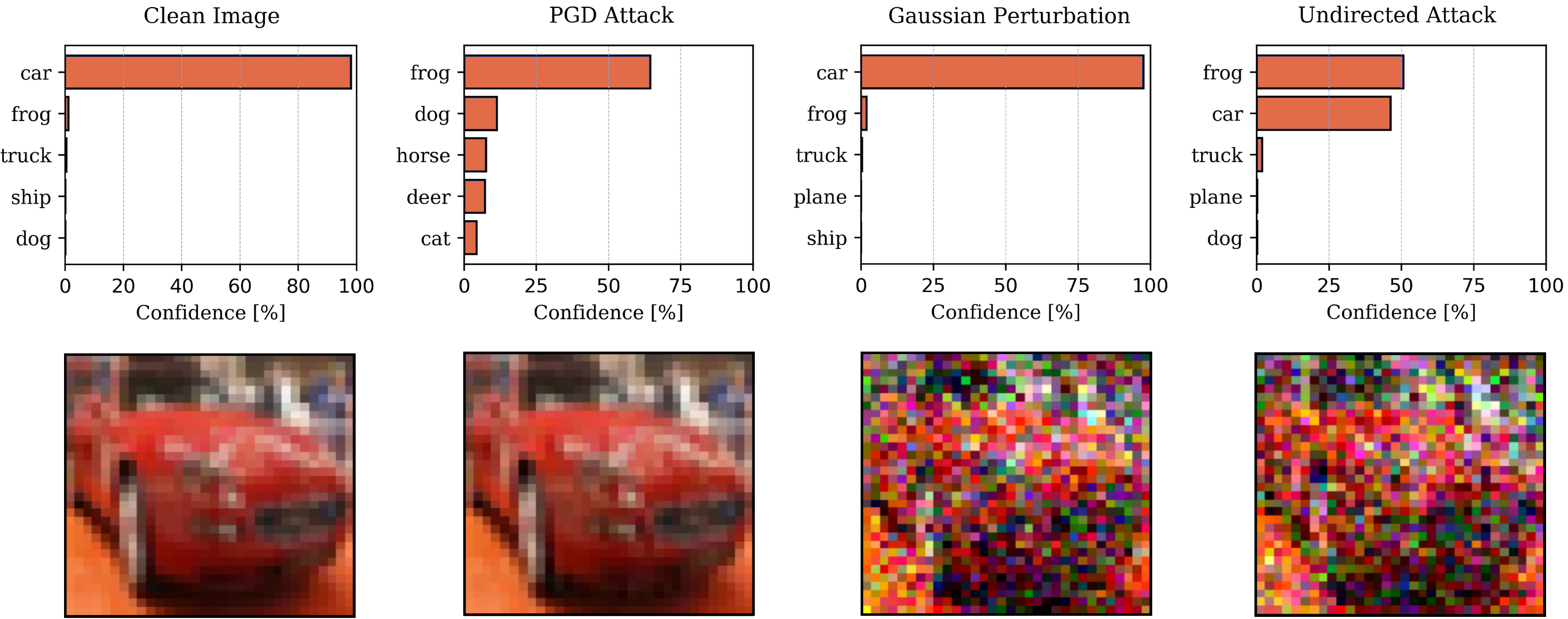}
  \label{fig:perturbations}
  \caption{Typical examples of the CIFAR-10 dataset used to train the models. From left to right, the clean image, a PGD adversarial example, a Gaussian perturbation ($\sigma^2 = 0.4$) and the terminal point of a Brownian random walk (undirected attack) immediately after colliding with the decision boundary are shown. The comparison between the PGD adversarial example and the right picture emphasize the degree to which spikes in the decision boundary deviate from the average distance between boundary and clean example.} 
\end{figure}

\begin{figure}
  \centering
  \includegraphics[width=0.8\textwidth]{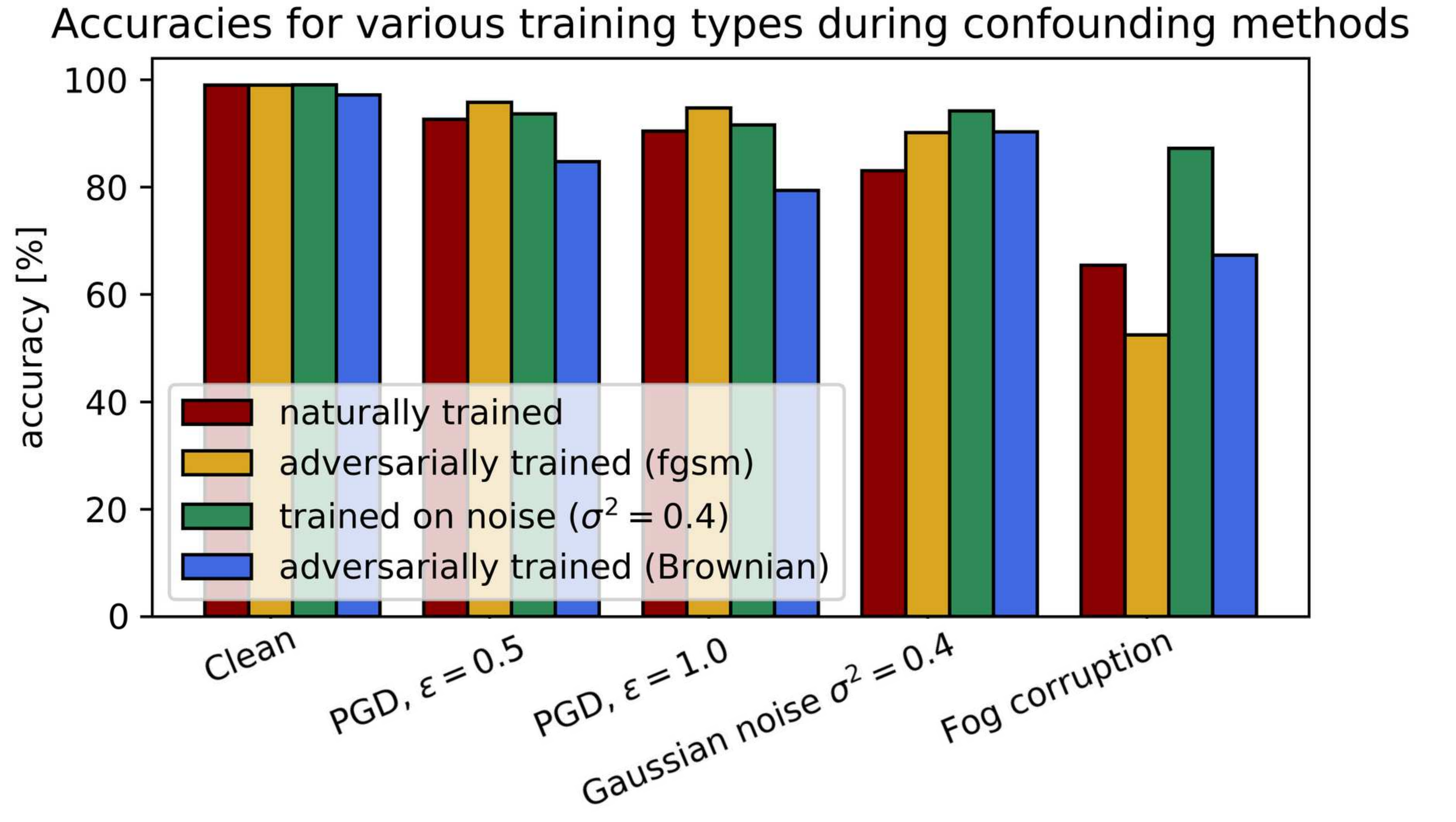}
  \label{fig:robustness}
  \caption{Evaluation of the accuracies of the LeNet-5 (MNIST) models during a range of attacks. While for clean data all models exhibit almost similar accuracy, the adversarially trained models exhibit more robustness during various attacks. For all measures we see the worst performance of the models trained on randomly chosen adversarial examples.}
\end{figure}

\begin{table}
\centering
\caption{Summary of validation accuracies for Wide-ResNets 28-10 for various training methods on the CIFAR10 data set.}
\label{tab:accuracy_wrn}
{\renewcommand{\arraystretch}{1.25}
\begin{tabular}{ccc}
    Architecture & Training Type & Accuracy \\ \hline 
    Wide-ResNet 28-10 & naturally trained & 94.64\% \\ 
    Wide-ResNet 28-10 & trained on noise ($\sigma^2 = 0.1$) & 91.22$\,$\% \\
    Wide-ResNet 28-10 & trained on noise ($\sigma^2 = 0.4$) & 86.07$\,$\% \\ 
    Wide-ResNet 28-10 & adversarially trained (fgsm) & 87.10$\,$\% \\ 
    Wide-ResNet 28-10 & adversarially trained (pgd) & 85.05$\,$\% \\ 
\end{tabular}}
\end{table}

\begin{table}
\centering
\caption{Summary of validation accuracies for the ResNets with 32, 44 and 56 layers for various training methods on the CIFAR10 data set.}
\label{tab:accuracy_rn}
{\renewcommand{\arraystretch}{1.25}
\begin{tabular}{ccc}
    Architecture & Training Type & Accuracy \\ \hline 
    Residual Network 32 layers & naturally trained & 91.81\% \\ 
    Residual Network 32 layers & adversarially trained (fgsm) & 86.13\% \\ 
    Residual Network 32 layers & trained on noise ($\sigma^2 = 0.4$) & 84.36\% \\ 
    Residual Network 44 layers & naturally trained & 92.36\% \\
    Residual Network 44 layers & adversarially trained (fgsm) & 88.20\% \\
    Residual Network 44 layers & trained on noise ($\sigma^2 = 0.4$) & 84.09\% \\ 
    Residual Network 56 layers & naturally trained & 92.77\% \\ 
    Residual Network 56 layers & adversarially trained (fgsm) & 87.53\% \\ 
    Residual Network 56 layers & trained on noise ($\sigma^2 = 0.4$) & 84.09\% \\ 
\end{tabular}}
\end{table}

\begin{table}
\centering
\caption{Summary of validation accuracies for LeNet-5 and a convolutional neural network with four convolutional and two linear layers for various training methods on the clean MNIST data set.}
\label{tab:accuracy_ln_cnn}
{\renewcommand{\arraystretch}{1.25}
\begin{tabular}{ccc}
    Architecture & Training Type & Accuracy \\ \hline 
    LeNet-5 & naturally trained & 99.00\% \\ 
    LeNet-5 & adversarially trained (fgsm) & 98.99\% \\ 
    LeNet-5 & adversarially trained (pgd) & 98.55\% \\ 
    LeNet-5 & adversarially trained (Brownian) & 97.17\% \\ 
    LeNet-5 & trained on noise ($\sigma^2 = 0.4$) & 99.02\% \\ 
    CNN & naturally trained & 98.99\% \\ 
    CNN & adversarially trained (fgsm) & 98.65\% \\ 
    CNN & trained on noise ($\sigma^2 = 0.4$) & 98.93\% \\ 
\end{tabular}}
\end{table}

\end{document}